\definecolor{color1bg}{HTML}{f73d28}
\definecolor{color2bg}{HTML}{FA8072}
\definecolor{bblue}{HTML}{00BFFF}
\definecolor{bblue2}{HTML}{00ffff}
\def\sB{\mathcal{B}}
\def\sX{\mathcal{X}}
\def\sY{\mathcal{Y}}
\def\sH{\mathcal{H}}
\def\RR{{\mathbb{R}}}
\definecolor{bblue}{rgb}{.2,0.2,.8}
\begin{document}

\title{Generalized Resubstitution for Regression Error Estimation}

\author{\name Diego Marcondes \email dmarcondes@ime.usp.br \\
	\addr Department of Computer Science, Institute of Mathematics and Statistics\\
	University of S\~ao Paulo\\
	S\~ao Paulo, Brazil
	\AND
	\name Ulisses Braga-Neto \email ulisses@tamu.edu \\
	\addr Department of Electrical and Computer Engineering\\
	Texas A\&M University\\
	College Station, TX 77843, USA}

\editor{}

\maketitle

\begin{abstract}
	We propose generalized resubstitution error estimators for regression, a broad family of  estimators, each corresponding to a choice of  empirical probability measures and loss function.  The usual sum of squares criterion is a special case corresponding to the standard empirical probability measure and the quadratic loss. Other choices of empirical probability measure lead to more general estimators with superior bias and variance properties. We prove that these error estimators are consistent under broad assumptions. In addition, procedures for choosing the empirical measure based on the method of moments and maximum pseudo-likelihood are proposed and investigated.  Detailed experimental results using polynomial regression demonstrate empirically the superior finite-sample bias and variance properties of the proposed estimators. The R code for the experiments is provided.
\end{abstract}

\begin{keywords}
	Error Estimation, Regression, Resubstitution
\end{keywords}

\section{Introduction}

In supervised learning, the objective is to build a mapping $\psi: \sX \rightarrow \sY$ to predict the value of a target $Y \in \sY$ from an input $X \in \sX$, where $\sX$ and $\sY$ are suitable spaces, using a training data set $S_n = \{(X_1,Y_1),\ldots,(X_n,Y_n)\}$. Once $\psi$ is built, it is necessary to evaluate its performance and, in recent years, it has become the norm to use the empirical error on data not appearing in $S_n$ to benchmark predictors~\citep{russakovsky2015imagenet,jiang2019fantastic}. A more rigorous alternative to performance evaluation is provided by a {\em statistical approach} to learning, where $X$ and $Y$ are assumed to be random variables taking values in $\sX$ and $\sY$, respectively, and the training data $S_n$ is generally assumed to be an independent and identically distributed sample from a joint probability measure $\nu(X,Y)$. This approach allows one to define rigorously the {\em generalization error} of the predictor on future data, given an appropriate loss criterion, as the expected loss with respect to $\nu$.   

A crucial problem in practice is how to estimate accurately the generalization error. A predictor is useful only if its generalization error can be stated with confidence. In the statistical approach to learning, the problem of error estimation has been extensively studied in the classification case, where $\sY$ consists of a finite set of labels, and the loss criterion is simply whether the predictor recovers the label or not; see ~\cite{Tous:74,Hand:86,McLa:87,SchiHand:00, BragDougEEPR:15} for comprehensive surveys. However, the problem of estimating the generalization error in the case of regression, where $\sY$ is an Euclidean space, is much less studied. In this paper, we provide a comprehensive study of a new class of error estimators for regression problems,  namely, the family of {\em generalized regression resubstitution estimators}. 

The popular test-set error estimator is known to have excellent statistical properties; it is unbiased and consistent regardless of the sample size or data generating distribution \citep{BragDougEEPR:15}. However, this is only true if the test data is truly independent of training and used only once~\citep{yousefi2011multiple}. In practice, this is almost never the case, with the test data being reused, in some cases heavily, to measure performance improvement, creating a situation known as ``training to the test data'' \citep{recht2019imagenet}. In addition, if training and testing sample sizes are small, the test-set error estimator can display large variance and become unreliable, which means that test-set error estimation requires cheap access to plentiful labeled data.

The simplest alternative that requires no separate test data is the empirical error on the training data; this is known as the {\em resubstitution} error estimator~\citep{Smit:47}. The resubstitution estimator is however usually optimistically biased, the more so the more the prediction algorithm overfits the training data. Optimistic bias implies that the difference between resubstitution estimate and the true error, which has been called the ``generalization gap''~\citep{keskar2016large}, is negative with a high probability. It is key therefore to investigate mechanisms to reduce the bias.

Bolstered resubstitution, introduced in~\cite{braga2004bolstered}, proposed a modification to resubstitution for classification, where the empirical measure, which produces the plain resubstitution estimator, is smoothed by kernels in order to reduce both the bias and variance in small-sample cases. In~\cite{ghane2022generalized}, the family of generalized resubstitution error estimators for classification was proposed and studied. These estimators are defined in terms of arbitrary empirical probability measures and include as special cases both plain and bolstered resubstitution, as well as posterior-probability \citep{LugoPawl:94}, Gaussian-process~\citep{hefny2010new}, and the Bayesian~\citep{DaltDoug:11a,DaltDoug:11b} classification error estimators. It was shown in that paper that generalized resubstitution error estimators are consistent and asymptotically unbiased for the two-class problem if the corresponding empirical probability measure converges uniformly to the standard empirical probability measure and the hypothesis space has a finite VC dimension. 

In this work, we extend the generalized resubstitution error estimators to the general statistical learning framework for regression, when the loss function has a moment of order $>1$ uniformly bounded in the hypothesis space. In particular, we consider regression problems under the quadratic loss function and extend the special cases studied in \cite{ghane2022generalized} to them. The generalized error estimators can be defined as the expectation of the loss function under an arbitrary empirical probability measure or as the expectation of a generalized loss function under the standard empirical probability measure. 

These representations allow us to establish several sufficient conditions for the consistency of these estimators that not only extend that of \cite{ghane2022generalized}, but are also weaker than it. For example, we show that if the generalized loss function converges uniformly to the original one, or if the expectation of the generalized loss function under the data-generating distribution converges to that of the original loss function, then the generalized resubstitution error estimator is consistent. In particular, this last condition allows for the variance of the empirical measure to not converge to zero. Sufficient conditions for consistency based on the variance of the empirical measure are also established for the case of twice-differentiable loss functions.

Finally, since consistency is attained under many conditions, there is plenty of room to choose the empirical measure, and we propose methods to estimate the parameters of the empirical measure from the data. We focus on bolstered error estimators and propose method of moments and maximum pseudo-likelihood estimators of the covariance matrix of the bolstered empirical measure, which formalizes the heuristic approach proposed in \cite{braga2004bolstered} for that purpose.

We summarize our contributions as follows:
\begin{itemize}
	\item We propose generalized resubstitution error estimators for regression, a general framework for resubstitution-like regression error estimators. Like the classification error estimators in \cite{ghane2022generalized}, our estimators are based on arbitrary empirical probability measures. 
	\item We establish several sufficient conditions for the consistency of these estimators that not only extend that of \cite{ghane2022generalized}, but are also weaker than it.
	\item Additional sufficient conditions for consistency based on the variance of the empirical measure are established for the case of twice-differentiable loss functions.
	\item To address the issue of choosing the proper amount of smoothing to cancel estimator bias, we propose method of moments and maximum
	pseudo-likelihood estimators of the covariance matrix of the Gaussian empirical probability measure for Gaussian bolstering error estimation.
\end{itemize}
Our paper is organized as follows:
\begin{itemize}
	\item In Sections \ref{SecBasicNotation} and \ref{SecGenErrorEst} we define the problem of error estimation in statistical learning and give an informal presentation of the generalized resubstitution error estimators.
	\item In Section \ref{SecError}, we formally define the generalized resubstitution error estimators, define the consistency of error estimators, and present sufficient conditions for the consistency of generalized resubstitution error estimators. In particular, in Section \ref{SubSec_GenC} we present sufficient conditions that hold in general cases, in Section \ref{SecSuffSize} we focus on estimators in which the associated empirical measure depends only on the sample size, and in Section \ref{SubSec_2diff} we study the case of twice-differentiable loss functions.
	\item The sufficient conditions established give room to choose the generalized empirical measure in various ways, and in Section \ref{SecHeu} we propose two methods for learning the covariance matrix for Gaussian bolstering from the training data.
	\item In Section \ref{SecExp}, we empirically assess the performance of the proposed error estimators in polynomial regression experiments, and in Section \ref{SecDiscuss} we provide concluding remarks.
\end{itemize}
Proofs for the results are presented in Appendix \ref{AppProofs}. In Appendixes \ref{AppVCdim}, \ref{AppLinf} and \ref{SecSquareLoss} we discuss the VC dimension of hypothesis spaces under general loss functions and prove some auxiliary results that characterize the loss functions and hypothesis spaces for which some of the established sufficient conditions hold.

\section{Background}
\label{SecBasicNotation}

Let the pair of random vectors $(X,Y)$ take values in a product space $\sX \!\times\! \sY$. In most cases of interest, $\sX \subseteq \RR^d$ and $\sY \subseteq \RR^m$, where usually $m = 1$ (though this is not necessary). The pair $(X,Y)$ is jointly distributed with an unknown probability law $\nu$ on $(\sX \!\times\! \sY,\,\sB_{XY})$, where $\sB_{XY}$ denotes the Borel $\sigma$-algebra in $\sX \!\times\! \sY$. Random variables $X$ and $Y$ are the {\em feature vector} and {\em target}, respectively, in a statistical learning problem. The {\em hypothesis space} $\sH$ is a set of Borel-measurable functions $\psi: \sX \!\to\! \sY$. For example, in a classification problem, $\sY = \{0,1,\ldots,c-1\}$ and $\psi$ is called a {\em classifier}, whereas in a standard regression problem, $\sY = \RR$ and $\psi$ is called a {\em regression function}.

Each hypothesis $\psi$ in $\mathcal{H}$ has an error defined as
\begin{linenomath}
	\begin{equation*}
		L(\psi) = \int_{\sX \times \sY} \ell(y,\psi(x)) \ d\nu(x,y)	
	\end{equation*}
\end{linenomath}
where $\ell:\sY \times \sY \mapsto \mathbb{R}^+$ is an appropriate {\em loss function}. For example, with $\sY = \RR$, the {\em \index quadratic loss} $\ell(y,\psi(x)) = (y - \psi(x))^2$ yields the regression mean squared error, whereas with $\sY = \{0,1,\ldots,c-1\}$, the {\em misclassification loss} $\ell(y,\psi(x)) \,=\, \mathds{1}_{y \neq \psi(x)}$ yields the probability of classification error. This error is unknown, and in order to estimate it in practice, one collects an i.i.d.\ {\em training sample} of $(X,Y)$
\begin{linenomath}
	\begin{equation*}
		S_{n} = \{(X_{1},Y_{1}),\dots,(X_{n},Y_{n})\}
	\end{equation*}
\end{linenomath}
and consider the empirical error
\begin{equation*}
	L_{n}(\psi) = \frac{1}{n} \sum_{i=1}^{n} \ell(Y_{i},\psi(X_{i}))
\end{equation*}
as an estimator of $L(\psi)$.

Solving a learning problem in this instance means picking a hypothesis $\psi_n$ in $\sH$ according to a given empirical criterion based on a sample $S_{n}$. This task can be described as a prediction rule $\Psi: (\sX \times \sY)^{n} \to \mathcal{H}$ which associates each sample $S_{n} \in (\sX \times \sY)^{n}$ to a hypothesis $\psi_{n} \coloneqq \Psi(S_{n}) \in \mathcal{H}$. Observe that $\psi_{n}$ is random since it is a function of sample $S_{n}$.

The quantity of interest in statistical learning is how well one can expect $\psi_{n}$ to perform on data not in the sample, but generated by the same law as $(X,Y)$, that is the prediction error of $\psi_n$, defined as 
\begin{equation}
	\varepsilon_n \,=\, L(\psi_{n}) \,=\,
	\int_{\sX \times \sY} \!\!\ell(y,\psi_n(x))\, d\nu(x,y)\,.
	\label{eq-err2}
\end{equation}

\section{Generalized Resubstitution Error Estimators}
\label{SecGenErrorEst}

A family of estimators of the prediction error $\varepsilon_{n}$ in \eqref{eq-err2} is obtained by replacing the generally unknown probability measure $\nu$ in \eqref{eq-err2} for an {\em empirical measure} $\nu_n$, which is a probability measure defined on $(\sX \times \sY,\,\sB_{XY})$ that is a function of the sample~$S_n$:
\begin{equation}
	\hat{\varepsilon}^{\nu_{n}}_n \,=\, \int_{\sX \times \sY} \!\!\ell(y,\psi_n(x))\, d\nu_n(x,y)\,.
	\label{eq-gresub2}
\end{equation}
Following \cite{ghane2022generalized}, we call these \textit{generalized resubstitution error estimators}.

This general family of error estimators includes the estimator generated by the standard empirical measure, which puts discrete mass $1/n$ on each data point:
\begin{equation}
	\nu_n\,=\, \frac{1}{n}\sum_{i=1}^n \delta_{X_i,Y_i}\,,
	\label{eq-empdist}
\end{equation}
where $\delta_{X_i,Y_i}$ is the (random) point measure located at $(X_i,Y_i)$. Substituting \eqref{eq-empdist} in (\ref{eq-gresub2}) yields the empirical error $L_{n}(\psi_n)$ of $\psi_{n}$ on $S_n$, which is known as the {\em resubstitution error estimator}:
\begin{equation}
	\hat{\varepsilon}^{\,r}_n\,=\, L_{n}(\psi_n) \,=\, \frac{1}{n} \sum_{i=1}^n \ell(Y_i,\psi_n(X_i))\,.
	\label{eq-plainresub}
\end{equation}
For example, with $\sY = \mathbb{R}$ and the quadratic loss, this is the sum of squared errors in regression, while with $\sY = \{0,1,\ldots,c-1\}$ and the misclassification loss, this is the training error of a classifier.

We restrict our attention in this paper to a specific, yet broad, class of generalized resubstitution estimators, which are based on the family of empirical measures of the form:
\begin{equation}
	\nu_n \,=\, \frac{1}{n}\sum_{i=1}^n \beta_{n,i} ,
	\label{eq-smooth}
\end{equation}
where $\beta_{n,i}$ is a measure on $(\sX\times \sY,\sB_{\sX\sY})$, for $i=1,\ldots,n$. Although $\beta_{n,i}$ may depend on the entire sample $S_{n}$, we assume that it has a special dependence on $X_{i}, Y_{i}$ and $\psi_{n}$, and we may use the explicit notation $\beta_{X_{i},Y_{i},\psi_{n},S_{n}}$ to make this clear. When $\beta_{n,i} = \delta_{X_i,Y_i}$ we recover the standard resubstitution error estimator.

A special case of interest is when $\beta_{n,i}$ is a product measure,
\begin{equation}
	\beta_{n,i} = \mu_{n,i}\,\pi_{n,i}\,, 
	\label{eq-prodmeas}
\end{equation}
where $\mu_{n,i}$ and $\pi_{n,i}$ are empirical measures on $(\sX,\sB_\sX)$  and $(\sY,\sB_\sY)$, respectively. In this instance, substituting (\ref{eq-smooth}) into (\ref{eq-gresub2}) and using Fubini's Theorem yields the following generalized resubstitution error estimator:
\begin{equation}
	\begin{aligned}
		\hat{\varepsilon}^{\nu_n}_n & \,=\, \frac{1}{n} \sum_{i=1}^{n} \int_{\sX \times \sY} \!\!\ell(y,\psi_n(x))\, d\beta_{n,i}(x,y) \\[1ex]
		&\,=\, \frac{1}{n}\sum_{i=1}^{n} \int_{\sY} \left(\int_{\sX} \ell(y,\psi_n(x))\,d\mu_{n,i}(x)\right) d\pi_{n,i}(y)
	\end{aligned}
	\label{eq-gen_error}
\end{equation}

The standard resubstitution error estimator in (\ref{eq-plainresub}) is a special case of (\ref{eq-gen_error}), with $\mu_{n,i} = \delta_{X_i}$,  $\pi_{n,i} = \delta_{Y_i}$, and $\beta_{n,i} = \delta_{X_i}\delta_{Y_i} = \delta_{X_i,Y_i}$. One can interpret the empirical measures $\mu_{n,i}$, $\pi_{n,i}$, and $\beta_{n,i}$ as {\em smoothed} versions of the point measures $\delta_{X_i}$, $\delta_{Y_i}$, and $\delta_{X_i,Y_i}$, respectively. 

Specific examples of generalized resubstitution estimators are given in the next few subsections.

\subsection{Bolstered Resubstitution (smoothing only in the ``$X$ direction'')}

In this case, $\beta_{n,i} = \mu_{n,i}\,\pi_{n,i}$ is the product measure \eqref{eq-prodmeas}, where $\pi_{n,i} = \delta_{Y_i}$ and $\mu_{n,i}$ is a general empirical measure, in which case (\ref{eq-gen_error}) becomes:
\begin{equation}
	\begin{aligned}
		\hat{\varepsilon}^{br}_n &\,=\, \frac{1}{n}\sum_{i=1}^{n} \int_{\sX} \ell(Y_i,\psi_n(x))\, d\mu_{n,i}(x)\,.
	\end{aligned}
	\label{eq-gen_error1}
\end{equation}
This is the {\em bolstered resubstitution} error estimator, which was proposed in \cite{braga2004bolstered} for the case of binary classification, here extended to the general statistical learning case.  Notice that this estimator performs smoothing in the ``$X$ direction''.

Usually, $\mu_{n,i}$ is assumed to be absolutely continuous, with a probability density function $p_{n,i}(x)$. If in addition we assume the quadratic loss, (\ref{eq-gen_error1}) becomes
\begin{linenomath}
	\begin{equation}
		\label{gbr}
		\hat{\varepsilon}_{n}^{gbr} = \frac{1}{n} \sum_{i=1}^{n} \int_{\mathbb{R}^{d}} (\psi_{n}(x) - Y_{i})^{2} \ p_{n,i}(x) \ dx
	\end{equation}
\end{linenomath}

In practice, the density $p_{n,i}(x)$ is centered on the data point $X_i$, when it is called a {\em bolstering kernel}. Then the contribution of each point $(X_{i},Y_{i})$ to $\hat{\varepsilon}_{n}^{br}$ is given by assessing how well $\psi_{n}(x)$ approximates $y$ when $x$ is a random perturbation of $X_{i}$, given by $p_{n,i}(x)$, and $y = Y_{i}$. This is illustrated in Figure \ref{fig_ex_poly_gb}, with a Gaussian density $p_{n,x}$ centered at each $X_i$.

\begin{figure}[ht]
	\centering
	\includegraphics[width=\linewidth]{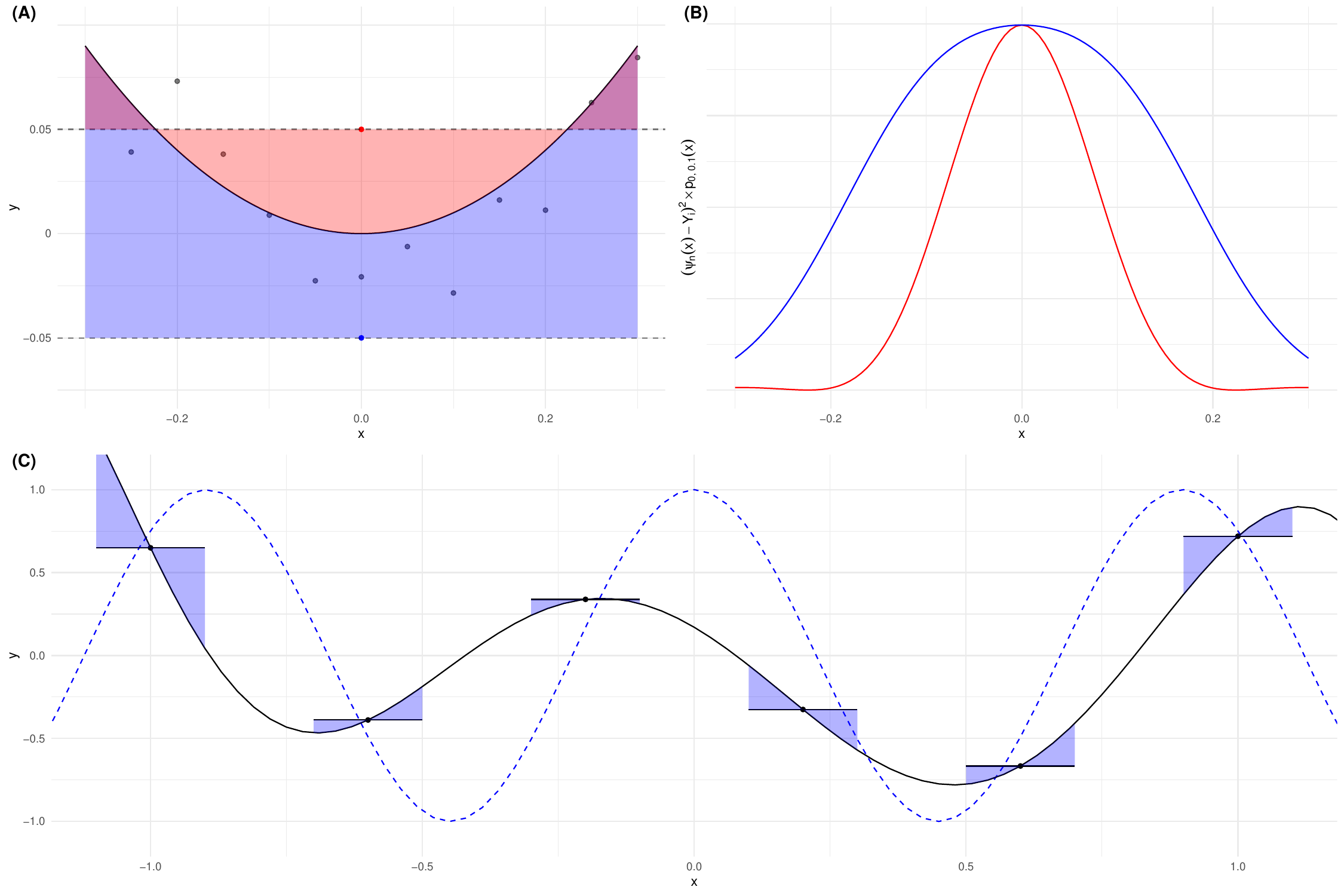}
	\caption{\footnotesize An illustration of Gaussian bolstering for regression. \textbf{(A)} An example in which $\psi_{n}(x) = x^2$ with the points $(0,0.05)$ and $(0,-0.05)$ outlined in red and blue, respectively. The predictor $\psi_{n}$ has the same quadratic loss in both points. The shaded area represents the distance from $\psi_{n}(x)$ to $0.05$ (red) and $-0.05$ (blue). \textbf{(B)} The squared distance from $\psi_{n}(x)$ to $0.05$ (red) and $-0.05$ (blue) in \textbf{(A)} weighted by a Gaussian density with mean zero and standard deviation $0.1$. The contribution of each point to the Gaussian bolstering estimator \eqref{gbr} is the area under the respective curve, and hence that of the blue point is greater. \textbf{(C)} Points $(x,y)$ generated by $y = \psi^{\star}(x) + \epsilon$, in which $\psi^{\star}$ is the dashed curve in blue and $\epsilon$ is a Gaussian noise. The shaded regions represent $X_{i} \pm 3\sigma$ in which $\sigma$ is the standard deviation of the Gaussian bolstering distribution. The polynomial $\psi_{n}$ (black) interpolates the data, so its resubstitution error is zero. Nevertheless, the Gaussian bolstering error estimator is not zero and equals essentially the mean distance from $\psi_{n}(x)$ to $Y_{i}$ for $x$ in the shaded neighborhood of $X_{i}$ when $x$ is distributed as a Gaussian distribution with mean $X_{1}$ and standard deviation $\sigma$.}
	\label{fig_ex_poly_gb}
\end{figure}

\subsection{Posterior-Probability Error Estimator (smoothing only in the ``$Y$ direction'')}

In this case, $\mu_{n,i} = \delta_{X_i}$ and $\pi_{n,i}$ is a general empirical measure so (\ref{eq-gen_error}) becomes:
\begin{equation}
	\begin{aligned}
		\hat{\varepsilon}^{\nu_{n}}_n &\,=\, \frac{1}{n}\sum_{i=1}^{n} \int_{\sY} \ell(y,\psi_n(X_{i}))\, d\pi_{n,i}(y)\,.
	\end{aligned}
	\label{eq-gen_errorY}
\end{equation}
A special case in classification problems is the \textit{posterior-probability resubstitution generalized error estimator} in which $\pi_{n,i}$ is a posterior probability of $Y$ conditioned on $X = X_{i}$:
\begin{linenomath}
	\begin{equation*}
		\label{law_pi_y}
		\pi_{n,i}(y) = \hat{P}_{n}\left(Y_{i} = y|X = X_{i}\right)
	\end{equation*}
\end{linenomath}
where $\hat{P}_{n}$ is a posterior probability in $\sY = \{0,\dots,c-1\}$, so \eqref{eq-gen_errorY} reduces to
\begin{linenomath}
	\begin{align*}
		\varepsilon_{n}^{ppr} = \frac{1}{n} \sum_{i=1}^{n} \hat{E}_{n}\left(\ell(Y_{i},\psi_{n}(X_{i})) | X = X_{i}\right).
	\end{align*}
\end{linenomath}
In the same manner, one could consider $\varepsilon_{n}^{ppr}$ in regression by considering a posterior probability density function as the density of the absolutely continuous measure $\pi_{n,i}$. In Figure \ref{fig_ex_posterior} we illustrate posterior-probability resubstitution for Bayesian regression. 

\begin{figure}[ht]
	\centering
	\includegraphics[width=\linewidth]{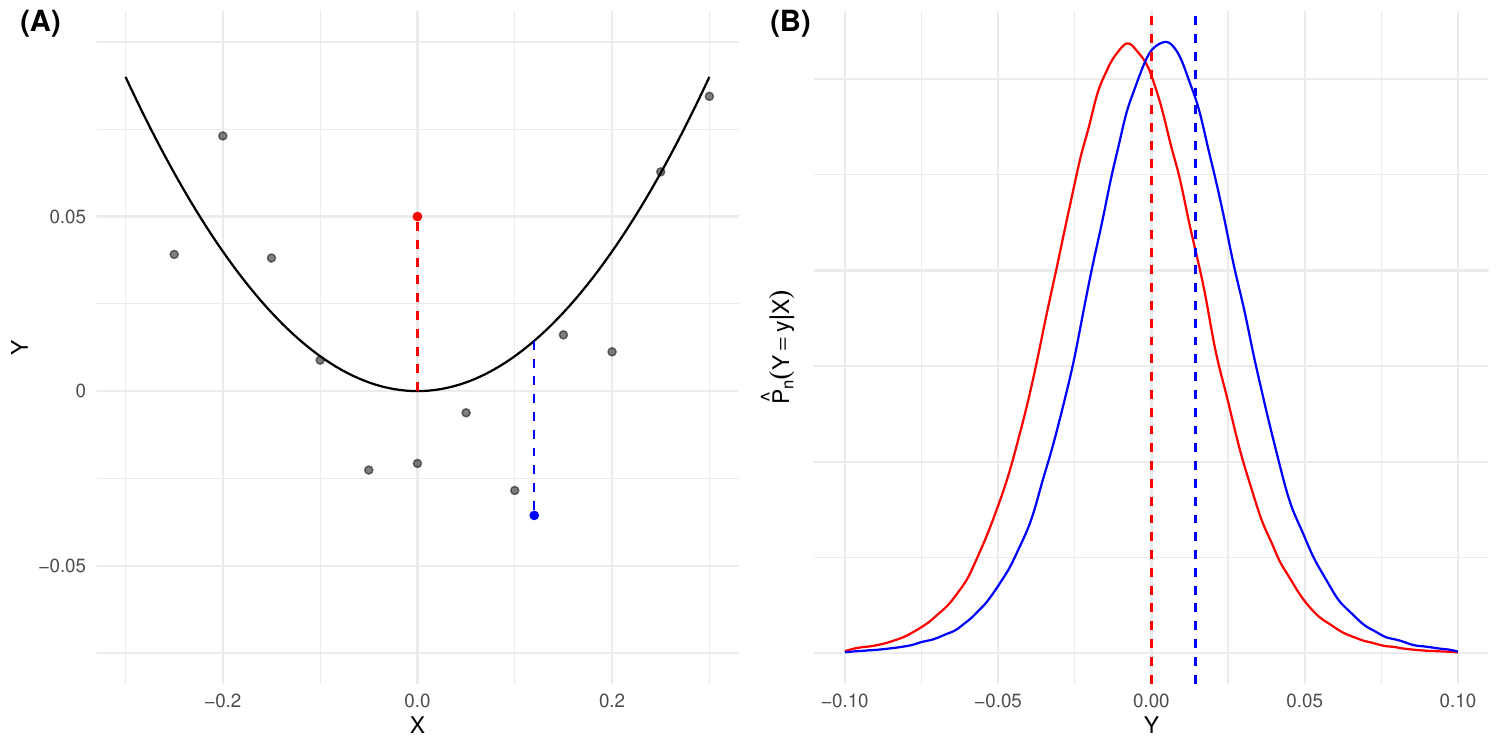}
	\caption{\footnotesize An illustration of posterior-probability resubstitution for Bayesian regression. \textbf{(A)} An example in which $\psi_{n}(x) = x^2$ with the points $(0,0.05)$ and $(0.12,-0.0356)$ outlined in red and blue, respectively. The predictor $\psi_{n}$ has the same quadratic loss in both of these points. \textbf{(B)} The posterior probability of $Y$ given $X_{i} = 0$ (red) and $X_{i} = 0.12$ (blue). The vertical dashed lines represent the respective value of $\psi_{n}(X_{i})$. The contribution of each point to $\varepsilon_{n}^{ppr}$ is the expected value of $(\psi_{n}(X_{i}) - Y)^2$ under the respective posterior probability, that is, the expected squared distance from $Y$ to the respective dashed line. The contribution of the blue point is greater than that of the red one.}
	\label{fig_ex_posterior}
\end{figure}

\subsection{Smoothing in ``both directions.''}

This is the general case, where neither $\mu_{n,i}$ and $\pi_{n,i}$ are point measures. For example, one could combine bolstered and posterior-probability resubstitution by considering $\pi_{n,i}$ as a posterior distribution of $Y$ conditioned on $X = X_{i}$ and $\mu_{n,i}$ with general bolstering kernel $p_{n,i}$. 

A more general setting is when $\beta_{n,i}$ is not a product measure. An important case is when $\beta_{n,i}$ is a Gaussian distribution with mean vector $(X_{i},Y_{i})$ and a possibly non-spherical covariance matrix $\Sigma_{i}$, as exemplified in Figure \ref{fig_ex_both}. We call this special case the $XY$\textit{-Gaussian bolstering} error estimator.

\begin{figure}[ht]
	\centering
	\includegraphics[width=\linewidth]{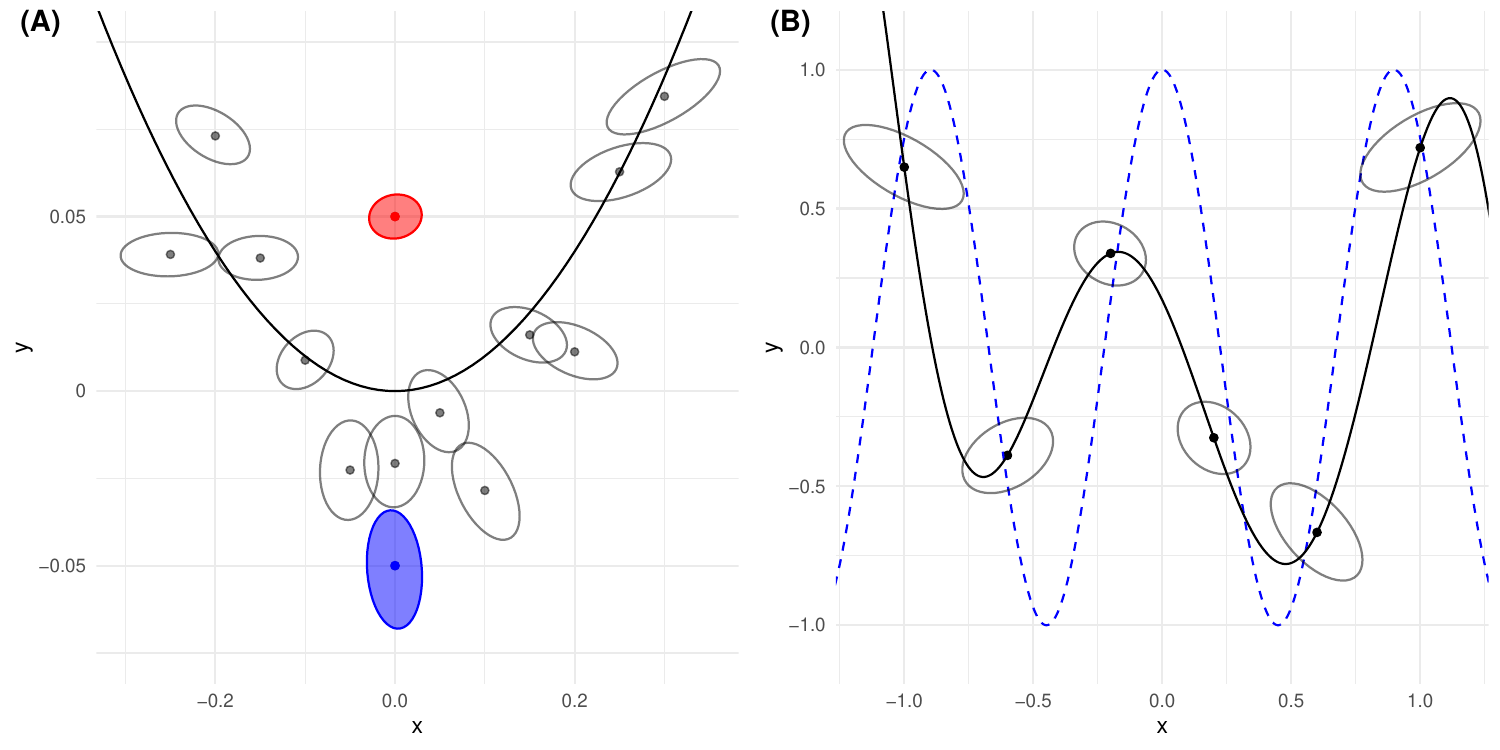}
	\caption{\footnotesize An illustration of the $XY$-Gaussian bolstering error estimator with the same data of Figure \ref{fig_ex_poly_gb}. The contribution of each data point $(X_{i},Y_{i})$ to the bolstered error estimator is the expected squared distance of $Y$ to $\psi_n(X)$ for $(X,Y)$ distributed as a Gaussian distribution with mean $(X_{i},Y_{i})$ and a covariance matrix $\Sigma_{i}$. The ellipses in \textbf{(A)} and \textbf{(B)} are level curves of the respective Gaussian distribution and illustrate the form of $\Sigma_{i}$.}
	\label{fig_ex_both}
\end{figure}

\subsection{Notation and assumptions}

In order to avoid heavy notation, we denote $\mathcal{Z} = \sX \times \sY$ and define $Z = (X,Y)$ as the random vector defined on $(\mathcal{Z},\sB_{Z})$ with probability law $\nu$. Following this notation, the sample becomes $S_{n} = \{Z_{1},\dots,Z_{n}\}$. Furthermore, for $z = (x,y) \in \mathcal{Z}$ and $\psi \in \mathcal{H}$ we denote by $\ell(z,\psi)$ or $\ell((x,y),\psi)$ the loss previously defined as $\ell(y,\psi(x))$. 

Let $C_{\ell} > 0$ be defined as
\begin{linenomath}
	\begin{equation*}
		C_{\ell} \coloneqq \sup\limits_{z \in \mathcal{Z}, \psi \in \mathcal{H}} \ell(z,\psi).
	\end{equation*}
\end{linenomath}
If $C_{\ell} < \infty$ then $\ell$ is a bounded loss function, and if $C_{\ell} = \infty$ then it is an unbounded loss function. We assume that $\ell$ has a finite moment of an order $\alpha > 1$ under $\nu$, for all $\psi \in \mathcal{H}$. Formally, denoting
\begin{linenomath}
	\begin{align*}
		L^{\alpha}(\psi) \coloneqq \int_{\mathcal{Z}} \ell^{\alpha}(z,\psi) \ d\nu(z),
	\end{align*}
\end{linenomath}
we assume there exists an $\alpha > 1$ such that 
\begin{linenomath}
	\begin{align}
		\label{finite_moments_text}
		\sup\limits_{\psi \in \mathcal{H}} L^{\alpha}(\psi) < \infty.
	\end{align}
\end{linenomath}
If $\ell$ is bounded, then \eqref{finite_moments_text} holds for all $\alpha > 1$, hence in this case, any measure $\nu$ satisfies \eqref{finite_moments_text}. When the loss function is unbounded, condition \eqref{finite_moments_text} defines a constrained class of measures.

In the next section, we state the main results of this paper about the consistency of resubstitution generalized error estimators.

\section{Consistency of generalized resubstitution error estimators}
\label{SecError}

We formally define the generalized resubstitution error estimators, extending the concept proposed by \cite{ghane2022generalized} for classification to the general statistical learning framework.

\begin{definition}
	\label{def_gen}
	Fix a hypothesis space $\mathcal{H}$, a loss function $\ell$ and a prediction rule $\Psi_{n}$. For each $n \geq 1$ and sample $S_{n} \in \mathcal{Z}^{n}$, let
	\begin{equation*}
		\mathscr{B}(S_{n}) = \{\beta_{z,\psi,S_{n}}: z \in \mathcal{Z},\psi \in \mathcal{H}\}	
	\end{equation*}
	be a collection of smoothing measures defined on $(\mathcal{Z},\sB_{Z})$ and consider the loss function $\ell_{\mathscr{B}}$ given by
	\begin{linenomath}
		\begin{equation*}
			\ell_{\mathscr{B}}(z,\psi) = \int_{\mathcal{Z}} \ell(z^{\prime},\psi) \ d\beta_{z,\psi,S_{n}}(z^{\prime})  
		\end{equation*}
	\end{linenomath}
	for $z \in \mathcal{Z}$ and $\psi \in \mathcal{H}$. We define the $\mathscr{B}$-generalized resubstitution error of $\psi_{n}$  as
	\begin{linenomath}
		\begin{equation}
			\label{gen_error}
			\hat{\varepsilon}_{n}^{\mathscr{B}} = \frac{1}{n} \sum_{i=1}^{n} \ell_{\mathscr{B}}(Z_{i},\psi_{n}) = \frac{1}{n} \sum_{i=1}^{n} \int_{\mathcal{Z}} \ell(z^{\prime},\psi_{n}) \ d\beta_{Z_{i},\psi_{n},S_{n}}(z^{\prime}).
		\end{equation}
	\end{linenomath} 
	When $\mathscr{B}(S_{n})$ depends on $S_{n}$ only through its size $n$ we denote it by $\mathscr{B}_{n}$, its measures by $\beta_{z,\psi,n}$, the generated loss by $\ell_{\mathscr{B}_{n}}$ and the estimator by $\hat{\varepsilon}_{n}^{\mathscr{B}_{n}}$.
\end{definition}

The estimator $\hat{\varepsilon}_{n}^{\mathscr{B}}$ is the standard resubstitution error of $\psi_{n}$ under the \textit{generalized loss function} $\ell_{\mathscr{B}}$ which is a smoothed version of the original loss $\ell$. For each sample $S_{n}$, the collection $\mathscr{B}(S_{n})$ generates the empirical measure
\begin{equation}
	\label{eq-emp-form}
	\nu_{n} = \frac{1}{n} \sum_{i=1}^{n} \beta_{Z_{i},\psi_{n},S_{n}}
\end{equation}
so $\hat{\varepsilon}_{n}^{\mathscr{B}}$ coincides with definition \eqref{eq-gresub2} for $\nu_{n}$ of form \eqref{eq-emp-form}. Analogous to the examples in Section \ref{SecBasicNotation}, the loss function $\ell_{\mathscr{B}}$ can be a smoothening of $\ell$ on the $X$, $Y$ or both directions by considering $\beta_{z,\psi_{n},S_{n}} = \mu_{z,\psi_{n},S_{n}} \pi_{z,\psi_{n},S_{n}}$ as the product of suitable measures on $(\sX,\sB_\sX)$ and $(\sY,\sB_\sY)$, respectively. Generalized error estimators can be defined equivalently by smoothing the empirical measure or the loss function, but, even though definition \eqref{eq-gresub2} might be more intuitive, the main results of this paper are better stated and proved considering Definition \ref{def_gen}.

A special case of generalized resubstitution error is when the collection $\mathscr{B}_{n}$ depends on $S_{n}$ only through its size. This is the case, for example, of Gaussian bolstering when each Gaussian distribution has a covariance matrix $K_{i,n}$ that depends only on the data points $X_{i}$ and $Y_{i}$, and on the sample size $n$, e.g., by converging to zero when $n$ increases. On the other hand, this is not the case for the posterior-probability error estimator when the posterior distribution is generated by the sample $S_{n}$ or for Gaussian bolstering when the kernel is estimated from $S_{n}$. 

The importance of this dependence lies on the fact that, when the smoothing measures depend on $S_{n}$ only thorough its size, the loss $\ell_{\mathscr{B}_{n}}(z,\psi)$ is not random and classical deviation-bounds of statistical learning theory can be applied to its expectation under the data generating distribution. Indeed, for each $\psi \in \mathcal{H}$, let
\begin{linenomath}
	\begin{align*}
		L^{\mathscr{B}}(\psi) = \int_{\mathcal{Z}} \ell_{\mathscr{B}}(z,\psi) \ d\nu(z) & & \text{ and } & & L^{\mathscr{B}}_{n}(\psi) = \frac{1}{n} \sum_{i=1}^{n} \ell_{\mathscr{B}}(Z_{i},\psi)
	\end{align*}
\end{linenomath}
be the expected loss $\ell_{\mathscr{B}}$ under the data generating distribution and on the sample $S_{n}$, respectively. If the collection of measures depends on $S_{n}$ only through its size, then $L^{\mathscr{B}_{n}}(\psi)$ is not a random quantity, while $L^{\mathscr{B}}_{n}(\psi)$ is always random.

\subsection{Consistency of generalized resubstitution error estimators}

In this paper, we are concerned with the consistency of generalized resubstitution error estimators, which is characterized by the convergence of the estimated error to $\varepsilon_{n} = L(\psi_n)$ with probability one as the sample size increases. Observe that $\varepsilon_n$ is a random variable since it depends on $\psi_n$.

\begin{definition}
	\label{def_consistency}
	Fix a hypothesis space $\mathcal{H}$, a loss function $\ell$, a prediction rule $\Psi_{n}$ and a collection $\mathscr{B}(S_{n})$ of probability measures for each sample $S_{n} \in \mathcal{Z}^{n}$. The generalized resubstitution error $\hat{\varepsilon}_{n}^{\mathscr{B}}$ is consistent if
	\begin{linenomath}
		\begin{align}
			\label{cond_consistency}
			\left| \hat{\varepsilon}_{n}^{\mathscr{B}} - \varepsilon_{n} \right| \to 0
		\end{align}
	\end{linenomath}
	with probability one as $n \to \infty$. 
\end{definition}

Assume without loss of generality that the random vectors in the sample $S_{n}$ are defined on a general probability space $(\Omega,\mathcal{S},\mathbb{P})$. Denote expectation in this probability space by $\mathbb{E}$. Define $\varepsilon_{n}^{\mathscr{B}} \coloneqq L^{\mathscr{B}}(\psi_{n})$ and observe that $\hat{\varepsilon}_{n}^{\mathscr{B}} = L^{\mathscr{B}}_{n}(\psi_{n})$. We assume there exists a constant $C$ such that, for all $n \geq 1$,
\begin{linenomath}
	\begin{align}
		\label{finite_moments_Q}
		\mathbb{P}\left(L^{\mathscr{B}}_{n}(\psi_{n}) < C\right) = 1.
	\end{align}
\end{linenomath}
Under assumption \eqref{finite_moments_Q}, if $\hat{\varepsilon}_{n}^{\mathscr{B}}$ is consistent, then it is asymptotically unbiased.

\begin{proposition}
	\label{proposition_unbia}
	Fix a loss function $\ell$, a prediction rule $\Psi_{n}$ and a collection $\mathscr{B}(S_{n})$ of probability measures for each sample $S_{n} \in \mathcal{Z}^{n}$. If $\hat{\varepsilon}_{n}^{\mathscr{B}}$ is consistent and \eqref{finite_moments_Q} holds, then
	\begin{linenomath}
		\begin{equation}
			\lim\limits_{n \to \infty} |\mathbb{E}[\hat{\varepsilon}_{n}^{\mathscr{B}}] - \mathbb{E}[\varepsilon_{n}]| = 0.
		\end{equation}
	\end{linenomath}
\end{proposition}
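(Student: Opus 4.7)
The plan is to reduce the claim about convergence of expectations to the hypothesis of almost-sure convergence by passing the absolute value inside the expectation and then invoking the bounded convergence theorem. Specifically, by the triangle inequality (equivalently, Jensen applied to $|\cdot|$),
\begin{equation*}
\bigl|\mathbb{E}[\hat{\varepsilon}_{n}^{\mathscr{B}}] - \mathbb{E}[\varepsilon_{n}]\bigr| \;=\; \bigl|\mathbb{E}[\hat{\varepsilon}_{n}^{\mathscr{B}} - \varepsilon_{n}]\bigr| \;\leq\; \mathbb{E}\bigl[\bigl|\hat{\varepsilon}_{n}^{\mathscr{B}} - \varepsilon_{n}\bigr|\bigr],
\end{equation*}
so it suffices to show that the right-hand side tends to zero. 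Consistency of $\hat{\varepsilon}_{n}^{\mathscr{B}}$ (Definition \ref{def_consistency}) already gives $|\hat{\varepsilon}_{n}^{\mathscr{B}} - \varepsilon_{n}| \to 0$ almost surely, so the only thing missing is a uniform domination to justify interchanging limit and expectation.

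The next step is to exhibit a constant bound for $|\hat{\varepsilon}_{n}^{\mathscr{B}} - \varepsilon_{n}|$ valid for all $n$. Hypothesis \eqref{finite_moments_Q} gives $\hat{\varepsilon}_{n}^{\mathscr{B}} = L_{n}^{\mathscr{B}}(\psi_{n}) < C$ with probability one. For $\varepsilon_{n} = L(\psi_{n})$, I would use the standing assumption \eqref{finite_moments_text} together with Jensen's inequality applied to the convex function $t \mapsto t^{\alpha}$, which yields $L(\psi)^{\alpha} \leq L^{\alpha}(\psi)$ and hence
\begin{equation*}
\sup_{\psi \in \mathcal{H}} L(\psi) \;\leq\; \Bigl(\sup_{\psi \in \mathcal{H}} L^{\alpha}(\psi)\Bigr)^{1/\alpha} \;=:\; M \;<\; \infty.
\end{equation*}
In particular $0 \leq \varepsilon_{n} \leq M$ surely, and combining the two bounds gives $|\hat{\varepsilon}_{n}^{\mathscr{B}} - \varepsilon_{n}| \leq C + M$ almost surely, uniformly in $n$.

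The conclusion then follows from the bounded convergence theorem: the sequence $|\hat{\varepsilon}_{n}^{\mathscr{B}} - \varepsilon_{n}|$ is a.s.\ bounded by the constant $C+M$ and converges to $0$ almost surely, so its expectation converges to $0$, and the displayed inequality at the start gives $|\mathbb{E}[\hat{\varepsilon}_{n}^{\mathscr{B}}] - \mathbb{E}[\varepsilon_{n}]| \to 0$.

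There is essentially no serious obstacle here; the only subtle point is recognising that $\varepsilon_{n}$ must be controlled independently of $n$, and that this control is not provided by \eqref{finite_moments_Q} (which only covers $\hat{\varepsilon}_{n}^{\mathscr{B}}$) but rather by the uniform $\alpha$-moment assumption \eqref{finite_moments_text} via Jensen. Once uniform boundedness of both sequences is established, the argument is a single application of bounded convergence.
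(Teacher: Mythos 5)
Your proof is correct and follows essentially the same route as the paper: both arguments dominate $|\hat{\varepsilon}_{n}^{\mathscr{B}} - \varepsilon_{n}|$ by a constant using \eqref{finite_moments_Q} for $\hat{\varepsilon}_{n}^{\mathscr{B}}$ and the uniform moment condition \eqref{finite_moments_text} (via Jensen) for $\varepsilon_{n}$, and then apply dominated/bounded convergence together with the triangle inequality. You merely make explicit the Jensen step that the paper leaves implicit.
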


The next theorem states that the resubstitution error $\hat{\varepsilon}_{n}^{r}$ is consistent if $\mathcal{H}$ has finite VC dimension under the loss function $\ell$. See Appendix \ref{AppVCdim} for the formal definition of this VC dimension, which is also known as the pseudo-dimension \cite{pollard1989asymptotics,vapnik1998}. 

\begin{theorem}
	\label{theoremER}
	Fix a loss function $\ell$ and let $\mathcal{H}$ be a hypothesis space with $d_{VC}(\mathcal{H},\ell) < \infty$. For any prediction rule $\Psi_{n}$, the resubstitution error $\hat{\varepsilon}_{n}^{r}$ is consistent.
\end{theorem}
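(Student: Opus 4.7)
The plan is to reduce consistency of the resubstitution estimator at the data-dependent hypothesis $\psi_{n}$ to a uniform strong law of large numbers (USLLN) over the loss class $\mathcal{F}=\{z\mapsto\ell(z,\psi):\psi\in\mathcal{H}\}$. Since $\psi_{n}\in\mathcal{H}$ for every sample, one has the deterministic envelope
\[
\bigl|\hat{\varepsilon}_{n}^{r} - \varepsilon_{n}\bigr| \;=\; \bigl|L_{n}(\psi_{n})-L(\psi_{n})\bigr| \;\leq\; \sup_{\psi\in\mathcal{H}} \bigl|L_{n}(\psi) - L(\psi)\bigr|,
\]
so it suffices to show the right-hand side tends to zero almost surely. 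The assumption $d_{VC}(\mathcal{H},\ell)<\infty$ is precisely the statement that $\mathcal{F}$ has finite pseudo-dimension in the Pollard--Vapnik sense, which delivers polynomial uniform covering-number bounds \cite{pollard1989asymptotics,vapnik1998}.

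When $\ell$ is bounded ($C_{\ell}<\infty$), $\mathcal{F}$ is a uniformly bounded VC-subgraph class and the classical Vapnik--Chervonenkis USLLN immediately yields $\sup_{\psi}|L_{n}(\psi)-L(\psi)|\to 0$ almost surely, closing the argument.

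For the unbounded case I would truncate at an increasing level $M=M_{n}\to\infty$. Writing $\ell^{(M)}(z,\psi)=\min\{\ell(z,\psi),M\}$, with associated population and empirical averages $L^{(M)},L_{n}^{(M)}$, a triangle inequality produces three uniform-in-$\psi$ terms. The middle term $\sup_{\psi}|L_{n}^{(M)}(\psi)-L^{(M)}(\psi)|$ is handled by the bounded VC USLLN, since truncation does not enlarge the pseudo-dimension of $\mathcal{F}$. The population tail $\sup_{\psi}|L(\psi)-L^{(M)}(\psi)|$ is bounded above by $M^{-(\alpha-1)}\sup_{\psi}L^{\alpha}(\psi)$ via Markov's inequality and assumption \eqref{finite_moments_text}, and vanishes as $M\to\infty$.

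The genuinely delicate piece, and the main obstacle I expect, is the empirical tail $\sup_{\psi}\frac{1}{n}\sum_{i=1}^{n}\ell(Z_{i},\psi)\mathbf{1}\{\ell(Z_{i},\psi)>M\}$. Assumption \eqref{finite_moments_text} controls $\sup_{\psi}\nu(\ell^{\alpha}(\cdot,\psi))$ but \emph{not} $\nu(\sup_{\psi}\ell^{\alpha}(\cdot,\psi))$, so a naive envelope argument fails. I would exploit the pointwise inequality $\ell\,\mathbf{1}\{\ell>M\}\leq\ell^{\alpha}/M^{\alpha-1}$ to reduce the task to bounding $\sup_{\psi}L_{n}^{\alpha}(\psi)$ in probability; the $\alpha$-th power class inherits finite pseudo-dimension since composition with a monotone transform preserves the VC-subgraph property, so a second truncation plus the bounded USLLN controls it. Choosing $M_{n}$ to grow slowly enough to balance all three rates, and invoking Borel--Cantelli to upgrade convergence in probability to almost-sure convergence, then completes the proof.
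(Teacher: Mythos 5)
Your reduction to $\sup_{\psi\in\mathcal{H}}|L_{n}(\psi)-L(\psi)|$, and your handling of the bounded case and of the population tail, are fine and match the spirit of the paper's first step. The paper, however, never attempts a direct uniform strong law for the unbounded loss class: it invokes a \emph{relative} uniform deviation bound (Proposition \ref{prop_bound_convergence}, imported from \cite{cortes2019relative}), i.e.\ a summable tail bound on $\sup_{\psi}|L(\psi)-L_{n}(\psi)|/\sqrt[\alpha]{L^{\alpha}(\psi)+\tau}$, and then converts this into consistency via Lemma \ref{corollary_equiv_cons}, which uses \eqref{finite_moments_text} to sandwich the denominator between $\sqrt[\alpha]{\tau}$ and $\sqrt[\alpha]{\sup_{\psi}L^{\alpha}(\psi)+\tau}$. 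That normalization is not cosmetic; it is what makes the unbounded case work.

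The genuine gap in your proposal sits exactly at the step you flag as delicate. To control the empirical tail $\sup_{\psi}\frac{1}{n}\sum_{i}\ell(Z_{i},\psi)\mathbf{1}\{\ell(Z_{i},\psi)>M_{n}\}$ you reduce to bounding $\sup_{\psi}L_{n}^{\alpha}(\psi)$ and propose to handle that by ``a second truncation plus the bounded USLLN.'' This is circular: after truncating $\ell^{\alpha}$ you are left with the empirical tail of the class $\{\ell^{\alpha}(\cdot,\psi):\psi\in\mathcal{H}\}$, an unbounded class for which \eqref{finite_moments_text} supplies only a uniformly bounded \emph{first} moment, namely $\sup_{\psi}\nu(\ell^{\alpha}(\cdot,\psi))<\infty$; repeating the Markov/truncation trick on it would require a uniform moment of order strictly greater than $\alpha$, which is not assumed. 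Finite pseudo-dimension does not rescue this step: without an integrable envelope (the assumption controls $\sup_{\psi}\nu(\ell^{\alpha})$, not $\nu(\sup_{\psi}\ell^{\alpha})$, as you yourself note), a uniform law of large numbers for an unbounded class can fail, and $\sup_{\psi}L_{n}^{\alpha}(\psi)$ need not be almost surely eventually bounded. This obstruction is precisely what relative deviation bounds are designed to circumvent, so to close your argument you would either have to import such a bound---at which point you have reproduced the paper's proof---or strengthen the hypotheses (e.g., an integrable envelope or a higher uniform moment).
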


In Section \ref{SubSec_GenC}, we present general sufficient conditions for the consistency of generalized resubstitution errors, in Section \ref{SecSuffSize} we study the consistency when $\mathscr{B}_{n}$ depends only on the sample size, and in Section \ref{SubSec_2diff} we study the consistency when the loss function is twice differentiable in $z$.

\subsection{Sufficient conditions for consistency: general case}
\label{SubSec_GenC}

The first sufficient condition for the consistency of $\hat{\varepsilon}_{n}^{\mathscr{B}}$ is the convergence of $\hat{\varepsilon}_{n}^{\mathscr{B}}$ to $\hat{\varepsilon}_{n}^{r}$, which is a direct consequence of Theorem \ref{theoremER}. 

\begin{proposition}
	\label{prop_suf1}
	Fix a loss function $\ell$, a prediction rule $\Psi_{n}$ and a collection $\mathscr{B}(S_{n})$ of probability distributions for each sample $S_{n} \in \mathcal{Z}^{n}$. Let $\mathcal{H}$ be a hypothesis space with $d_{VC}(\mathcal{H},\ell) < \infty$. If it holds
	\begin{linenomath}
		\begin{align*}
			| \hat{\varepsilon}_{n}^{\mathscr{B}} - \hat{\varepsilon}_{n}^{r} | \to 0
		\end{align*}
	\end{linenomath}
	with probability one as $n \to \infty$, then $\hat{\varepsilon}_{n}^{\mathscr{B}}$ is consistent.
\end{proposition}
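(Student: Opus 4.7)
The plan is to prove this by a direct triangle inequality argument, combining the hypothesis with Theorem \ref{theoremER}. Specifically, I would write
\begin{equation*}
\bigl| \hat{\varepsilon}_{n}^{\mathscr{B}} - \varepsilon_{n} \bigr| \;\leq\; \bigl| \hat{\varepsilon}_{n}^{\mathscr{B}} - \hat{\varepsilon}_{n}^{r} \bigr| \;+\; \bigl| \hat{\varepsilon}_{n}^{r} - \varepsilon_{n} \bigr|
\end{equation*}
and observe that both terms on the right-hand side vanish almost surely as $n \to \infty$: the first by the hypothesis of the proposition, and the second by Theorem \ref{theoremER}, which applies because $d_{VC}(\mathcal{H},\ell) < \infty$.

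More concretely, I would first invoke Theorem \ref{theoremER} to obtain an event $A_1 \in \mathcal{S}$ with $\mathbb{P}(A_1) = 1$ on which $|\hat{\varepsilon}_{n}^{r} - \varepsilon_{n}| \to 0$, and then use the hypothesis to obtain an event $A_2 \in \mathcal{S}$ with $\mathbb{P}(A_2) = 1$ on which $|\hat{\varepsilon}_{n}^{\mathscr{B}} - \hat{\varepsilon}_{n}^{r}| \to 0$. On the intersection $A_1 \cap A_2$, which has probability one, the triangle inequality above gives $|\hat{\varepsilon}_{n}^{\mathscr{B}} - \varepsilon_{n}| \to 0$, verifying the criterion \eqref{cond_consistency} of Definition \ref{def_consistency}.

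There is really no substantive obstacle here: the result is a formal corollary of Theorem \ref{theoremER} together with the triangle inequality, and the only care required is in noting that the almost-sure convergence is preserved under finite intersection of probability-one events. The value of the proposition lies not in its proof but in the fact that it reduces the consistency question for $\hat{\varepsilon}_{n}^{\mathscr{B}}$ to comparing it directly with the plain resubstitution estimator, a comparison that can often be carried out by controlling the smoothing measures $\beta_{Z_i,\psi_n,S_n}$ (for instance, by showing that their dispersion vanishes in a suitable sense as $n \to \infty$).
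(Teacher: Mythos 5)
Your argument is exactly the paper's proof: the same triangle inequality $|\hat{\varepsilon}_{n}^{\mathscr{B}} - \varepsilon_{n}| \leq |\hat{\varepsilon}_{n}^{\mathscr{B}} - \hat{\varepsilon}_{n}^{r}| + |\hat{\varepsilon}_{n}^{r} - \varepsilon_{n}|$, with the first term handled by the hypothesis and the second by Theorem \ref{theoremER}. The additional remark about intersecting two probability-one events is a routine detail the paper leaves implicit; the proposal is correct.
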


For a fixed loss function $\ell$, $\psi \in \mathcal{H}$ and $b \in (0,C_{\ell})$ define
\begin{linenomath}
	\begin{equation*}
		A_{\ell,\psi,b} = \{z \in \mathcal{Z}: \ell(z,\psi) > b\} \in \sB_{\mathcal{Z}}
	\end{equation*}
\end{linenomath}
as the points in $\mathcal{Z}$ where $\ell(z,\psi) > b$ and let 
\begin{linenomath}
	\begin{equation}
		\label{def_Astar}
		\mathcal{A}^{\star}_{\mathcal{H},\ell} = \left\{A_{\ell,\psi,b}: \psi \in \mathcal{H}, b \in (0,C_{\ell})\right\} \subset \sB_{\mathcal{Z}}
	\end{equation}
\end{linenomath}
be the collection of such sets for $\psi \in \mathcal{H}$ and $b \in (0,C_{\ell})$. An extension of Theorem 1 in \cite{ghane2022generalized} follows from Proposition \ref{prop_suf1} for bounded loss functions. 

\begin{corollary}
	\label{cor_suf2}
	Fix a bounded loss function $\ell$, a prediction rule $\Psi_{n}$ and a collection $\mathscr{B}(S_{n})$ of probability distributions for each sample $S_{n} \in \mathcal{Z}^{n}$. Let $\mathcal{H}$ be a hypothesis space with $d_{VC}(\mathcal{H},\ell) < \infty$. If
	\begin{linenomath}
		\begin{align}
			\label{eq_suf2}
			\sup\limits_{A \in \mathcal{A}^{\star}_{\mathcal{H},\ell}} \left|\frac{1}{n} \sum_{i=1}^{n} \beta_{Z_{i},\psi_{n},S_{n}}(A) - \frac{1}{n} \sum_{i=1}^{n} \delta_{Z_{i}}(A)\right| \to 0
		\end{align}
	\end{linenomath}
	with probability one as $n \to \infty$, then $\hat{\varepsilon}_{n}^{\mathscr{B}}$ is consistent. In special, if
	\begin{linenomath}
		\begin{equation*}
			\sup\limits_{\substack{A \in \mathcal{A}^{\star}_{\mathcal{H},\ell}\\ z \in \mathcal{Z}, \psi \in \mathcal{H}}} \left|\beta_{z,\psi,S_{n}}(A) - \delta_{z}(A)\right| \to 0
		\end{equation*}
	\end{linenomath}
	with probability one as $n \to \infty$, then $\hat{\varepsilon}_{n}^{\mathscr{B}}$ is consistent.
\end{corollary}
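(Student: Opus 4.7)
The plan is to reduce the corollary to Proposition \ref{prop_suf1} by establishing $|\hat{\varepsilon}_{n}^{\mathscr{B}} - \hat{\varepsilon}_{n}^{r}| \to 0$ almost surely, using a layer-cake (horizontal slicing) representation of the loss to convert integrals of $\ell$ into integrals of indicators of the sets $A_{\ell,\psi_{n},b}$.

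First, since $\ell$ is nonnegative and bounded by $C_{\ell}$, for every $z^\prime \in \mathcal{Z}$ and $\psi \in \mathcal{H}$ we can write
\begin{equation*}
\ell(z^\prime,\psi) \,=\, \int_{0}^{C_{\ell}} \mathds{1}\{\ell(z^\prime,\psi) > b\} \, db \,=\, \int_{0}^{C_{\ell}} \mathds{1}_{A_{\ell,\psi,b}}(z^\prime) \, db.
\end{equation*}
Plugging this into the definition of $\hat{\varepsilon}_{n}^{\mathscr{B}}$ and applying Tonelli's theorem (valid since the integrand is nonnegative and $\beta_{Z_{i},\psi_{n},S_{n}}$ is a probability measure) yields
\begin{equation*}
\hat{\varepsilon}_{n}^{\mathscr{B}} \,=\, \int_{0}^{C_{\ell}} \frac{1}{n}\sum_{i=1}^{n} \beta_{Z_{i},\psi_{n},S_{n}}(A_{\ell,\psi_{n},b}) \, db,
\end{equation*}
and, analogously, $\hat{\varepsilon}_{n}^{r} = \int_{0}^{C_{\ell}} \frac{1}{n}\sum_{i=1}^{n} \delta_{Z_{i}}(A_{\ell,\psi_{n},b}) \, db$.

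Second, subtracting these two identities and bringing the absolute value inside the integral gives the key estimate
\begin{equation*}
|\hat{\varepsilon}_{n}^{\mathscr{B}} - \hat{\varepsilon}_{n}^{r}| \,\leq\, C_{\ell} \sup_{A \in \mathcal{A}^{\star}_{\mathcal{H},\ell}} \left|\frac{1}{n} \sum_{i=1}^{n} \beta_{Z_{i},\psi_{n},S_{n}}(A) - \frac{1}{n} \sum_{i=1}^{n} \delta_{Z_{i}}(A)\right|,
\end{equation*}
where we used that, for each $b \in (0,C_{\ell})$, the set $A_{\ell,\psi_{n},b}$ lies in $\mathcal{A}^{\star}_{\mathcal{H},\ell}$. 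Hypothesis \eqref{eq_suf2} sends the right-hand side to zero almost surely, so Proposition \ref{prop_suf1} applies and yields consistency of $\hat{\varepsilon}_{n}^{\mathscr{B}}$.

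For the ``in special'' part, the deduction is routine: for every $A \in \mathcal{A}^{\star}_{\mathcal{H},\ell}$ one has by the triangle inequality
\begin{equation*}
\left|\frac{1}{n} \sum_{i=1}^{n} \beta_{Z_{i},\psi_{n},S_{n}}(A) - \frac{1}{n} \sum_{i=1}^{n} \delta_{Z_{i}}(A)\right| \,\leq\, \sup_{z \in \mathcal{Z}, \psi \in \mathcal{H}} |\beta_{z,\psi,S_{n}}(A) - \delta_{z}(A)|,
\end{equation*}
so the uniform pointwise convergence assumption implies \eqref{eq_suf2}. I expect the only delicate step to be the Tonelli exchange and the verification that $A_{\ell,\psi_{n},b}$ is measurable and lies in $\mathcal{A}^{\star}_{\mathcal{H},\ell}$ for the random hypothesis $\psi_{n}$; since $\psi_{n} \in \mathcal{H}$ pointwise and the supremum over $\mathcal{A}^{\star}_{\mathcal{H},\ell}$ covers every realization of $\psi_{n}$, this causes no genuine difficulty, but it is the one point worth spelling out carefully in the full proof.
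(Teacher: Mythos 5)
Your proof is correct and follows essentially the same route as the paper: reduce to Proposition \ref{prop_suf1} by representing $\hat{\varepsilon}_{n}^{\mathscr{B}}$ and $\hat{\varepsilon}_{n}^{r}$ through the tail probabilities of the level sets $A_{\ell,\psi_{n},b}$ and bounding the difference by $C_{\ell}$ times the supremum in \eqref{eq_suf2}. The only cosmetic difference is that you use the exact layer-cake identity $\ell(z^{\prime},\psi)=\int_{0}^{C_{\ell}}\mathds{1}\{\ell(z^{\prime},\psi)>b\}\,db$ with Tonelli, whereas the paper obtains the same bound via a limit of discrete Riemann sums of the tail probabilities; both yield the identical final estimate.
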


If $\ell_{\mathscr{B}}(z,\psi)$ converges to $\ell(z,\psi)$ uniformly on $\mathcal{Z}$ and $\mathcal{H}$, then $\hat{\varepsilon}_{n}^{\mathscr{B}}$ is consistent. This is a sufficient condition that also holds for unbounded loss functions.

\begin{corollary}
	\label{cor_suf3}
	Fix a loss function $\ell$, a prediction rule $\Psi_{n}$, a collection $\mathscr{B}(S_{n})$ of probability distributions for each sample $S_{n} \in \mathcal{Z}^{n}$, and let $\mathcal{H}$ be a hypothesis space with $d_{VC}(\mathcal{H},\ell) < \infty$. If
	\begin{linenomath}
		\begin{align*}
			\lim\limits_{n \to \infty} \sup\limits_{z \in \mathcal{Z},\psi \in \mathcal{H}} \left|\ell_{\mathscr{B}}(z,\psi) - \ell(z,\psi)\right| = 0,
		\end{align*}
	\end{linenomath}
	with probability one, then $\hat{\varepsilon}_{n}^{\mathscr{B}}$ is consistent.
\end{corollary}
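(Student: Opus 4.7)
The plan is to reduce this to Proposition \ref{prop_suf1} by showing that uniform convergence of the smoothed loss to the original loss forces $|\hat{\varepsilon}_{n}^{\mathscr{B}} - \hat{\varepsilon}_{n}^{r}| \to 0$ almost surely, after which the finite VC dimension assumption and Theorem \ref{theoremER} deliver the result automatically.

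The main computation is a trivial chain of inequalities. Writing both estimators in the empirical-average form from Definition \ref{def_gen},
\begin{linenomath}
\begin{equation*}
\hat{\varepsilon}_{n}^{\mathscr{B}} - \hat{\varepsilon}_{n}^{r} = \frac{1}{n}\sum_{i=1}^{n}\bigl[\ell_{\mathscr{B}}(Z_{i},\psi_{n}) - \ell(Z_{i},\psi_{n})\bigr],
\end{equation*}
\end{linenomath}
so by the triangle inequality and by bounding each summand by the supremum over the whole product space $\mathcal{Z}\times\mathcal{H}$, one gets
\begin{linenomath}
\begin{equation*}
\bigl|\hat{\varepsilon}_{n}^{\mathscr{B}} - \hat{\varepsilon}_{n}^{r}\bigr| \,\leq\, \frac{1}{n}\sum_{i=1}^{n}\bigl|\ell_{\mathscr{B}}(Z_{i},\psi_{n}) - \ell(Z_{i},\psi_{n})\bigr| \,\leq\, \sup_{z\in\mathcal{Z},\,\psi\in\mathcal{H}}\bigl|\ell_{\mathscr{B}}(z,\psi) - \ell(z,\psi)\bigr|.
\end{equation*}
\end{linenomath}
The crucial point is that this bound holds \emph{pointwise on the sample path}, uniformly in the random choice $\psi_{n}$ of hypothesis, precisely because the supremum is taken over all $\psi\in\mathcal{H}$ and all $z\in\mathcal{Z}$; no measurability or independence subtleties arise.

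By hypothesis, the right-hand side tends to $0$ almost surely as $n\to\infty$. Thus $|\hat{\varepsilon}_{n}^{\mathscr{B}} - \hat{\varepsilon}_{n}^{r}|\to 0$ with probability one, and Proposition \ref{prop_suf1} (which applies here since $d_{VC}(\mathcal{H},\ell)<\infty$) yields the consistency of $\hat{\varepsilon}_{n}^{\mathscr{B}}$. Note that, unlike Corollary \ref{cor_suf2}, boundedness of $\ell$ is not needed: the argument handles unbounded losses for free because the uniform-convergence hypothesis already absorbs any growth of $\ell$ into a vanishing global deviation. The only ``obstacle'' is really conceptual rather than technical, namely recognizing that one need not invoke the $\mathcal{A}^{\star}_{\mathcal{H},\ell}$ class or any measure-theoretic uniformity over sets: uniform closeness of the loss functions themselves is a stronger and cleaner sufficient condition that feeds directly into Proposition \ref{prop_suf1}.
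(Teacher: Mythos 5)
Your proposal is correct and follows exactly the paper's own argument: bound $|\hat{\varepsilon}_{n}^{\mathscr{B}} - \hat{\varepsilon}_{n}^{r}|$ by the triangle inequality and the uniform supremum of $|\ell_{\mathscr{B}} - \ell|$ over $\mathcal{Z}\times\mathcal{H}$, then invoke Proposition \ref{prop_suf1}. Nothing further is needed.
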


\subsection{Sufficient conditions for consistency: dependence only on sample size}
\label{SecSuffSize}

When the collection $\mathscr{B}_{n}$ depends on the sample $S_{n}$ only through its size $n$, then
\begin{linenomath}
	\begin{align*}
		\hat{\varepsilon}_{n}^{\mathscr{B}_{n}} = \frac{1}{n} \sum_{i=1}^{n} \ell_{\mathscr{B}_{n}}(Z_{i},\psi_{n})
	\end{align*}
\end{linenomath}
is the resubstitution error of $\psi_{n}$ considering the loss function $\ell_{\mathscr{B}_{n}}$. It follows from Theorem \ref{theoremER} that $\hat{\varepsilon}_{n}^{\mathscr{B}_{n}}$ converges with probability one to $\varepsilon_{n}^{\mathscr{B}_{n}}$ if $\limsup d_{VC}(\mathcal{H},\ell_{\mathscr{B}_{n}}) < \infty$. Therefore, if $\varepsilon_{n}^{\mathscr{B}_{n}}$ converges to $\varepsilon_{n}$, then $\hat{\varepsilon}_{n}^{\mathscr{B}_{n}}$ converges to $\varepsilon_{n}$, so it is consistent. This result is stated in Proposition \ref{prop_condExchange}. In Appendix \ref{AppVClQ} we present sufficient conditions for $\limsup d_{VC}(\mathcal{H},\ell_{\mathscr{B}_{n}})$ to be finite when $d_{VC}(\mathcal{H},\ell) < \infty$.

\begin{proposition}
	\label{prop_condExchange}
	Fix a loss function $\ell$, a prediction rule $\Psi_{n}$ and a collection $\mathscr{B}_{n}$ of probability measures for each $n \geq 1$. Let $\mathcal{H}$ be a hypothesis space with $\limsup_{n \to \infty}d_{VC}(\mathcal{H},\ell_{\mathscr{B}_{n}}) < \infty$. If
	\begin{linenomath}
		\begin{equation*}
			\left|\varepsilon_{n}^{\mathscr{B}_{n}} - \varepsilon_{n} \right| \to 0
		\end{equation*}
	\end{linenomath}
	with probability one as $n \to \infty$, then $\hat{\varepsilon}_{n}^{\mathscr{B}_{n}}$ is consistent.
\end{proposition}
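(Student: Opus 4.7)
}
The natural approach is a triangle-inequality split: for every $n$,
\begin{linenomath}
\begin{equation*}
\bigl|\hat{\varepsilon}_{n}^{\mathscr{B}_{n}} - \varepsilon_{n}\bigr|
\;\leq\;
\bigl|\hat{\varepsilon}_{n}^{\mathscr{B}_{n}} - \varepsilon_{n}^{\mathscr{B}_{n}}\bigr|
\;+\;
\bigl|\varepsilon_{n}^{\mathscr{B}_{n}} - \varepsilon_{n}\bigr|.
\end{equation*}
\end{linenomath}
The second term converges to zero almost surely by hypothesis, so I would focus the bulk of the argument on showing that the first term also vanishes with probability one.

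For that first term I would exploit the crucial feature of this setting: because $\mathscr{B}_{n}$ depends on the sample only through its size, the map $z \mapsto \ell_{\mathscr{B}_{n}}(z,\psi)$ is a \emph{deterministic} function for every fixed $\psi \in \mathcal{H}$. Thus $\hat{\varepsilon}_{n}^{\mathscr{B}_{n}} = L^{\mathscr{B}_{n}}_{n}(\psi_{n})$ is literally the plain resubstitution error of $\psi_{n}$ computed with the loss $\ell_{\mathscr{B}_{n}}$, while $\varepsilon_{n}^{\mathscr{B}_{n}} = L^{\mathscr{B}_{n}}(\psi_{n})$ is the corresponding population risk. The strategy is then to invoke Theorem \ref{theoremER} with $\ell$ replaced by $\ell_{\mathscr{B}_{n}}$: uniform boundedness of $d_{VC}(\mathcal{H},\ell_{\mathscr{B}_{n}})$ by some $D < \infty$ (which is exactly what $\limsup < \infty$ provides, for all $n$ large) yields a VC-type deviation inequality of the form $\mathbb{P}(\sup_{\psi\in\mathcal{H}} |L^{\mathscr{B}_{n}}_{n}(\psi)-L^{\mathscr{B}_{n}}(\psi)| > \varepsilon) \leq \Phi(n,\varepsilon,D)$ with $\Phi$ summable in $n$, and a Borel--Cantelli argument then gives the required almost-sure convergence even though the loss class changes with $n$.

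Before invoking Theorem \ref{theoremER} I must check that its hypotheses transfer to $\ell_{\mathscr{B}_{n}}$; in particular the moment condition \eqref{finite_moments_text} must hold uniformly in $n$ for the smoothed loss. This is where Jensen's inequality enters: since $\beta_{z,\psi,n}$ is a probability measure and $t \mapsto t^{\alpha}$ is convex for $\alpha > 1$,
\begin{linenomath}
\begin{equation*}
\ell_{\mathscr{B}_{n}}(z,\psi)^{\alpha}
\;=\;
\Bigl(\int_{\mathcal{Z}} \ell(z',\psi)\,d\beta_{z,\psi,n}(z')\Bigr)^{\alpha}
\;\leq\;
\int_{\mathcal{Z}} \ell(z',\psi)^{\alpha}\, d\beta_{z,\psi,n}(z'),
\end{equation*}
\end{linenomath}
which bounds $\ell_{\mathscr{B}_{n}}^{\alpha}$ by an averaged version of $\ell^{\alpha}$; this, together with the standing assumption \eqref{finite_moments_text}, should yield a uniform-in-$n$ moment bound on $\ell_{\mathscr{B}_{n}}$ suitable for the VC-based deviation inequality underlying Theorem \ref{theoremER}.

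The main obstacle is precisely the last point: making rigorous that the proof of Theorem \ref{theoremER} (for a fixed loss) adapts to a \emph{sequence} of losses whose VC dimensions are uniformly bounded but whose moments may vary in $n$. One clean way is to observe that the deviation bounds in the pseudo-dimension / VC framework depend on the loss only through its $L^{\alpha}$-envelope and its pseudo-dimension, both of which the argument above controls uniformly; a tail bound summable in $n$ then finishes the proof via Borel--Cantelli and the triangle inequality.
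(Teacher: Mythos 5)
Your proposal follows exactly the paper's route: the same triangle-inequality split $|\hat{\varepsilon}_{n}^{\mathscr{B}_{n}} - \varepsilon_{n}| \leq |\hat{\varepsilon}_{n}^{\mathscr{B}_{n}} - \varepsilon_{n}^{\mathscr{B}_{n}}| + |\varepsilon_{n}^{\mathscr{B}_{n}} - \varepsilon_{n}|$, with the first term handled by applying Theorem \ref{theoremER} to the deterministic loss $\ell_{\mathscr{B}_{n}}$ and the second vanishing by hypothesis. Your additional scrutiny of whether the moment condition \eqref{finite_moments_text} transfers to $\ell_{\mathscr{B}_{n}}$ uniformly in $n$, and whether the Borel--Cantelli argument survives a loss class that changes with $n$, is a legitimate technical point that the paper's own one-line proof does not spell out, so you are if anything more careful than the source.
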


As a consequence of Proposition \ref{prop_condExchange}, the next proposition gives a sufficient condition when $\beta_{z,\psi,n}$ is absolutely continuous wrt $\nu$.

\begin{proposition}
	\label{prop_symmetric}
	Fix a loss function $\ell$, a prediction rule $\Psi_{n}$, a collection $\mathscr{B}_{n}$ of probability measures for each $n \geq 1$, and let $\mathcal{H}$ be a hypothesis space with $\limsup_{n \to \infty}d_{VC}(\mathcal{H},\ell_{\mathscr{B}_{n}}) < \infty$. If
	\begin{itemize}
		\item[(a)] For all $z \in \mathcal{Z}, \psi \in \mathcal{H}$ and $n \geq 1$, $\beta_{z,\psi,n}$ is absolutely continuous with respect to the data generating measure $\nu$. Denote by $\rho_{z,\psi,n} = \frac{d\beta_{z,\psi,n}}{d\nu}$ a version of the respective Radon-Nikodym derivative;
		\item[(b)] For all $\psi \in \mathcal{H	}$, $n \geq 1$ and $z^{\prime} \in \mathcal{Z}$  it holds
		\begin{linenomath}
			\begin{equation*}
				\int_{\mathcal{Z}}  \rho_{z,\psi,n}(z^{\prime})  \ d\nu(z) = 1;
			\end{equation*}
		\end{linenomath}
	\end{itemize}
	then  $\hat{\varepsilon}_{n}^{\mathscr{B}_{n}}$ is consistent.
\end{proposition}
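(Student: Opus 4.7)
The plan is to invoke Proposition \ref{prop_condExchange}, so it suffices to show that
\begin{equation*}
\left|\varepsilon_{n}^{\mathscr{B}_{n}} - \varepsilon_{n}\right| \to 0
\end{equation*}
with probability one (in fact, I expect to prove the stronger pointwise identity $\varepsilon_{n}^{\mathscr{B}_{n}} = \varepsilon_{n}$ on the underlying probability space, from which the convergence is trivial). The finite VC-dimension hypothesis on $\ell_{\mathscr{B}_{n}}$ is used only to invoke Proposition \ref{prop_condExchange}; the heavy lifting is the identity between the two expected losses.

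First I would unfold the definitions: for any realization of $\psi_n$,
\begin{equation*}
\varepsilon_{n}^{\mathscr{B}_{n}} = \int_{\mathcal{Z}} \ell_{\mathscr{B}_{n}}(z,\psi_{n}) \, d\nu(z) = \int_{\mathcal{Z}} \!\! \int_{\mathcal{Z}} \ell(z^{\prime},\psi_{n}) \, d\beta_{z,\psi_{n},n}(z^{\prime}) \, d\nu(z).
\end{equation*}
Then I would use hypothesis (a) to rewrite $d\beta_{z,\psi_{n},n}(z^{\prime}) = \rho_{z,\psi_{n},n}(z^{\prime})\, d\nu(z^{\prime})$, so that the expression becomes the double integral of the non-negative integrand $\ell(z^{\prime},\psi_{n})\,\rho_{z,\psi_{n},n}(z^{\prime})$ against the product measure $\nu\otimes\nu$. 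Since every factor is non-negative, Tonelli's theorem applies without any integrability precondition, allowing me to swap the order of integration and pull $\ell(z^{\prime},\psi_{n})$ outside:
\begin{equation*}
\varepsilon_{n}^{\mathscr{B}_{n}} = \int_{\mathcal{Z}} \ell(z^{\prime},\psi_{n}) \left(\int_{\mathcal{Z}} \rho_{z,\psi_{n},n}(z^{\prime}) \, d\nu(z)\right) d\nu(z^{\prime}).
\end{equation*}
Hypothesis (b) then collapses the inner integral to $1$, yielding $\varepsilon_{n}^{\mathscr{B}_{n}} = \int_{\mathcal{Z}} \ell(z^{\prime},\psi_{n})\, d\nu(z^{\prime}) = L(\psi_{n}) = \varepsilon_{n}$, so $|\varepsilon_{n}^{\mathscr{B}_{n}} - \varepsilon_{n}| = 0$ almost surely and Proposition \ref{prop_condExchange} concludes the argument.

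The main subtlety, rather than an obstacle, is measurability: one must ensure the map $(z,z^{\prime}) \mapsto \rho_{z,\psi_{n},n}(z^{\prime})$ is jointly measurable so that Tonelli applies; this is a mild regularity requirement on the chosen version of the Radon--Nikodym derivative and is implicit in the statement of hypotheses (a)--(b) (both hypotheses involve integrals of $\rho_{z,\psi,n}$ in the variable $z$, which already presumes such measurability). Once this is noted, the proof requires no quantitative estimates and hinges entirely on the fact that condition (b) is exactly the Fubini-dual of the statement that each $\beta_{z,\psi,n}$ has total mass one: it forces the collection of densities to be a stochastic kernel whose reverse marginalization against $\nu$ returns $\nu$ itself.
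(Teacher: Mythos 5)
Your proposal is correct and follows essentially the same route as the paper: rewrite $\varepsilon_{n}^{\mathscr{B}_{n}}$ as a double integral via the Radon--Nikodym derivative from (a), swap the order of integration by Tonelli, and use (b) to collapse the inner integral, giving $\varepsilon_{n}^{\mathscr{B}_{n}} = \varepsilon_{n}$ identically so that Proposition \ref{prop_condExchange} applies. Your added remark on joint measurability of $(z,z^{\prime}) \mapsto \rho_{z,\psi,n}(z^{\prime})$ is a reasonable observation that the paper leaves implicit.
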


It follows from Proposition \ref{prop_symmetric} that if $\rho_{z,\psi,n}(z^{\prime})$ is a symmetric function of $(z,z^{\prime})$ with $\nu$-probability one, then $\hat{\varepsilon}_{n}^{\mathscr{B}_{n}}$ is consistent.

\begin{corollary}
	\label{corr_symmetric}
	Fix a loss function $\ell$, a prediction rule $\Psi_{n}$, a collection $\mathscr{B}_{n}$ of probability measures for each $n \geq 1$, and let $\mathcal{H}$ be a hypothesis space with $\limsup_{n \to \infty}d_{VC}(\mathcal{H},\ell_{\mathscr{B}_{n}}) < \infty$. If (a) in Proposition \ref{prop_symmetric} holds and
	\begin{linenomath}
		\begin{equation}
			\label{cond-prop-symm}
			(\nu \times \nu)\left(\{(z,z^{\prime}) \in \mathcal{Z}^{2}: \rho_{z,\psi,n}(z^{\prime}) = \rho_{z^\prime,\psi,n}(z)\}\right) = 1
		\end{equation}
	\end{linenomath}
	for all $\psi \in \mathcal{H}$ and all $n \geq 1$, then  $\hat{\varepsilon}_{n}^{\mathscr{B}_{n}}$ is consistent.
\end{corollary}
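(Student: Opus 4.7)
The plan is direct: verify that hypothesis (b) of Proposition \ref{prop_symmetric} follows from the symmetry condition \eqref{cond-prop-symm}, after which Proposition \ref{prop_symmetric} delivers consistency immediately. I would fix $\psi \in \mathcal{H}$ and $n \geq 1$, pick a jointly measurable version of $(z,z') \mapsto \rho_{z,\psi,n}(z')$ (assumption (a) together with mild regularity on the smoothing family guarantees this), and appeal to Fubini's theorem: since the symmetry relation $\rho_{z,\psi,n}(z') = \rho_{z',\psi,n}(z)$ holds on a set of full $\nu \times \nu$ measure, it must hold for $\nu$-a.e.\ $z$, for $\nu$-almost every $z' \in \mathcal{Z}$.

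For each such ``good'' $z'$, I would compute
\begin{linenomath}
\begin{equation*}
\int_{\mathcal{Z}} \rho_{z,\psi,n}(z') \, d\nu(z) \;=\; \int_{\mathcal{Z}} \rho_{z',\psi,n}(z) \, d\nu(z) \;=\; \beta_{z',\psi,n}(\mathcal{Z}) \;=\; 1,
\end{equation*}
\end{linenomath}
where the first equality uses the symmetry, the second is the defining property of the Radon--Nikodym derivative from (a), and the last uses that $\beta_{z',\psi,n}$ is a probability measure. This verifies hypothesis (b) of Proposition \ref{prop_symmetric} for $\nu$-almost every $z'$, which is the form in which (b) is actually used in the proof of that proposition: it enters only through integration against $\nu$ when computing $L^{\mathscr{B}_n}(\psi) = \int\!\int \ell(z',\psi)\,\rho_{z,\psi,n}(z')\,d\nu(z')\,d\nu(z)$ and swapping the order of integration to recognize the inner integral as $1$ and the result as $L(\psi)$.

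The main obstacle is merely bookkeeping rather than substantive: ensuring a jointly measurable selection of the Radon--Nikodym density so that Fubini applies, and checking that the exceptional $\nu$-null sets do not interfere with the integrability needed to identify $\varepsilon_n^{\mathscr{B}_n}$ with $\varepsilon_n$ under the moment assumption \eqref{finite_moments_text}. Both are standard once condition (a) is in force, so the corollary is essentially a one-line reduction to Proposition \ref{prop_symmetric}.
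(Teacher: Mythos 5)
Your proposal is correct and follows essentially the same route as the paper: both use the $\nu\times\nu$-a.e.\ symmetry of $\rho$ together with Fubini and the fact that $\int_{\mathcal{Z}}\rho_{z',\psi,n}(z)\,d\nu(z)=\beta_{z',\psi,n}(\mathcal{Z})=1$ to conclude $L^{\mathscr{B}_n}(\psi)=L(\psi)$; the paper merely inlines this computation instead of phrasing it as a verification of hypothesis (b) of Proposition \ref{prop_symmetric}. Your extra care about the almost-everywhere caveat and joint measurability is a harmless (indeed welcome) refinement of the same argument.
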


Corollary \ref{corr_symmetric} can be applied to show the consistency of Gaussian bolstering when the covariance matrix of the Gaussian distributions is the same in all sample points.

\begin{example}[Gaussian bolstering]
	\normalfont \label{gaussianBols_symmetric}
	
	Assume that $X$ is absolutely continuous with respect to the Lebesgue measure in $\mathbb{R}^{d}$ and consider the Gaussian bolstering error estimator 
	\begin{linenomath}
		\begin{align}
			\label{gaus_br}
			\hat{\varepsilon}_{n}^{gbr} &\coloneqq \frac{1}{n} \sum_{i=1}^{n} \int_{\mathbb{R}^{d}} \ell((x,Y_{i}),\psi) \ p_{X_{i},\psi_{n},n}(x) \ dx
		\end{align}
	\end{linenomath}
	in which $p_{X_{i},\psi_{n},n}$ is the density function of a Gaussian distribution with mean $X_{i}$ and covariance matrix $K_{n,\psi_{n}}$, which may depend on $n$ and $\psi_n$. In this case, $\mathscr{B}_{n} = \{\beta_{x,y,\psi,n}: (x,y) \in \mathcal{Z}, \psi \in \mathcal{H}\}$ is a collection of measures $\beta_{x,y,\psi,n} = \mu_{x,\psi,n} \delta_{y}$ in which $\mu_{x,\psi,n}$ is a Gaussian measure with mean $x$ and covariance matrix $K_{n,\psi}$.
	
	The Gaussian bolstering error estimator \eqref{gaus_br} is consistent if $$\limsup_{n} d_{VC}(\mathcal{H},\ell_{\mathscr{B}_{n}}) < \infty,$$ a condition which holds, for example, for linear regression under the quadratic loss and in classification problems under binary loss functions if
	\begin{linenomath}
		\begin{equation*}
			\limsup_{n \to \infty} |\ell_{\mathscr{B}_{n}}(z,\psi) - \ell(z,\psi)| < 1/2
		\end{equation*}
	\end{linenomath}
	for all $z \in \mathcal{Z}$ and $\psi \in \mathcal{H}$ (cf. Appendix \ref{AppVClQ}).
	
	\begin{proposition}
		\label{prop_gaus_br}
		If $\limsup_{n \to \infty}d_{VC}(\mathcal{H},\ell_{\mathscr{B}_{n}}) < \infty$, then the Gaussian bolstering error estimator $\hat{\varepsilon}_{n}^{gbr}$ is consistent.
	\end{proposition}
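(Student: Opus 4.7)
The plan is to invoke Proposition \ref{prop_condExchange}. Since the hypothesis $\limsup_{n} d_{VC}(\mathcal{H},\ell_{\mathscr{B}_{n}}) < \infty$ matches exactly the VC hypothesis of that proposition, consistency of $\hat{\varepsilon}_{n}^{gbr}$ reduces to showing that
\begin{equation*}
\left|\varepsilon_{n}^{\mathscr{B}_{n}} - \varepsilon_{n}\right| \;=\; \left|\int_{\mathcal{Z}} \!\int_{\mathbb{R}^{d}} \bigl(\ell((x',y),\psi_{n}) - \ell((x,y),\psi_{n})\bigr)\, p_{x,\psi_{n},n}(x')\, dx'\, d\nu(x,y)\right| \to 0
\end{equation*}
with probability one, where I have used the product form $\beta_{x,y,\psi_{n},n} = \mu_{x,\psi_{n},n}\,\delta_{y}$ of Gaussian $X$-bolstering and Fubini's theorem to expand $\varepsilon_{n}^{\mathscr{B}_{n}}$.

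My first main step would be to exploit the symmetry of the Gaussian density, $p_{x,\psi_{n},n}(x') = p_{x',\psi_{n},n}(x)$, together with the absolute continuity of the marginal of $X$ with Lebesgue density $f_{X}$, to swap the roles of $x$ and $x'$ in the double integral. After interchanging orders of integration via Fubini, $\varepsilon_{n}^{\mathscr{B}_{n}}$ can be recast as a convolution of $\varepsilon_{n}$ with the Gaussian kernel in the $X$-direction; equivalently, $\varepsilon_{n}^{\mathscr{B}_{n}} = \mathbb{E}_{\xi}\bigl[\int \ell((x+\xi,y),\psi_{n})\, d\nu(x,y)\bigr]$ with $\xi\sim \mathcal{N}(0,K_{n,\psi_{n}})$, a representation that isolates the effect of the Gaussian perturbation.

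The second step is to show that this convolution approaches $\varepsilon_{n}$. This follows from the concentration of the bolstering kernel at its mean (which is intrinsic to the setting of Gaussian bolstering) together with the continuity of $\ell$ in its $x$-argument on the support of $f_{X}$, by a dominated convergence argument in which the dominating function is provided by the uniform moment bound \eqref{finite_moments_text}, $\sup_{\psi \in \mathcal{H}} L^{\alpha}(\psi) < \infty$.

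The main obstacle is the dependence on the random hypothesis $\psi_{n}$ and the possibly random covariance $K_{n,\psi_{n}}$, which prevents a direct pointwise limit argument. The crucial tool is precisely the uniform $\alpha$-moment hypothesis \eqref{finite_moments_text}: it furnishes an integrable envelope for the family $\{\ell(\cdot,\psi) : \psi \in \mathcal{H}\}$ under $\nu$, independent of the realization of $\psi_{n}$, which legitimizes a pathwise dominated convergence step uniformly in $\psi_{n}$. Combining this with the reduction via Proposition \ref{prop_condExchange} delivers the consistency of $\hat{\varepsilon}_{n}^{gbr}$.
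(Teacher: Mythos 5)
Your reduction to Proposition \ref{prop_condExchange} and your use of Fubini together with the kernel symmetry $p_{x,\psi,n}(x') = p_{x',\psi,n}(x)$ are the right opening moves, but the second half of the argument proves a different (weaker) statement. You establish $\varepsilon_{n}^{\mathscr{B}_{n}} \to \varepsilon_{n}$ by letting the bolstering kernel concentrate at its mean, i.e.\ by assuming $K_{n,\psi_{n}} \to 0$. That hypothesis appears nowhere in Proposition \ref{prop_gaus_br}, and the paper stresses immediately after the proposition that its whole point is consistency \emph{without} the kernel shrinking to zero (the advertised improvement over \cite{ghane2022generalized}). Your step also needs continuity of $\ell((x,y),\psi)$ in $x$, which is not assumed: the surrounding example explicitly covers classification under binary losses, where $x \mapsto \ell(y,\psi(x))$ is discontinuous. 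Finally, the dominating function you invoke, the uniform moment bound \eqref{finite_moments_text}, controls $\int \ell^{\alpha}(\cdot,\psi)\,d\nu$ but not the translated integrals $\int \ell((x+\xi,y),\psi)\,d\nu(x,y)$ uniformly over the perturbation $\xi$, so the dominated convergence step is not justified as written.

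The paper closes the gap by pushing the symmetry observation to an exact identity rather than a limit. It invokes Corollary \ref{corr_symmetric}: absolute continuity of $X$ gives hypothesis (a), and the Gaussian symmetry $p_{x,\psi,n}(x') = p_{x',\psi,n}(x)$ gives the symmetry condition \eqref{cond-prop-symm} on the Radon--Nikodym derivatives, whence $L^{\mathscr{B}_{n}}(\psi) = L(\psi)$ for every $\psi$ and every $n$. Thus $\varepsilon_{n}^{\mathscr{B}_{n}} = \varepsilon_{n}$ identically and the quantity you are trying to send to zero vanishes exactly, with no concentration, continuity, or domination argument needed. To repair your proof, replace your second and third steps by verifying conditions (a) and (b) of Proposition \ref{prop_symmetric} (or the symmetry condition of Corollary \ref{corr_symmetric}) for the measures $\beta_{x,y,\psi,n} = \mu_{x,\psi,n}\delta_{y}$.
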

	
	Proposition \ref{prop_gaus_br} establishes the consistency of the Gaussian bolstering error estimator in classification and regression scenarios in which the kernel does not necessarily converge to zero. This is an improvement from the results in \cite{ghane2022generalized} which required the kernel to converge to zero. This proposition also holds for $XY$-Gaussian bolstering.
	
	\hfill$\blacksquare$
\end{example}

\subsection{Sufficient conditions for consistency: twice differentiable loss functions}
\label{SubSec_2diff}

More specific sufficient conditions for the consistency of generalized resubstitution errors may be obtained under the assumption that $\ell((x,y),\psi)$ is twice differentiable in $(x,y)$ and its second derivatives are uniformly bounded on $\mathcal{X} \times \mathcal{Y}$ and $\mathcal{H}$. Important examples of this case consider $\ell$ as the quadratic loss function and $\psi(x)$ as twice differentiable functions.

Denote by $Z_{z,\psi,S_{n}}$ a random variable with a probability law $\beta_{z,\psi,S_{n}}$ for $z \in \mathcal{Z}$, $\psi \in \mathcal{H}$ and $S_{n} \in \mathcal{Z}^{n}$. If $\ell(z,\psi)$ is twice differentiable in $z$ and its second derivatives are uniformly bounded in $\mathcal{Z}$ and $\mathcal{H}$, then $\hat{\varepsilon}_{n}^{\mathscr{B}}$ is consistent if the expectation of $Z_{z,\psi,S_{n}}$ is $z$ and if the variance of its coordinates converges to zero as $n$ increases with probability one over the possible samples. This result is a consequence of Corollary \ref{cor_suf3}. Recall that $Z = (X,Y)$ is a random vector with $d + m$ coordinates.

\begin{proposition}
	\label{Prop_2der}
	Fix a loss function $\ell$, a prediction rule $\Psi_{n}$, a collection $\mathscr{B}(S_{n})$ of probability distributions for each sample $S_{n} \in \mathcal{Z}^{n}$, and let $\mathcal{H}$ be a hypothesis space with $d_{VC}(\mathcal{H},\ell) < \infty$. If
	\begin{itemize}
		\item[(a)] $\ell(\cdot,\psi) \in C^{2}(\mathcal{Z})$ for all $\psi \in \mathcal{H}$ and there exists a constant $C_{2}$ such that
		\begin{linenomath}
			\begin{equation}
				\label{bounded_dev2}
				\sup\limits_{z \in \mathcal{Z},\psi \in \mathcal{H}} \left|\frac{\partial^{2}\ell}{\partial z_{i} \partial z_{j}}(z,\psi)\right| < C_{2}
			\end{equation}
		\end{linenomath}
		for all $i,j = 1,\dots,d + m$;
		\item[(b)] For all $z \in \mathcal{Z}, \psi \in \mathcal{H}$ and $S_{n} \in \mathcal{Z}_{n}$ it holds $\mathbb{E}(Z_{z,\psi,S_{n}}) = z$ and \[\lim\limits_{n \to \infty} \sup\limits_{z \in \mathcal{Z},\psi \in \mathcal{H}} Var\left([Z_{z,\psi,S_{n}}]_{j}\right) = 0\] for all $j = 1,\dots,d + m$ with probability one;
	\end{itemize}
	then  $\hat{\varepsilon}_{n}^{\mathscr{B}}$ is consistent.
\end{proposition}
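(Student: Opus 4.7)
The plan is to obtain the conclusion from Corollary \ref{cor_suf3} by showing that the hypothesis of that corollary, namely uniform convergence of $\ell_{\mathscr{B}}(z,\psi)$ to $\ell(z,\psi)$ over $\mathcal{Z} \times \mathcal{H}$ with probability one, is forced by assumptions (a) and (b). The mechanism is a second-order Taylor expansion of $\ell(\cdot,\psi)$ around the base point $z$, evaluated at the random point $Z_{z,\psi,S_{n}}$ and then integrated against $\beta_{z,\psi,S_{n}}$.

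First I would fix $z \in \mathcal{Z}$ and $\psi \in \mathcal{H}$ and write, for $z^{\prime} \in \mathcal{Z}$, the Taylor expansion
\begin{equation*}
	\ell(z^{\prime},\psi) = \ell(z,\psi) + \nabla_{z}\ell(z,\psi)^{\top}(z^{\prime}-z) + R(z^{\prime},z,\psi),
\end{equation*}
where $R(z^{\prime},z,\psi) = \tfrac{1}{2}(z^{\prime}-z)^{\top} H_{\psi}(\xi)(z^{\prime}-z)$ for some $\xi$ on the segment joining $z$ and $z^{\prime}$, and $H_{\psi}$ is the Hessian of $\ell(\cdot,\psi)$. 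Integrating both sides with respect to $\beta_{z,\psi,S_{n}}$ and using $\mathbb{E}(Z_{z,\psi,S_{n}}) = z$ from (b) to kill the first-order term gives
\begin{equation*}
	\ell_{\mathscr{B}}(z,\psi) - \ell(z,\psi) = \int_{\mathcal{Z}} R(z^{\prime},z,\psi)\,d\beta_{z,\psi,S_{n}}(z^{\prime}).
\end{equation*}

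Next, I would bound this remainder using (a) and Cauchy--Schwarz. By the uniform second-derivative bound $C_{2}$ in \eqref{bounded_dev2},
\begin{equation*}
	|R(z^{\prime},z,\psi)| \leq \frac{C_{2}}{2} \sum_{i,j=1}^{d+m} |z^{\prime}_{i}-z_{i}|\,|z^{\prime}_{j}-z_{j}|,
\end{equation*}
so integrating and applying Cauchy--Schwarz to each summand yields
\begin{equation*}
	|\ell_{\mathscr{B}}(z,\psi)-\ell(z,\psi)| \leq \frac{C_{2}}{2}\sum_{i,j=1}^{d+m} \sqrt{\mathrm{Var}([Z_{z,\psi,S_{n}}]_{i})\,\mathrm{Var}([Z_{z,\psi,S_{n}}]_{j})},
\end{equation*}
where the $\mathrm{Var}$ equals $\mathbb{E}[([Z_{z,\psi,S_{n}}]_{j}-z_{j})^{2}]$ because the mean is $z$. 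Taking $\sup$ over $z \in \mathcal{Z}$ and $\psi \in \mathcal{H}$, the right side is controlled by $\tfrac{C_{2}(d+m)^{2}}{2}\max_{j}\sup_{z,\psi}\mathrm{Var}([Z_{z,\psi,S_{n}}]_{j})$, which by (b) tends to zero as $n \to \infty$ with probability one.

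Hence $\ell_{\mathscr{B}}(z,\psi)$ converges to $\ell(z,\psi)$ uniformly over $\mathcal{Z}\times\mathcal{H}$ with probability one, and since $d_{VC}(\mathcal{H},\ell)<\infty$ by assumption, Corollary \ref{cor_suf3} delivers consistency of $\hat{\varepsilon}_{n}^{\mathscr{B}}$. The main obstacle is a bookkeeping one: being careful that the Taylor remainder bound is uniform in $(z,\psi)$ so that, combined with the uniform-in-$(z,\psi)$ variance hypothesis in (b), the resulting convergence is genuinely uniform (and not merely pointwise), which is exactly what Corollary \ref{cor_suf3} requires. The intermediate point $\xi$ in the Lagrange remainder is harmless because only the Hessian bound $C_{2}$ enters the estimate.
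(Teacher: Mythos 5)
Your proposal is correct and follows essentially the same route as the paper's proof: a second-order Taylor expansion of $\ell(\cdot,\psi)$ about $z$ integrated against $\beta_{z,\psi,S_{n}}$, with the mean condition in (b) annihilating the first-order term, the uniform Hessian bound $C_{2}$ together with Cauchy--Schwarz controlling the remainder by sums of $\sqrt{\mathrm{Var}([Z_{z,\psi,S_{n}}]_{i})\mathrm{Var}([Z_{z,\psi,S_{n}}]_{j})}$, and Corollary \ref{cor_suf3} delivering consistency. No gaps.
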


We extend Theorem 2 of \cite{ghane2022generalized} for Gaussian bolstering for regression under the quadratic loss function when $\psi$ is twice differentiable and has derivatives uniformly bounded on $\mathcal{X}$ and $\mathcal{H}$. In this case, the Gaussian bolstering error estimator reduces to
\begin{linenomath}
	\begin{align}
		\label{gaus_error2}
		\hat{\varepsilon}_{n}^{gbr} &\coloneqq \frac{1}{n} \sum_{i=1}^{n} \int_{\mathbb{R}^{d}} \left[\psi_{n}(x) - Y_{i}\right]^{2} p_{X_{i},Y_{i},\psi_n,S_{n}}(x) \ dx,
	\end{align}
\end{linenomath}
in which $p_{X_{i},Y_{i},\psi_n,S_{n}}$ is the density function of a Gaussian distribution with mean $X_{i}$ and covariance matrix $K_{X_{i},Y_{i},\psi_n,S_{n}}$. The next result presents a condition for the Gaussian bolstering error estimator to be consistent when $\psi$ is twice differentiable. The result also holds for $XY$-Gaussian bolstering.

\begin{proposition}
	\label{prop_gaus_brR}
	Fix a prediction rule $\Psi_{n}$ and a collection $\mathscr{B}(S_{n})$ of probability distributions for each sample $S_{n} \in \mathcal{Z}^{n}$. Let $\ell$ be the quadratic loss function, let $\mathcal{Y}$ be a compact set of $\mathbb{R}$ and let $\mathcal{H} \subset C^{2}(\mathcal{X},\mathcal{Y})$ be a set of twice differentiable functions with $d_{VC}(\mathcal{H},\ell) < \infty$. If
	\begin{linenomath}
		\begin{align}
			\label{cond_var_zero}
			\lim\limits_{n \to \infty} \sup\limits_{x \in \mathcal{X},y \in \mathcal{Y}, \psi \in \mathcal{H}} [K_{x,y,\psi,S_{n}}]_{j,j} = 0,
		\end{align}
	\end{linenomath}
	with probability one, and
	\begin{linenomath}
		\begin{align*}
			\sup\limits_{x \in \mathcal{X}, \psi \in \mathcal{H}} \left|\frac{\partial\psi}{\partial x_{i}}(x)\right| < \infty & & \text{ and } & & \sup\limits_{x \in \mathcal{X}, \psi \in \mathcal{H}} \left|\frac{\partial^{2}\psi}{\partial x_{i} \partial x_{j}}(x)\right| < \infty
		\end{align*}
	\end{linenomath}
	for all $i,j = 1,\dots,d$, then the Gaussian bolstering error estimator $\hat{\varepsilon}_{n}^{gbr}$ defined in \eqref{gaus_error2} is consistent.
\end{proposition}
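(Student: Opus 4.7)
The plan is to derive this as a direct application of Proposition \ref{Prop_2der} to the Gaussian bolstering collection, in which $\beta_{x,y,\psi_n,S_n} = \mu_{x,\psi_n,S_n}\,\delta_y$ and $\mu_{x,\psi_n,S_n}$ is the Gaussian law on $\mathbb{R}^d$ with mean $x$ and covariance $K_{x,y,\psi_n,S_n}$. I will check the two hypotheses of that proposition in turn. The finite pseudo-dimension requirement $d_{VC}(\mathcal{H},\ell) < \infty$ is assumed in the statement, so the only work is in verifying (a) and (b).

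For hypothesis (a), I would check the uniform bound \eqref{bounded_dev2} on the second partial derivatives of $\ell(z,\psi) = (y-\psi(x))^2$ with respect to $z = (x_1,\dots,x_d,y)$. A direct computation yields
\begin{equation*}
\frac{\partial^2\ell}{\partial y^2} = 2,\qquad \frac{\partial^2\ell}{\partial x_i \partial y} = -2\,\frac{\partial \psi}{\partial x_i}(x),\qquad \frac{\partial^2\ell}{\partial x_i \partial x_j} = 2\,\frac{\partial \psi}{\partial x_i}(x)\frac{\partial \psi}{\partial x_j}(x) - 2\bigl(y-\psi(x)\bigr)\frac{\partial^2 \psi}{\partial x_i \partial x_j}(x).
\end{equation*}
The first two are bounded uniformly in $(z,\psi)$ by the assumed bound on $\partial\psi/\partial x_i$. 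In the third, the product-of-first-derivatives term is bounded by the same hypothesis, while the remaining term is bounded because $\mathcal{Y}$ is compact, so both $y$ and $\psi(x) \in \mathcal{Y}$ lie in a bounded subset of $\mathbb{R}$, and $\partial^2\psi/\partial x_i\partial x_j$ is uniformly bounded by hypothesis.

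For hypothesis (b), a random variable $Z_{z,\psi,S_n}$ drawn from $\beta_{z,\psi,S_n}$ has the same law as $(W,y)$ with $W \sim \mathcal{N}(x, K_{x,y,\psi,S_n})$. Hence $\mathbb{E}(Z_{z,\psi,S_n}) = (x,y) = z$, the variance of the $j$-th coordinate equals $[K_{x,y,\psi,S_n}]_{j,j}$ for $j \leq d$ and $0$ for $j = d+1$, and \eqref{cond_var_zero} is precisely the uniform vanishing of these variances with probability one. Both hypotheses of Proposition \ref{Prop_2der} are therefore satisfied, and the consistency of $\hat{\varepsilon}_n^{gbr}$ follows.

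The only delicate step is the cross term $-2(y-\psi(x))\,\partial^2\psi/\partial x_i \partial x_j$ in the Hessian of $\ell$: without compactness of $\mathcal{Y}$ the factor $y-\psi(x)$ need not be bounded, and \eqref{bounded_dev2} could fail even under the curvature hypotheses on $\psi$, which is the reason compactness of $\mathcal{Y}$ appears explicitly in the statement. For the $XY$-Gaussian bolstering extension, no new ideas are required: the bolstering law is then $\mathcal{N}((x,y),\Sigma_{x,y,\psi,S_n})$, the mean is still $z$, and \eqref{cond_var_zero} still delivers uniform vanishing of every coordinate variance, so Proposition \ref{Prop_2der} applies verbatim with the same verification of (a).
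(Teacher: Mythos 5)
Your proposal is correct and follows essentially the same route as the paper: both reduce the claim to Proposition \ref{Prop_2der} by viewing the bolstering measure as a (degenerate) Gaussian with mean $z$, verify condition (a) via the same explicit Hessian computation for the quadratic loss (using compactness of $\mathcal{Y}$ to control the $2(\psi(x)-y)\,\partial^2\psi/\partial x_i\partial x_j$ term), and verify condition (b) by identifying the coordinate variances with the diagonal entries of $K_{x,y,\psi,S_n}$, which vanish uniformly by \eqref{cond_var_zero}.
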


Although Proposition \ref{prop_gaus_brR} was stated for Gaussian bolstering, it holds for any bolstered error estimator by considering a probability density $p_{x,y,\psi,S_{n}}$ with mean $x$ and covariance matrix $K_{x,y,\psi,S_{n}}$ that converges to zero with probability one when $n$ increases. 

\section{Estimation of the kernel in Gaussian bolstering}
\label{SecHeu}

The results in Section \ref{SecError} outline a range of instances in which the Gaussian bolstering error estimator is consistent, allowing room to choose the covariance matrix (kernel) based on the training data $S_{n}$. In this section, we propose estimators for the Gaussian bolstering kernel based on the method of moments and on the maximization of a pseudo-likelihood function.  The proposed estimators can also be used to estimate the kernel for Gaussian bolstering classification error estimation and in $XY$-Gaussian bolstering by applying them to the vector $(X,Y)$ instead of $X$. Throughout this section, we consider that the sample $S_{n}$ is fixed and that $\mathcal{X} = \mathbb{R}^{d}$.

\subsection{Method of moments}

We assume that the kernel has the form $K_{x,y,\psi,S_{n}} = \sigma_{S_{n}}^{2} \Sigma$ for a known fixed matrix $\Sigma$, and propose a method of moments estimator for $\sigma_{S_{n}}$. We also consider the case in which $K_{x,y,\psi,S_{n}} = \sigma_{y,S_{n}}^{2} \Sigma$ and estimate $\sigma_{y,S_{n}}$ for each class $y \in \mathcal{Y}$ in a classification problem. The method of moments estimator is obtained following ideas analogous to that of \cite{braga2004bolstered}.

For $x,x^{\prime} \in \mathcal{X}$ denote by
\begin{linenomath}
	\begin{align*}
		\delta(x,x^{\prime}) = \sqrt{(x - x^{\prime})^{T} \ \Sigma^{-1} \ (x - x^{\prime})}
	\end{align*}
\end{linenomath}
the \textit{Mahalanobis distance} from $x$ to $x^{\prime}$ and let
\begin{linenomath}
	\begin{align*}
		\delta(x) = \min\limits_{i} \ \{\delta(x,X_{i})\} & & \delta_{y}(x) = \min\limits_{i: Y_{i} = y} \  \{\delta(x,X_{i})\}
	\end{align*}
\end{linenomath}
be the minimum distance from $x$ to any $X_{i}$ and to an $X_{i}$ such that $Y_{i} = y \in \mathcal{Y}$, respectively. If there is no point in the sample with $Y_{i} = y$ we denote $\delta_{y}(x) = \infty$. When $\Sigma = I_d$ these distances reduce to the respective Euclidean distance.

On the one hand, the error $L(\psi_{n})$ can be interpreted as the expected loss of $\psi_{n}$ on a test point $(X,Y)$ generated by $\nu$. The random variable representing the distances above of this test point to the sample has a marginal and conditional cumulative distribution given $Y = y$, respectively,
\begin{linenomath}
	\begin{align*}
		D(\delta) = \nu\left(\delta(X) \leq \delta\right) & & \text{ and } & & D_{y}(\delta) = \frac{\nu\left(\delta_{y}(X) \leq \delta,Y = y\right)}{\nu(Y = y)}
	\end{align*}
\end{linenomath}
for $\delta > 0$ and $y \in \mathcal{Y}$ with $\nu(Y = y) > 0$.

On the other hand, $\hat{\varepsilon}_{n}^{gbr} = L_{n}^{\mathscr{B}}(\psi_{n})$ can be interpreted as the expected loss of $\psi_{n}$ on a test point $(\hat{X},\hat{Y})$ generated by the empirical law $\nu_{n}$ with
\begin{linenomath}
	\begin{equation*}
		\nu_{n} = \frac{1}{n} \sum_{i=1}^{n} \mu_{X_{i},Y_{i},S_{n}} \delta_{Y_{i}}
	\end{equation*}
\end{linenomath}
in which $\mu_{X_{i},Y_{i},S_{n}}$ is a Gaussian measure with mean $X_{i}$ and covariance matrix of form $\sigma_{Y_{i},S_{n}}^{2} \Sigma$, in which $\sigma_{Y_{i},S_{n}}$ may depend on $Y_{i}$. The random variable representing the above distances of this test point to the sample has marginal and conditional cumulative distribution given $\hat{Y} = y$, respectively,
\begin{linenomath}
	\begin{align*}
		\hat{D}(\delta) = \nu_{n}\left(\delta(\hat{X}) \leq \delta\right) & & \text{ and } & & \hat{D}_{y}(\delta) = \frac{\nu_{n}\left(\delta_{y}(\hat{X}) \leq \delta,\hat{Y} = y\right)}{\nu_{n}(\hat{Y} = y)}
	\end{align*}
\end{linenomath}
for $\delta > 0$ and $y = Y_{i}$ for some $i = 1,\dots,n$.

Method of moments estimators can be obtained for the free parameter $\sigma_{S_{n}}^{2}$ by assuming it does not depend on $y$ and then equating the sample mean of $D$ with the mean of $\hat{D}$. In classification problems, we can also equate the sample mean of $D_{y}$ and the mean of $\hat{D}_{y}$ and solve for a $\sigma_{y,S_{n}}^{2}$ for each class $y$ to obtain estimators that depend on the class $y$. 

The sample mean of $D$ and the sample mean of $D_{y}$ are, respectively,
\begin{linenomath}
	\begin{align}
		\label{sample_distance_mean}
		\bar{\delta}_{S_n} &= \frac{1}{n} \sum_{i=1}^{n} \min_{j \neq i} \ \{\delta(X_{i},X_{j})\} \text{ and} \\ \bar{\delta}_{y,S_n} &= \frac{1}{\sum_{i=1}^{n} \mathds{1}_{Y_{i} = y}} \sum_{i=1}^{n} \min\limits_{j \neq i,Y_{j} = Y_{i}} \ \{\delta(X_{i},X_{j})\} \ \mathds{1}_{Y_{i} = y},
	\end{align}
\end{linenomath}
that are the mean distance of the points in $S_{n}$ and the points in $S_{n}$ with $Y_{i}$ = y, respectively, to the respective sample $S_{n}\setminus\{X_{i},Y_{i}\}$. The next lemma presents the mean of $\delta(\hat{X})$ and of $\delta_{y}(\hat{X})|\hat{Y} = y$, that is the mean of $\hat{D}$ and $\hat{D}_{y}$, respectively. Denote by $p_{i,S_{n}}$ and $p_{i,y,S_{n}}$ the densities of Gaussian distributions with mean $X_{i}$ and covariance matrices $\sigma_{S_{n}}^{2}\Sigma$ and $\sigma_{y,S_{n}}^{2}\Sigma$, respectively.

\begin{lemma}
	\label{lemma_mean_distance}
	For all $y \in \mathcal{Y}$ such that $Y_{i} = y$ for some $i = 1,\dots,n$,
	\begin{linenomath}
		\begin{align*}
			\mathbb{E}\left[\delta_{y}(\hat{X})|\hat{Y} = y\right] &=  \frac{1}{\sum_{i=1}^{n} \mathds{1}_{Y_{i} = y}} \sum_{i: Y_{i} = y} \sum_{j: Y_{j} = y} \int_{\sX} \delta(x,X_{j}) \ p_{i,y,S_{n}}(x) \ \mathds{1}_{\delta(x) = \delta(x,X_{j})} \ dx \\
			\mathbb{E}\left[\delta(\hat{X})\right] &= \frac{1}{n} \sum_{i,j=1}^{n} \int_{\sX} \delta(x,X_{j}) \ p_{i,S_{n}}(x) \ \mathds{1}_{\delta(x) = \delta(x,X_{j})} \ dx.			
		\end{align*}
	\end{linenomath}
\end{lemma}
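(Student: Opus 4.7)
The plan is to evaluate each expectation as an integral of a piecewise-smooth distance function against the mixture density that $\hat{X}$ inherits from $\nu_{n}$ (marginally, or conditionally on $\hat{Y} = y$), and then to split the minimum defining $\delta$ and $\delta_{y}$ via indicators tied to the nearest-neighbor identity of $\hat X$.

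First I would identify the relevant densities. Since $\nu_{n} = \frac{1}{n}\sum_{i=1}^{n} \mu_{X_{i},Y_{i},S_{n}}\delta_{Y_{i}}$ is a finite mixture of product measures, the marginal density of $\hat{X}$ is $\frac{1}{n}\sum_{i=1}^{n} p_{i,S_{n}}$, while Bayes' rule gives the conditional density of $\hat{X}$ given $\hat{Y} = y$ as $\bigl(\sum_{i} \mathds{1}_{Y_{i}=y}\bigr)^{-1} \sum_{i: Y_{i}=y} p_{i,y,S_{n}}$. Writing each expectation as the integral of $\delta_{y}(x)$ or $\delta(x)$ against the appropriate density and pulling the finite sum outside the integral reduces both identities to the same type of claim: that $\delta$ (respectively $\delta_{y}$) integrates as a sum over $j$ of $\delta(x, X_{j})$ times an indicator picking out the minimizing sample point.

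The content of this claim is the decomposition of the minimum. For $x$ outside the equidistance locus one has $\delta(x) = \sum_{j=1}^{n} \delta(x, X_{j}) \mathds{1}_{\delta(x) = \delta(x, X_{j})}$, since exactly one indicator is nonzero and carries the value of the minimum; analogously for $\delta_{y}$ with the sum restricted to $\{j : Y_{j} = y\}$. The only possible obstruction is the set of $x$ for which two distinct sample points $X_{j_{1}}, X_{j_{2}}$ are equidistant. Expanding $(x - X_{j_{1}})^{T}\Sigma^{-1}(x - X_{j_{1}}) = (x - X_{j_{2}})^{T}\Sigma^{-1}(x - X_{j_{2}})$ yields a linear equation in $x$, so this equidistance locus is contained in a finite union of affine hyperplanes in $\mathbb{R}^{d}$ and therefore has zero Lebesgue measure. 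Because each Gaussian density $p_{i,S_{n}}$ and $p_{i,y,S_{n}}$ is absolutely continuous with respect to Lebesgue measure, the locus is also negligible under these densities, so the decomposition holds almost everywhere under the integrals at hand. Tonelli's theorem permits exchanging the nonnegative sum and the integral, giving the two stated formulas.

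The only non-bookkeeping step is the Voronoi-boundary measure-zero argument above; everything else is a direct application of the law of total expectation, linearity of integration, and Fubini/Tonelli. I expect no further obstacles.
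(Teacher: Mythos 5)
Your proof is correct and follows essentially the same route as the paper's: condition on the mixture component (the paper realizes this via an auxiliary uniform index $I_{n}$, you via the mixture/conditional densities directly) and then split the minimum $\delta(x)$ into a sum over $j$ of $\delta(x,X_{j})$ times nearest-neighbor indicators. Your explicit verification that the equidistance locus is a Lebesgue-null union of affine hyperplanes is a point the paper's proof leaves implicit, so nothing is missing.
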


A method of moments estimator for $\sigma_{S_{n}}$ and $\sigma_{y,S_{n}}$ may be obtained by solving the equations
\begin{linenomath}
	\begin{align}
		\label{mm_equation}
		\mathbb{E}\left[\delta(\hat{X})\right] = \bar{\delta}_{S_n} & & \text{ and } & & \mathbb{E}\left[\delta_{y}(\hat{X})|\hat{Y} = y\right] = \bar{\delta}_{y,S_n}
	\end{align}
\end{linenomath} 
on $\sigma_{S_{n}}$ and $\sigma_{y,S_{n}}$, respectively. For lower-dimensional data, the integrals in Lemma \ref{lemma_mean_distance} may be solved exactly or via Monte Carlo integration. In Figure \ref{fig_kernel_exact} we illustrate the estimation of $\sigma_{S_{n}}$ by solving \eqref{mm_equation} via a grid search.

\begin{figure}[ht]
	\centering
	\includegraphics[width=\linewidth]{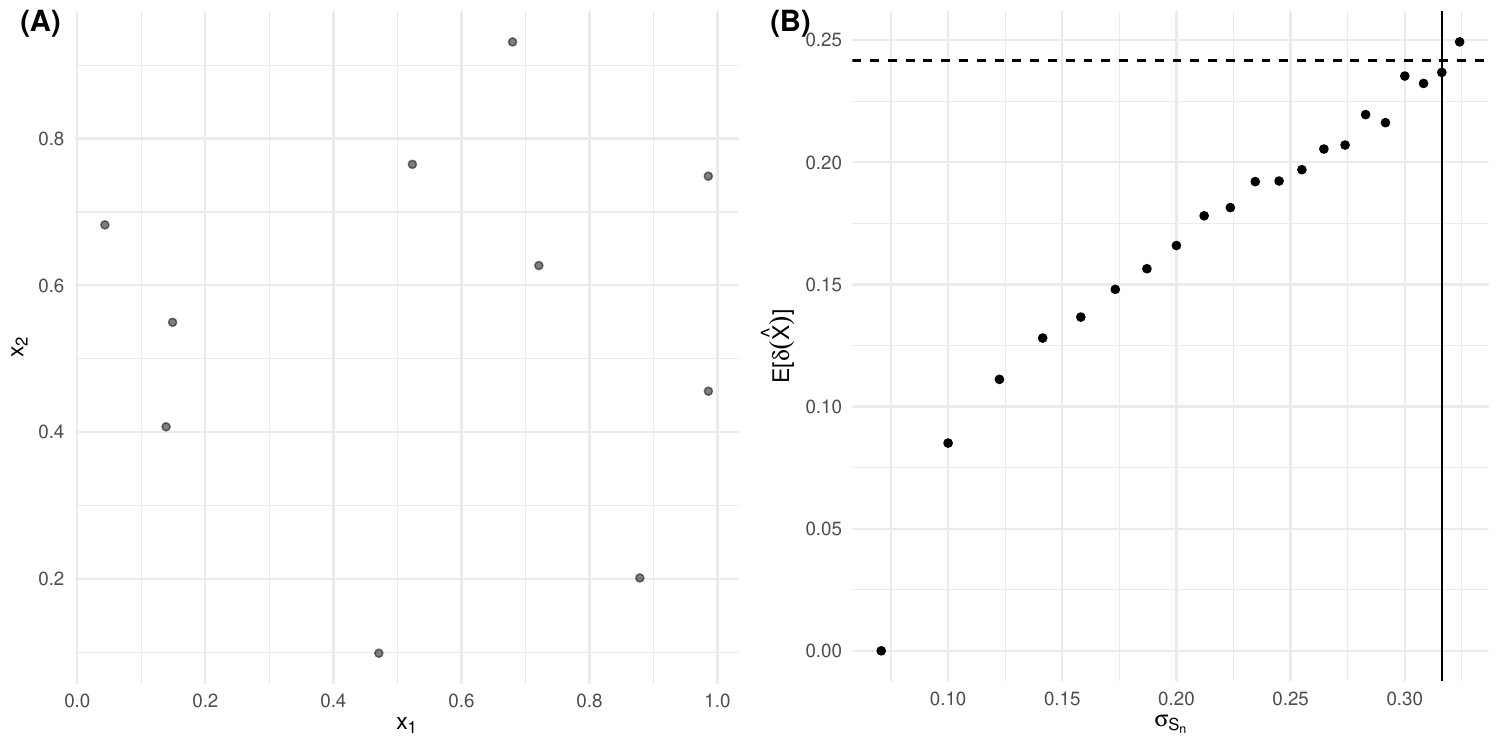}
	\caption{\footnotesize Illustration of the estimation of $\sigma_{S_{n}}$ by the method of moments. \textbf{(A)} A sample of $X$ in $d = 2$ dimensions with $\bar{\delta}_{S_n} = 0.24$. \textbf{(B)} A grid search to estimate $\sigma_{S_{n}}$ by solving equation \eqref{mm_equation} considering $\Sigma = I_{d}$. The expectation $\mathbb{E}[\delta(\hat{X})]$ is computed via Monte Carlo integration for $\sigma_{S_{n}}$ in a grid and the estimate $\hat{\sigma}_{S_{n}}$ is that with $\mathbb{E}[\delta(\hat{X})]$ closer to $\bar{\delta}_{S_n}$ that was $\hat{\sigma}_{S_{n}} = 0.31$.} \label{fig_kernel_exact}
\end{figure}

Since solving \eqref{mm_equation} may be computationally complex, specially for high-dimensional data, we propose an approximation for the expectations in Lemma \ref{lemma_mean_distance} that implies estimators that are analogous to that proposed by \cite{braga2004bolstered}. The next proposition states that, when $\sigma_{S_{n}}$ converges to zero, $\sigma_{S_{n}}^{-1} \mathbb{E}[\delta(\hat{X})]$ converges to the expectation of a \textit{chi} random variable with $d$ degrees of freedom, recalling that $d$ is the dimension of $X$. The same result follows for $\sigma_{y,S_{n}}^{-1} \mathbb{E}[\delta_{y}(\hat{X})|\hat{Y} = y]$.

\begin{proposition}
	\label{lemma_chi_approx}
	Let $\chi$ be a random variable with a chi distribution with $d$ degrees of freedom. Then,
	\begin{linenomath}
		\begin{align*}
			\frac{\mathbb{E}[\delta(\hat{X})]}{\sigma_{S_{n}}} \leq \mathbb{E}[\chi] & & \text{ and } & & \lim\limits_{\sigma_{S_{n}} \to 0} \frac{\mathbb{E}[\delta(\hat{X})]}{\sigma_{S_{n}}} = \mathbb{E}[\chi]
		\end{align*}
	\end{linenomath}
	for any fixed sample $S_{n} \in \mathcal{Z}^{n}$.
\end{proposition}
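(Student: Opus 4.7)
The plan is to reduce the double sum in Lemma \ref{lemma_mean_distance} to a single Gaussian expectation per sample index, apply a change of variables that produces a chi random variable, and then handle the upper bound and the limit separately. For each fixed $i$, the inner sum collapses via
\begin{linenomath}
\begin{equation*}
\sum_{j=1}^n \delta(x,X_j)\mathds{1}_{\delta(x)=\delta(x,X_j)} = \delta(x),
\end{equation*}
\end{linenomath}
so Lemma \ref{lemma_mean_distance} reads $\mathbb{E}[\delta(\hat{X})] = \tfrac{1}{n}\sum_{i=1}^n \mathbb{E}_i[\delta(X)]$, where $\mathbb{E}_i$ denotes expectation under $X \sim \mathcal{N}(X_i,\sigma_{S_n}^2\Sigma)$. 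The affine change of variables $U = \sigma_{S_n}^{-1}\Sigma^{-1/2}(X-X_i)$ yields $U \sim \mathcal{N}(0,I_d)$ and $\delta(X,X_i) = \sigma_{S_n}\|U\|$, with $\|U\|$ following a chi distribution on $d$ degrees of freedom.

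The upper bound is then immediate: since $\delta(X) = \min_j \delta(X,X_j) \leq \delta(X,X_i)$, taking expectations gives $\mathbb{E}_i[\delta(X)] \leq \sigma_{S_n}\mathbb{E}[\chi]$, and averaging over $i$ proves the first assertion of the proposition.

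For the limit, set $r_i = \min\{\delta(X_i,X_j) : X_j \neq X_i\}$ (with $r_i=+\infty$ if no such $j$ exists), so $r_i > 0$. On the event $A_i = \{\delta(X,X_i)\leq r_i/2\}$, the triangle inequality gives $\delta(X,X_k) \geq r_i - \delta(X,X_i) \geq \delta(X,X_i)$ whenever $X_k \neq X_i$, and $\delta(X,X_k) = \delta(X,X_i)$ trivially when $X_k = X_i$, so $\delta(X) = \delta(X,X_i) = \sigma_{S_n}\|U\|$ on $A_i$. Combined with the universal bound $\delta(X) \leq \sigma_{S_n}\|U\|$, this yields
\begin{linenomath}
\begin{equation*}
\frac{\mathbb{E}_i[\delta(X)]}{\sigma_{S_n}} = \mathbb{E}\bigl[\|U\|\mathds{1}_{\|U\|\leq r_i/(2\sigma_{S_n})}\bigr] + \sigma_{S_n}^{-1}\mathbb{E}_i\bigl[\delta(X)\mathds{1}_{A_i^c}\bigr],
\end{equation*}
\end{linenomath}
where the remainder is bounded by $\mathbb{E}\bigl[\|U\|\mathds{1}_{\|U\|>r_i/(2\sigma_{S_n})}\bigr]$. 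As $\sigma_{S_n}\to 0$ the threshold $r_i/(2\sigma_{S_n})$ diverges, so monotone convergence drives the leading term to $\mathbb{E}[\|U\|]=\mathbb{E}[\chi]$ and dominated convergence (with integrable dominator $\|U\|$) drives the remainder to zero. Averaging over $i$ completes the argument.

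The only real subtlety is the possibility of coinciding sample points, which would annihilate $r_i$ under a naive definition; restricting the minimum to indices with $X_j \neq X_i$ sidesteps this, since duplicates automatically satisfy $\delta(X,X_k) = \delta(X,X_i)$ and need no separate control. Beyond this bookkeeping, the proof reduces to the triangle inequality plus integrability of $\chi$, so I do not anticipate a deeper obstacle.
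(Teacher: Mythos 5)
Your proof is correct and follows essentially the same route as the paper's: bound $\mathbb{E}_i[\delta(X)]$ above by $\sigma_{S_n}\mathbb{E}[\chi]$ using $\delta(X)\le\delta(X,X_i)$, and below by restricting to the ball of radius half the nearest-neighbour distance on which $\delta(X)=\delta(X,X_i)$, then send $\sigma_{S_n}\to 0$ with monotone/dominated convergence. Your explicit handling of duplicated sample points (minimizing over $X_j\neq X_i$ rather than over indices $j\neq i$) is a small refinement the paper's proof glosses over, since its constant $M$ would vanish in that case.
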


Proposition \ref{lemma_chi_approx} yields the approximate method of moments estimators
\begin{linenomath}
	\begin{align}
		\label{mm_approx_estimators}
		\hat{\sigma}_{S_{n}} = \frac{\bar{\delta}_{S_n}}{\mathbb{E}[\chi]} & & \hat{\sigma}_{y,S_{n}} = \frac{\bar{\delta}_{y,S_n}}{\mathbb{E}[\chi]}.
	\end{align}
\end{linenomath}
Since $\mathbb{E}[\chi]$ is an upper bound for $\sigma_{S_{n}}^{-1} \mathbb{E}[\delta(\hat{X})]$, for any value of $\sigma_{S_{n}}$, the estimators in \eqref{mm_approx_estimators} are upper bounds for those obtained by solving \eqref{mm_equation}, hence this approximation yields systematically more bolstering. 

\begin{remark}
	In \cite{braga2004bolstered} it was proposed to divide by the median of a $\chi$ distribution in \eqref{mm_approx_estimators}. The median for $d$ from $1$ to $5$ is, respectively, around $0.67$, $1.17$, $1.53$, $1.83$ and $2.08$, which are values close to the mean, that is around $0.79$, $1.25$, $1.59$, $1.88$ and $2.12$ for $d$ from $1$ to $5$, respectively. Therefore, exchanging them in \eqref{mm_approx_estimators} should not have a great impact.
\end{remark}

\begin{remark}
	Equation \eqref{mm_equation} can be solved to estimate any kernel $K_{S_{n}}$ that depends on only one unknown parameter, even if it is not of the form $\sigma_{S_{n}} \Sigma$.
\end{remark}

\begin{remark}
	Other distributions beyond Gaussian could be considered, and the deductions remains true by exchanging the densities $p_{i,S_{n}}$ and $p_{i,y,S_{n}}$, and the $\chi$ distribution, for the respective densities and distribution of the distance to the mean.
\end{remark}

\subsection{Maximum pseudo-likelihood estimator}

In this section, we assume that each point $(X_{i},Y_{i})$ has a distinct kernel $\Sigma_{i,S_{n}}$, that is a general positive-definite matrix, and we propose an estimator for these $n$ matrices based on a pseudo-likelihood function. We denote these matrices simply by $\Sigma_{i}$ to easy notation, and it should be implicit that they depend on the sample $S_{n}$.

We consider Gaussian bolstering with kernels $\Sigma_{i} = (n-1)^{-1} \tilde{\Sigma}_{i}$ which are obtained by maximizing
\begin{linenomath}
	\begin{equation}
		\label{pseudo_likelihood}
		\mathcal{L}_{S_{n}}(\Sigma_{1},\dots,\Sigma_{n},\pi_{1},\dots,\pi_{n}) = \prod_{i=1}^{n} \left(\sum_{j=1}^{n} \pi_{j} \ p_{j,\Sigma_{j}}(X_{i}) \ \mathds{1}_{i \neq j}\right)
	\end{equation}
\end{linenomath} 
in which $p_{j,\Sigma_{j}}$ is the density of a Gaussian distribution with mean $X_{j}$ and covariance matrix $\Sigma_{j}$, and $0 \leq \pi_{j} \leq 1, j = 1,\dots,n,$ are probabilities that sum one. 

The function $\mathcal{L}_{S_{n}}$ is a \textit{pseudo-likelihood} function of sample $S_{n}$ that approximates the probability density function of $X_{i}$ by a Gaussian mixture with means $X_{j}$ and covariance matrices $\Sigma_{j}, j = 1,\dots,n, j \neq i$. A maximum pseudo-likelihood estimator for $\Sigma_{1},\dots,\Sigma_{n}$ is obtained by maximizing \eqref{pseudo_likelihood}. 

The EM algorithm can be applied to obtain the kernels that maximize \eqref{pseudo_likelihood}. The algorithm is an adaptation of the classical EM algorithm for the mixture of Gaussian distributions \citep{dempster1977maximum}, in which the means are known. We present the steps of the algorithm in Algorithm \ref{A1}, where we denote by $\boldsymbol{\Sigma} = \{\Sigma_{1},\dots,\Sigma_{n}\} \coloneqq \{(n - 1)^{-1}\tilde{\Sigma}_{1},\dots,(n - 1)^{-1}\tilde{\Sigma}_{n}\}$ a collection of kernels.

We consider a hyperparameter $\lambda$ in the E-step by updating the weights to
\begin{linenomath}
	\begin{equation*}
		w_{i,j}^{(t)} = \frac{(\lambda + p_{i,\Sigma_{i}^{(t)}}(X_{j})) \ \mathds{1}_{i \neq j}}{\lambda (n - 1) + \sum_{i=1}^{n} p_{i,\Sigma_{i}^{(t)}}(X_{j}) \ \mathds{1}_{i \neq j}}\,
	\end{equation*}
\end{linenomath}
to obtain a more stable solution. On the one hand, if $\lambda = 0$, then the solution is unstable, since $p_{i,\Sigma_{i}^{(t)}}(X_{j})$ can be very small for all values of $j \neq i$ and the weights might not converge. On the other hand, when $\lambda \to \infty$ the matrices converge to
\begin{linenomath}
	\begin{equation*}
		\Sigma_{i} = \frac{1}{(n-1)^{2}} \sum_{j} (X_{j} - X_{i})(X_{j} - X_{i})^{T} \, \mathds{1}_{i \neq j}
	\end{equation*}
\end{linenomath}
that is $(n-1)^{-1}$ times the mean distance matrix from each point in the sample to $X_{i}$. In this paper we consider $\lambda = 1$, so the weights are a perturbation of $1/(n - 1)$ by the Gaussian densities. We call the estimator $\boldsymbol{\hat{\Sigma}}$ obtained via the EM algorithm as the \textit{maximum pseudo-likelihood estimator} (MPE).

\begin{algorithm}[ht]
	\centering
	\caption{EM algorithm for kernel estimation in Gaussian bolstering.}
	\label{A1}
	\begin{algorithmic}[1]
		\STATE \textbf{Initialize}: Kernels $\boldsymbol{\Sigma}^{(0)}, t = 0, \lambda > 0$ and $\epsilon_{c} > 0$.
		\WHILE{$\max\limits_{i} \ \lVert \Sigma_{i}^{(t-1)}  - \Sigma_{i}^{(t)} \rVert_{\infty} \geq \epsilon_{c}$}
		\STATE \textbf{E-step}: Compute the conditional probabilities
		\begin{linenomath}
			\begin{equation*}
				w_{i,j}^{(t)} = \frac{(\lambda + p_{i,\Sigma_{i}^{(t)}}(X_{j})) \ \mathds{1}_{i \neq j}}{\lambda (n - 1) + \sum_{i=1}^{n} p_{i,\Sigma_{i}^{(t)}}(X_{j}) \ \mathds{1}_{i \neq j}}
			\end{equation*}
		\end{linenomath}
		\STATE \textbf{M-step}: Update the kernels as
		\begin{linenomath}
			\begin{equation*}
				\Sigma_{i}^{(t+1)} = \frac{1}{n-1} \ \sum_{j} w_{i,j}^{(t)} (X_{j} - X_{i})(X_{j} - X_{i})^{T}
			\end{equation*}
		\end{linenomath}	
		\STATE t = t + 1
		\ENDWHILE	
	\end{algorithmic}
\end{algorithm}

Figure \ref{fig_ex_kernel_estimation} presents an example of the method of moments (MM) estimators, exact and approximated, with spherical matrix $\Sigma = I_{d}$, and the MPE for Gaussian and $XY$-Gaussian bolstering. We see that the exact chi approximation kernel is wider than the MM exact kernel, since the chi approximation is an upper bound of it. Furthermore, the MPE kernel is way smaller (in the matrix trace sense) than those obtained via the MM, and it \textit{points} to the direction where the other data points are in average \textit{farther} from the point. This means that the directions with more probability are those in which there are less sample points close to the $i$-th point. This makes sense, as these are the directions less represented in the sample, where the loss of $\psi_{n}$ is not evaluated through the standard resubstitution.

\begin{figure}[ht]
	\centering
	\includegraphics[width=\linewidth]{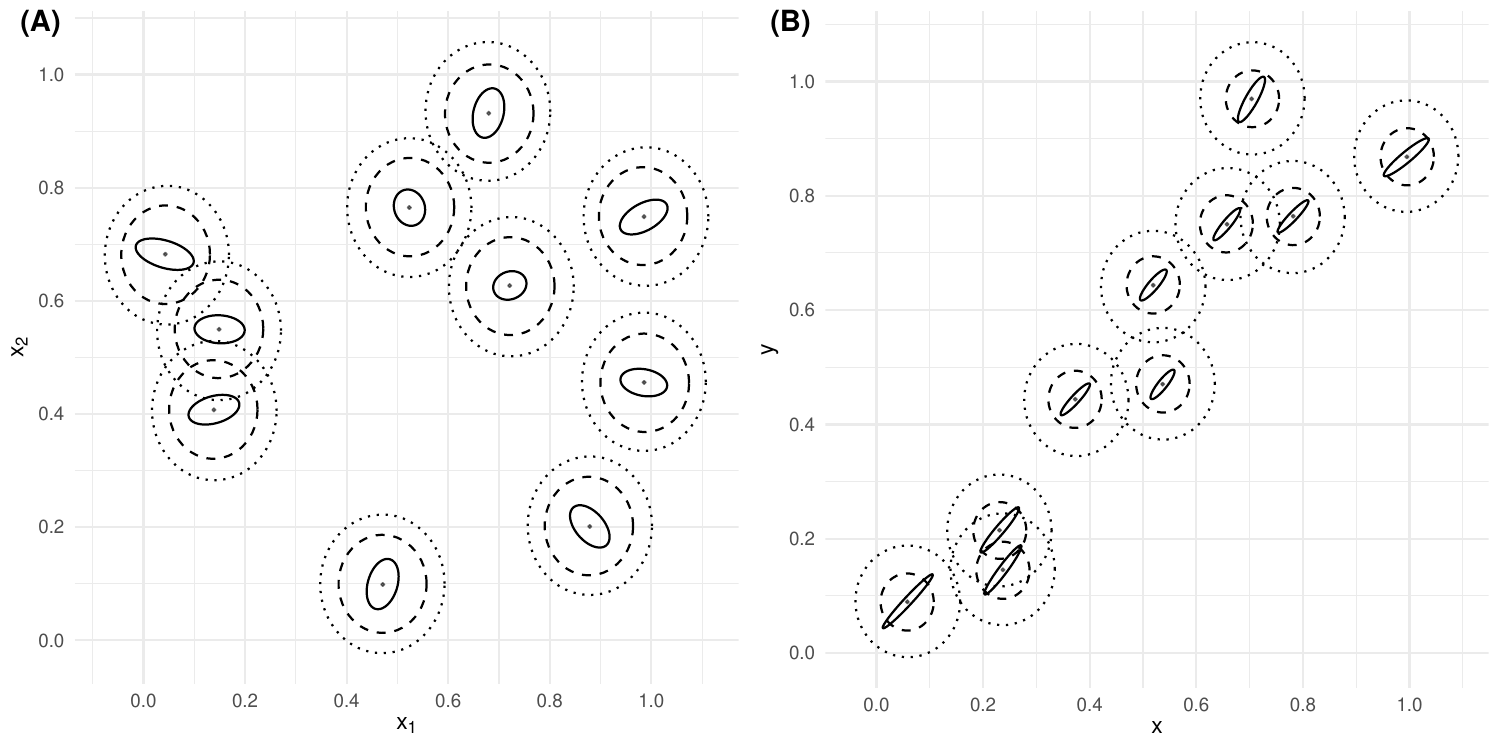}
	\caption{\footnotesize Example of method of moments and maximum pseudo-likelihood estimators for the kernel. The plots present the level curves of the Gaussian distributions with kernel estimated by the method of moments, exact (dashed) and approximated (dotted), and maximum pseudo-likelihood estimation (solid) in \textbf{(A)} Gaussian bolstering and \textbf{(B)} $XY$-Gaussian bolstering. All the level curves are such that the probability inside the sphere/ellipse equals $0.05$.}
	\label{fig_ex_kernel_estimation}
\end{figure}

\section{Experimental results}
\label{SecExp}

\begin{figure}[ht]
\centering
\includegraphics[width = \linewidth]{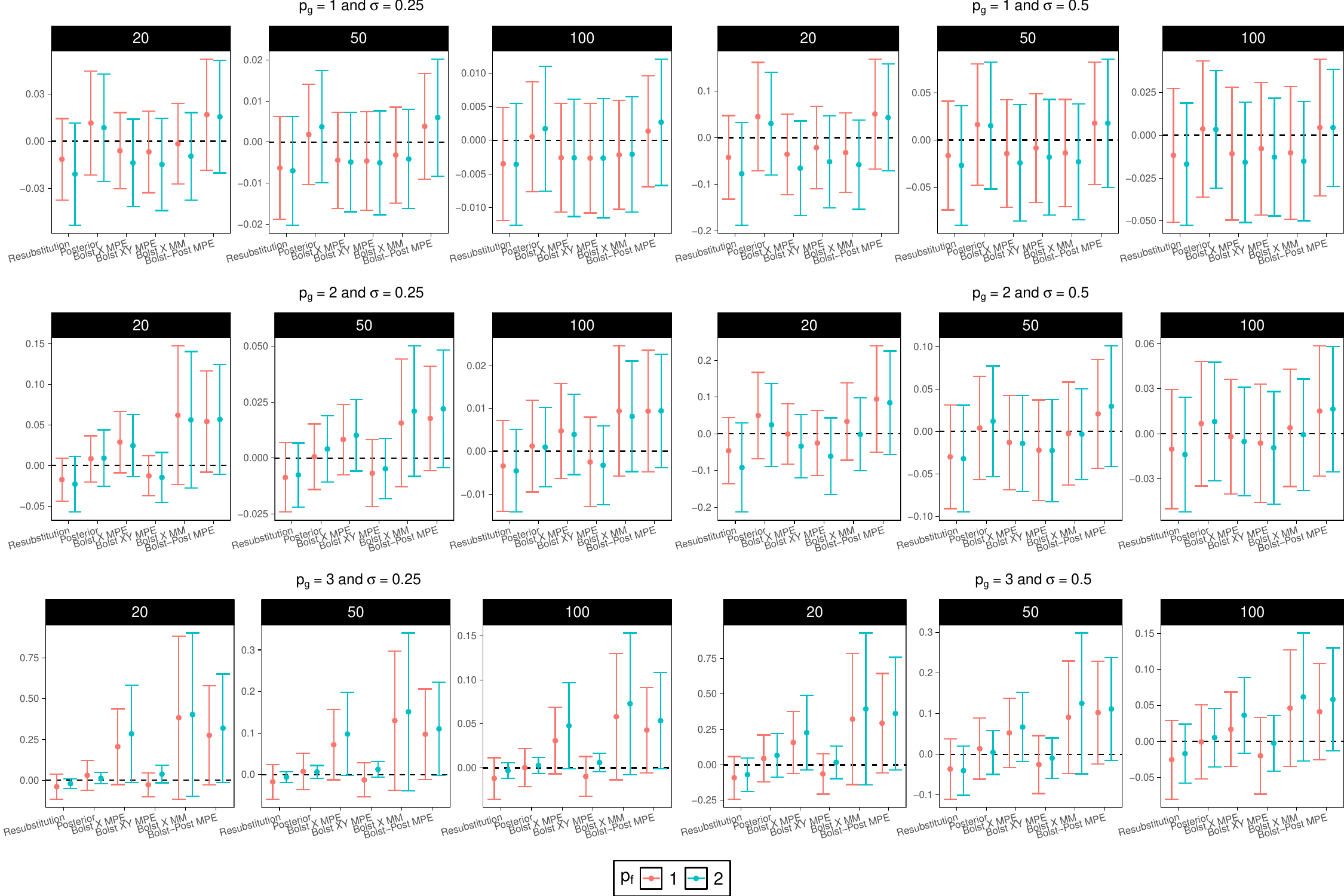}
\caption{Bias $\pm$ RMSE for each estimator and sample size over the 100 samples for $d = 1$. Each plot represents a value of $p_{g}$ and $\sigma$, and the colors refer to $p_{f}$.} \label{fig_res1}
\end{figure}

\begin{figure}[ht]
\centering
\includegraphics[width = \linewidth]{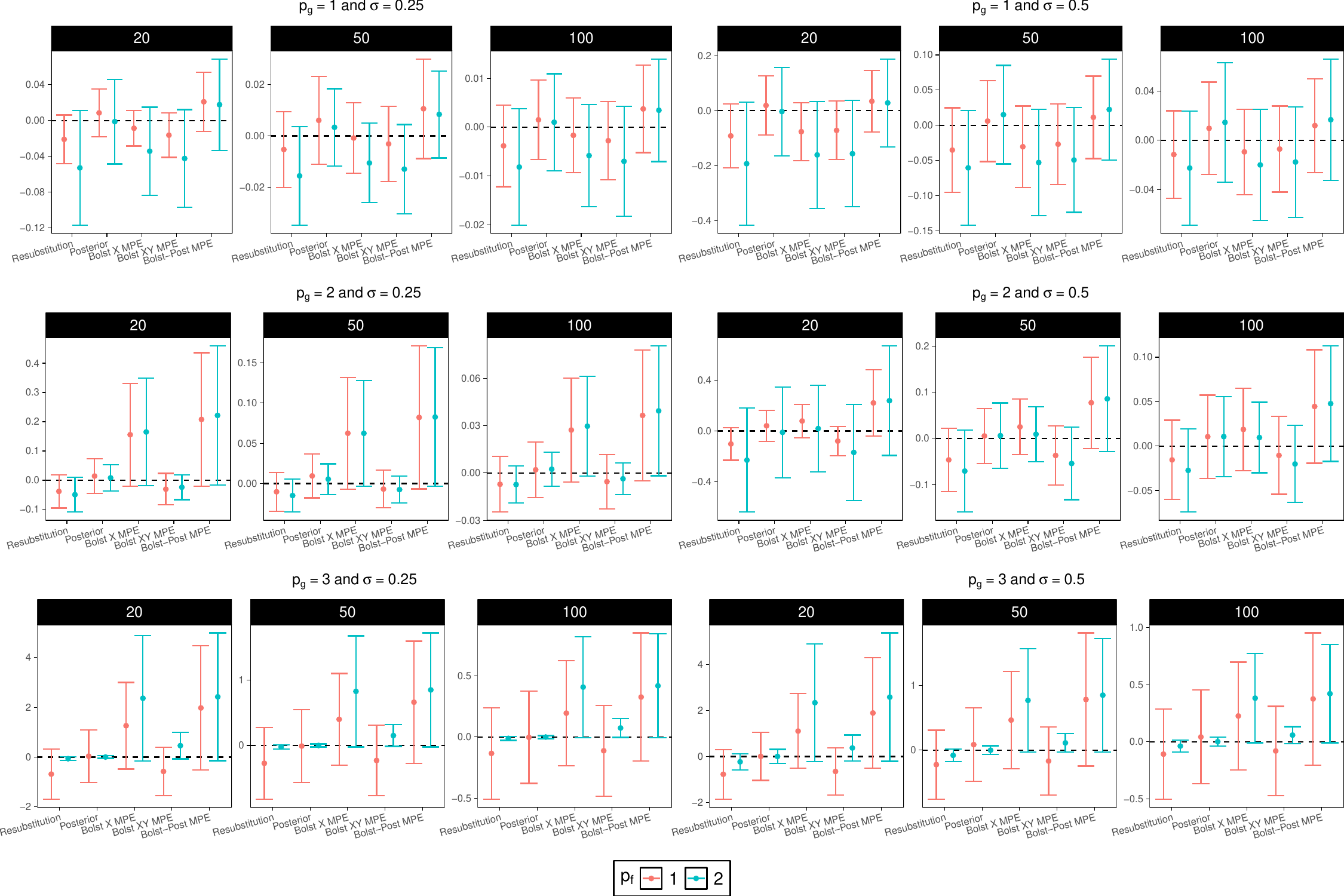}
\caption{Bias $\pm$ RMSE for each estimator and sample size over the 100 samples for $d = 2$. Each plot represents a value of $p_{g}$ and $\sigma$, and the colors refer to $p_{f}$.} \label{fig_res2}
\end{figure}

\begin{figure}[ht]
\centering
\includegraphics[width = \linewidth]{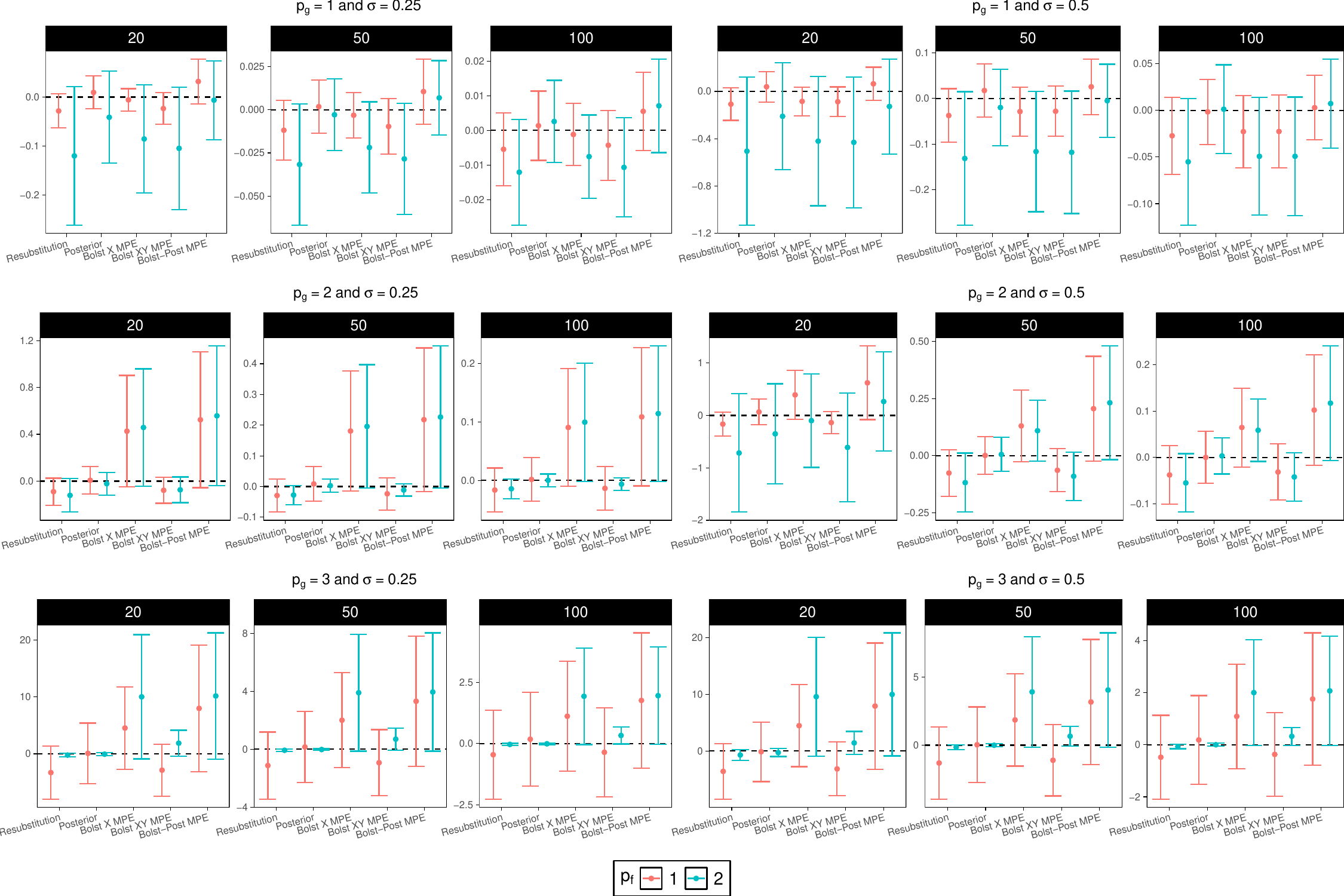}
\caption{Bias $\pm$ RMSE for each estimator and sample size over the 100 samples for $d = 3$. Each plot represents a value of $p_{g}$ and $\sigma$, and the colors refer to $p_{f}$.} \label{fig_res3}
\end{figure}

In this section, we present experimental results on error estimation for polynomial regression with synthetic data. The simulations were performed in \textbf{R} \cite{R} with the \textit{genree} package that we developed specially for it and that is available at \href{https://github.com/dmarcondes/genree}{\textbf{https://github.com/dmarcondes/genree}}. Bayesian polynomial regressions were fitted with the package \textit{rstanarm} \citep{rstanarm}. 

The synthetic data is generated from $X$ uniformly distributed in $[0,1]^{d}$ for $d = 1,2,3$ and $Y = \psi^{\star}(X) + \xi_{\sigma}$ in which 
\begin{linenomath}
	\begin{equation*}
		\psi^{\star}(x) = \left[1 + \sum_{i=1}^{d} x_{i}\right]^{p_{g}}
	\end{equation*}
\end{linenomath}
for $p_{g} = 1,2,3$, and $\xi_{\sigma}$ is a random variable, independent of $X$, with the Gaussian distribution of mean zero and variance $\sigma^{2}$ for $\sigma = 0.25,0.5$. We consider sample sizes of $n = 20,50,100$ and perform Monte Carlo integration with a sample size of $1,000$ to calculate $\varepsilon_{n}$ and the error estimators. 

The simulations are performed for prediction rules generated by least square estimation of a degree $p_{f}$ polynomial for $p_{f} = 1,2$.  We consider the following generalized error estimators: $X$-Gaussian bolstering with kernel estimated via maximum pseudo-likelihood (MPE) and exact method of moments (MM); $XY$-Gaussian bolstering with kernel estimated via maximum pseudo-likelihood (MPE); posterior-probability under degree $p_{f}$ Bayesian polynomial regression; and Gaussian bolstering posterior-probability under MPE and degree $p_{f}$ Bayesian polynomial regression. The degree $p_{f}$ of Bayesian regression is the same degree of the fitted polynomial, but may differ from the degree $p_{g}$ of the polynomial that generated the data. The bias and root-mean-square error (RMSE) of each scenario is estimated by their mean on $100$ independent samples.

In Figures \ref{fig_res1} to \ref{fig_res3} we present the bias $\pm$ RMSE of the estimators for each scenario for $d = 1, 2$ and $3$, respectively. The result for the $X$-Gaussian bolstering with kernel estimated via the method of moments is omitted from the figures for $d \geq 2$ since its results are, in general, significantly worse than the other estimators. The bias and RMSE of each scenario and estimator can be found in Appendix \ref{AppExpRes}. 

The posterior-probability estimator has the least absolute bias in the majority of the scenarios (83 out of 108), with the exception of the scenarios with $d = 1$ and $\sigma = 0.5$ in which bolstering estimators had the least absolute bias in many cases. The bolstering estimators in the $X$ direction, and the Gaussian bolstering posterior-probability estimator, perform poorly for $p_{g} = 2,3$ specially for $d \geq 2$ evidencing that bolstering only on the $X$ direction may not be suitable in higher dimensional scenarios and when the data is generated by more complex functions. 

Conversely, the performance of the $XY$-bolstering estimator is not sensible to the dimension $d$ and the data generating polynomial degree $p_{g}$. It is among the estimators with the least RMSE in almost all cases, and has the least one in 39 out of 108 scenarios. The posterior estimator is also not sensible to the dimension and the data generating polynomial degree, and has the least RMSE in 25 scenarios.

\section{Discussion}
\label{SecDiscuss}

In this paper, generalized resubstitution error estimators were extended to the general supervised learning framework, in special to regression problems, and studied from a statistical learning perspective. We formally defined generalized resubstitution error estimators as the expected loss under a generalized empirical distribution and, equivalently, as the resubstitution error of a generalized loss function. Sufficient conditions for the consistency of these estimators were established without making any assumptions about the prediction rule or distribution; only an assumption about the moments of the loss function was made (cf. \eqref{finite_moments_text}). Consistency of these estimators was studied in general and specific cases, and the results were particularized to smoothing on the $X$, $Y$ and both directions. Two methods for learning the kernel in Gaussian bolstering were proposed, formalizing the heuristic approach of \cite{braga2004bolstered}, by applying the EM algorithm for Gaussian mixtures. The estimators were applied to polynomial regression and their superiority with respect to plain resubstitution was empirically observed.

Two natural paths for future studies arise from this paper. From a theoretical perspective, we note that we have established the consistency of the estimators, but we have not provided their convergence rate. The rates we could deduce in a distribution-free and prediction rule-free framework would be very pessimistic, and therefore have no practical use. However, if assumptions about the data distribution and/or about the prediction rule were to be made, sharper bounds could be obtained. From an applied perspective, it would be interesting to apply these estimators to more complex regression models, e.g., neural networks. In such cases, overfitting is expected to be more prevalent, and a greater benefit from smoothing the empirical measure could be obtained.

\acks{D. Marcondes was funded by grants \#2022/06211-2 and \#2023/00256-7, S\~ao Paulo Research Foundation (FAPESP). U. Braga-Neto was supported by NSF Award CCF-2225507. Most of this work was developed while D. Marcondes was a visiting scholar at the Department of Electrical and Computer Engineering, Texas A\&M University.}

\bibliography{Ref}

\newcommand{\noopsort}[1]{} \newcommand{\printfirst}[2]{#1}
  \newcommand{\singleletter}[1]{#1} \newcommand{\switchargs}[2]{#2#1}
\begin{thebibliography}{26}
\providecommand{\natexlab}[1]{#1}
\providecommand{\url}[1]{\texttt{#1}}
\expandafter\ifx\csname urlstyle\endcsname\relax
  \providecommand{\doi}[1]{doi: #1}\else
  \providecommand{\doi}{doi: \begingroup \urlstyle{rm}\Url}\fi

\bibitem[Bartlett et~al.(1994)Bartlett, Long, and Williamson]{bartlett1994fat}
P.~L. Bartlett, P.~M. Long, and R.~C. Williamson.
\newblock Fat-shattering and the learnability of real-valued functions.
\newblock In \emph{Proceedings of the seventh annual conference on
  Computational learning theory}, pages 299--310, 1994.

\bibitem[Braga-Neto(2020)]{braga2020fundamentals}
U.~Braga-Neto.
\newblock \emph{Fundamentals of pattern recognition and machine learning}.
\newblock Springer, 2020.

\bibitem[Braga-Neto and Dougherty(2004)]{braga2004bolstered}
U.~Braga-Neto and E.~Dougherty.
\newblock Bolstered error estimation.
\newblock \emph{Pattern Recognition}, 37\penalty0 (6):\penalty0 1267--1281,
  2004.

\bibitem[Braga-Neto and Dougherty(2015)]{BragDougEEPR:15}
U.~Braga-Neto and E.~Dougherty.
\newblock \emph{Error Estimation for Pattern Recognition}.
\newblock Wiley, New York, 2015.

\bibitem[Cortes et~al.(2019)Cortes, Greenberg, and Mohri]{cortes2019relative}
C.~Cortes, S.~Greenberg, and M.~Mohri.
\newblock Relative deviation learning bounds and generalization with unbounded
  loss functions.
\newblock \emph{Annals of Mathematics and Artificial Intelligence},
  85:\penalty0 45--70, 2019.

\bibitem[Dalton and Dougherty(2011{\natexlab{a}})]{DaltDoug:11a}
L.~Dalton and E.~Dougherty.
\newblock Bayesian minimum mean-square error estimation for classification
  error - part {I}: Definition and the bayesian mmse error estimator for
  discrete classification.
\newblock \emph{IEEE Transactions on Signal Processing}, 59\penalty0
  (1):\penalty0 115--129, 2011{\natexlab{a}}.

\bibitem[Dalton and Dougherty(2011{\natexlab{b}})]{DaltDoug:11b}
L.~Dalton and E.~Dougherty.
\newblock Bayesian minimum mean-square error estimation for classification
  error - part {II}: Linear classification of gaussian models.
\newblock \emph{IEEE Transactions on Signal Processing}, 59\penalty0
  (1):\penalty0 130--144, 2011{\natexlab{b}}.

\bibitem[Dempster et~al.(1977)Dempster, Laird, and Rubin]{dempster1977maximum}
A.~P. Dempster, N.~M. Laird, and D.~B. Rubin.
\newblock Maximum likelihood from incomplete data via the em algorithm.
\newblock \emph{Journal of the royal statistical society: series B
  (methodological)}, 39\penalty0 (1):\penalty0 1--22, 1977.

\bibitem[Ghane and Braga-Neto(2022)]{ghane2022generalized}
P.~Ghane and U.~Braga-Neto.
\newblock Generalized resubstitution for classification error estimation.
\newblock \emph{The Journal of Machine Learning Research}, 23\penalty0
  (1):\penalty0 12811--12840, 2022.

\bibitem[Goodrich et~al.(2023)Goodrich, Gabry, Ali, and Brilleman]{rstanarm}
B.~Goodrich, J.~Gabry, I.~Ali, and S.~Brilleman.
\newblock rstanarm: {Bayesian} applied regression modeling via {Stan}., 2023.
\newblock URL \url{https://mc-stan.org/rstanarm/}.
\newblock R package version 2.26.1.

\bibitem[Hand(1986)]{Hand:86}
D.~Hand.
\newblock Recent advances in error rate estimation.
\newblock \emph{Pattern Recognition Letters}, 4:\penalty0 335--346, 1986.

\bibitem[Hefny and Atiya(2010)]{hefny2010new}
A.~Hefny and A.~F. Atiya.
\newblock A new monte carlo-based error rate estimator.
\newblock In \emph{IAPR Workshop on Artificial Neural Networks in Pattern
  Recognition}, pages 37--47. Springer, 2010.

\bibitem[Jiang et~al.(2019)Jiang, Neyshabur, Mobahi, Krishnan, and
  Bengio]{jiang2019fantastic}
Y.~Jiang, B.~Neyshabur, H.~Mobahi, D.~Krishnan, and S.~Bengio.
\newblock Fantastic generalization measures and where to find them.
\newblock \emph{arXiv preprint arXiv:1912.02178}, 2019.

\bibitem[Keskar et~al.(2016)Keskar, Mudigere, Nocedal, Smelyanskiy, and
  Tang]{keskar2016large}
N.~S. Keskar, D.~Mudigere, J.~Nocedal, M.~Smelyanskiy, and P.~T.~P. Tang.
\newblock On large-batch training for deep learning: Generalization gap and
  sharp minima.
\newblock \emph{arXiv preprint arXiv:1609.04836}, 2016.

\bibitem[Lugosi and Pawlak(1994)]{LugoPawl:94}
G.~Lugosi and M.~Pawlak.
\newblock On the posterior-probability estimate of the error rate of
  nonparametric classification rules.
\newblock \emph{IEEE Transactions on Information Theory}, 40\penalty0
  (2):\penalty0 475--481, 1994.

\bibitem[McLachlan(1987)]{McLa:87}
G.~McLachlan.
\newblock Error rate estimation in discriminant analysis: recent advances.
\newblock In A.~Gupta, editor, \emph{Advances in Multivariate Analysis}. D.
  Reidel, Dordrecht, 1987.

\bibitem[Pollard(1989)]{pollard1989asymptotics}
D.~Pollard.
\newblock Asymptotics via empirical processes.
\newblock \emph{Statistical science}, pages 341--354, 1989.

\bibitem[{R Core Team}(2023)]{R}
{R Core Team}.
\newblock \emph{R: A Language and Environment for Statistical Computing}.
\newblock R Foundation for Statistical Computing, Vienna, Austria, 2023.
\newblock URL \url{https://www.R-project.org/}.

\bibitem[Recht et~al.(2019)Recht, Roelofs, Schmidt, and
  Shankar]{recht2019imagenet}
B.~Recht, R.~Roelofs, L.~Schmidt, and V.~Shankar.
\newblock Do imagenet classifiers generalize to imagenet?
\newblock \emph{arXiv preprint arXiv:1902.10811}, 2019.

\bibitem[Russakovsky et~al.(2015)Russakovsky, Deng, Su, Krause, Satheesh, Ma,
  Huang, Karpathy, Khosla, Bernstein, et~al.]{russakovsky2015imagenet}
O.~Russakovsky, J.~Deng, H.~Su, J.~Krause, S.~Satheesh, S.~Ma, Z.~Huang,
  A.~Karpathy, A.~Khosla, M.~Bernstein, et~al.
\newblock Imagenet large scale visual recognition challenge.
\newblock \emph{International journal of computer vision}, 115\penalty0
  (3):\penalty0 211--252, 2015.

\bibitem[Schiavo and Hand(2000)]{SchiHand:00}
R.~Schiavo and D.~Hand.
\newblock Ten more years of error rate research.
\newblock \emph{International Statistical Review}, 68\penalty0 (3):\penalty0
  295--310, 2000.

\bibitem[Smith(1947)]{Smit:47}
C.~Smith.
\newblock Some examples of discrimination.
\newblock \emph{Annals of Eugenics}, 18:\penalty0 272--282, 1947.

\bibitem[Sontag et~al.(1998)]{sontag1998vc}
E.~D. Sontag et~al.
\newblock Vc dimension of neural networks.
\newblock \emph{NATO ASI Series F Computer and Systems Sciences}, 168:\penalty0
  69--96, 1998.

\bibitem[Toussaint(1974)]{Tous:74}
G.~Toussaint.
\newblock Bibliography on estimation of misclassification.
\newblock \emph{IEEE Transactions on Information Theory}, IT-20\penalty0
  (4):\penalty0 472--479, 1974.

\bibitem[Vapnik(1998)]{vapnik1998}
V.~Vapnik.
\newblock Statistical learning theory., 1998.

\bibitem[Yousefi et~al.(2011)Yousefi, Hua, and Dougherty]{yousefi2011multiple}
M.~R. Yousefi, J.~Hua, and E.~R. Dougherty.
\newblock Multiple-rule bias in the comparison of classification rules.
\newblock \emph{Bioinformatics}, 27\penalty0 (12):\penalty0 1675--1683, 2011.

\end{thebibliography}

\appendix
\section{VC dimension and uniform relative deviation convergence}
\label{AppVCdim}

The complexity of a hypothesis space may be measured by its VC dimension under loss function $\ell$, that is a special case of the pseudo-dimension of a family of real-valued functions \cite{pollard1989asymptotics,vapnik1998}. We start by recalling the definition of the shatter coefficient and VC dimension of a class $\mathcal{A} \subset \mathcal{B}(\mathbb{R}^{d+m})$ of Borel-measurable sets.

\begin{definition}
	\label{defVC_set}
	The $n$-th shatter coefficient of a class $\mathcal{A} \subset \mathcal{B}(\mathbb{R}^{d+m})$ is defined as
	\begin{linenomath}
		\begin{align*}
			\mathcal{S}(\mathcal{A},n) = \sup\limits_{z_{1},\dots,z_{n} \in \mathcal{Z}} \Big| \Big\{\left(\mathds{1}\{z_{1} \in A\},\dots,\mathds{1}\{z_{n} \in A\}\right): A \in \mathcal{A}\Big\}\Big|.
		\end{align*}
	\end{linenomath}
	The VC dimension of $\mathcal{A}$ is the greatest integer such that $\mathcal{S}(\mathcal{A},n) = 2^{n}$ and is denoted by $d_{VC}(\mathcal{A})$.
\end{definition}

For a fixed loss function $\ell$, $\psi \in \mathcal{H}$ and $b \in (0,C_{\ell})$ define
\begin{linenomath}
	\begin{equation*}
		A_{\ell,\psi,b} = \{z \in \mathcal{Z}: \ell(z,\psi) > b\} \in \mathcal{B}(\mathbb{R}^{d+m})
	\end{equation*}
\end{linenomath}
as the points in $\mathcal{Z}$ where $\ell(z,\psi) > b$ and let 
\begin{linenomath}
	\begin{equation}
		\label{def_Astar}
		\mathcal{A}^{\star}_{\mathcal{H},\ell} = \left\{A_{\ell,\psi,b}: \psi \in \mathcal{H}, b \in (0,C_{\ell})\right\} \subset \mathcal{B}(\mathbb{R}^{d+m})
	\end{equation}
\end{linenomath}
be the collection of such sets for $\psi \in \mathcal{H}$ and $b \in (0,C_{\ell})$. We have that
\begin{linenomath}
	\begin{align*}
		\mathcal{S}(\mathcal{A}^{\star}_{\mathcal{H},\ell},n) = \sup\limits_{z_{1},\dots,z_{n} \in \mathcal{Z}} \Big| \Big\{\left(\mathds{1}\{\ell(z_{1},\psi) > b\},\dots,\mathds{1}\{\ell(z_{n},\psi) > b\}\right): \psi \in \mathcal{H}, b \in (0,C_{\ell})\Big\}\Big|
	\end{align*}
\end{linenomath}
which is the usual shatter coefficient of the space of binary functions of form $I(z) = \mathds{1}\{\ell(z,\psi) > b\}$ for $z \in \mathcal{Z}$.

In classification problems with two classes, under the simple loss function, $\mathcal{S}(\mathcal{A}^{\star}_{\mathcal{H},\ell},n)$ reduces to the usual definition of the shatter coefficient of a hypothesis space of binary functions:
\begin{linenomath}
	\begin{align*}
		\mathcal{S}(\mathcal{H},n) \coloneqq \sup\limits_{x_{1},\dots,x_{n} \in \mathcal{X}} \Big| \Big\{\left(\psi(x_{1}),\dots,\psi(x_{n})\right): \psi \in \mathcal{H}\Big\}\Big|.
	\end{align*}
\end{linenomath}
This fact justifies the definition of the shatter coefficient and VC dimension of a hypothesis space $\mathcal{H}$ under a loss function $\ell$ as that of $\mathcal{A}^{\star}_{\mathcal{H},\ell}$. See Appendix \ref{SecSquareLoss} for more details about the VC dimension under a loss function.

\begin{definition}
	The $n$-th shatter coefficient of $\mathcal{H}$ under loss function $\ell$ is defined as $\mathcal{S}(\mathcal{H},\ell,n) \coloneqq \mathcal{S}(\mathcal{A}^{\star}_{\mathcal{H},\ell},n)$.  The VC dimension of a hypothesis space $\mathcal{H}$ under loss function $\ell$ is defined as $d_{VC}(\mathcal{H},\ell) \coloneqq d_{VC}(\mathcal{A}^{\star}_{\mathcal{H},\ell})$. 
\end{definition}

Bounds on the rate of uniform relative deviation convergence based on the VC dimension have been obtained in the literature. This paper will rely on a result in \cite{cortes2019relative}, that is an improvement of results in \cite{vapnik1998}, and we refer to the references in \cite{cortes2019relative} for a historical overview of these results. 

For $\epsilon, \tau > 0$ and $n \geq 1$, let $\mathbb{B}_{n,\epsilon,\tau}: \mathbb{Z}_{+} \to \mathbb{R}_{+}$ be a function such that
\begin{linenomath}
	\begin{equation}
		\label{condBC}
		\sum_{n=1}^{\infty} \mathbb{B}_{n,\epsilon,\tau}(k) < \infty
	\end{equation}
\end{linenomath}
for all $k \in \mathbb{Z}_{+}$ and all $\epsilon,\tau > 0$ fixed. The results in \cite{cortes2019relative} imply that, under the assumption \eqref{finite_moments_text}, there exists $\mathbb{B}_{n,\epsilon,\tau}(d_{VC}(\mathcal{H},\ell))$ satisfying \eqref{condBC} which is a bound for the rate of uniform relative deviation in $\mathcal{H}$ under loss function $\ell$. See \cite{cortes2019relative} for an explicit form for $\mathbb{B}_{n,\epsilon,\tau}$.

\begin{proposition}
	\label{prop_bound_convergence}
	Fix a loss function $\ell$ and let $\mathcal{H}$ be a hypothesis space with $d_{VC}(\mathcal{H},\ell) < \infty$. If \eqref{finite_moments_text} holds, then there exists a function $\mathbb{B}_{n,\epsilon,\tau}$ satisfying \eqref{condBC} such that, for all $\epsilon > 0$, and for a $\tau > 0$ small enough,
	\begin{linenomath}
		\begin{equation}
			\label{rel_conv_rate}
			\mathbb{P}\left(\sup\limits_{\psi \in \mathcal{H}} \left|\frac{L(\psi) - L_{n}(\psi)}{\sqrt[\alpha]{L^{\alpha}(\psi) + \tau}}\right| > \epsilon\right) < \mathbb{B}_{n,\epsilon,\tau}(d_{VC}(\mathcal{H},\ell)).
		\end{equation}
	\end{linenomath}
	It follows from \eqref{condBC} and Borel-Cantelli Lemma that
	\begin{linenomath}
		\begin{equation}
			\label{rel_conv}
			\sup\limits_{\psi \in \mathcal{H}} \left|\frac{L(\psi) - L_{n}(\psi)}{\sqrt[\alpha]{L^{\alpha}(\psi) + \tau}}\right| \to \ 0
		\end{equation}
	\end{linenomath}
	with probability one as $n \to \infty$.
\end{proposition}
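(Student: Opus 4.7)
The plan is to invoke directly the uniform relative deviation inequality of \cite{cortes2019relative} for real-valued, unbounded loss classes with finite pseudo-dimension, and then derive almost sure convergence from summability via the Borel--Cantelli lemma. First, I would identify the function class to which the Cortes--Greenberg--Mohri bound applies: the family $\mathcal{F}_{\mathcal{H},\ell} = \{z \mapsto \ell(z,\psi) : \psi \in \mathcal{H}\}$ of loss-composed hypotheses. Our hypothesis on $d_{VC}(\mathcal{H},\ell) < \infty$ is, by Definition of $d_{VC}(\mathcal{H},\ell) = d_{VC}(\mathcal{A}^{\star}_{\mathcal{H},\ell})$, exactly the statement that the pseudo-dimension (in the sense of \cite{pollard1989asymptotics,vapnik1998}) of $\mathcal{F}_{\mathcal{H},\ell}$ is finite, since the superlevel sets $A_{\ell,\psi,b}$ parameterize precisely the sub-graph-shattering events used to define pseudo-dimension.

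Next, I would verify the moment condition required by \cite{cortes2019relative}: their bound for unbounded function classes requires a finite uniform $\alpha$-moment for some $\alpha > 1$, which is exactly our assumption \eqref{finite_moments_text}. Their theorem then yields, for every $\epsilon > 0$ and every sufficiently small $\tau > 0$, an explicit polynomial-times-exponential bound of the form
\begin{equation*}
\mathbb{P}\!\left(\sup_{\psi \in \mathcal{H}} \left|\frac{L(\psi) - L_{n}(\psi)}{\sqrt[\alpha]{L^{\alpha}(\psi) + \tau}}\right| > \epsilon\right) \;\le\; \mathbb{B}_{n,\epsilon,\tau}(d_{VC}(\mathcal{H},\ell)),
\end{equation*}
where $\mathbb{B}_{n,\epsilon,\tau}(k)$ is of the form $c_{1}(\epsilon,\tau,\alpha)\,n^{c_{2}(k)}\exp(-c_{3}(\epsilon,\tau,\alpha)\,n)$ with the exponential factor dominating for large $n$. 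I would then check that, for each fixed $k$, $\epsilon$, $\tau$, this bound is summable in $n$; the exponential decay in $n$ makes $\sum_{n} \mathbb{B}_{n,\epsilon,\tau}(k) < \infty$ immediate, establishing \eqref{condBC}.

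Finally, the almost sure statement \eqref{rel_conv} follows by a standard Borel--Cantelli argument: for each rational $\epsilon > 0$ (and a fixed small $\tau > 0$), summability of the probabilities of the events $\{\sup_{\psi} |\cdot| > \epsilon\}$ implies that these events occur only finitely often almost surely; intersecting over a countable dense set of $\epsilon$ yields convergence to zero with probability one.

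The main obstacle, and essentially the only nontrivial part, is pinning down the precise form of $\mathbb{B}_{n,\epsilon,\tau}$ from \cite{cortes2019relative} so that both the bound in \eqref{rel_conv_rate} and summability \eqref{condBC} hold with the \emph{same} function; this is really a bookkeeping exercise to match their statement (which involves the Rademacher-style covering of the loss class) to our pseudo-dimension hypothesis via the Haussler/Pollard covering bound $\mathcal{N}(\mathcal{F}_{\mathcal{H},\ell}, L^{1}(\mu), \epsilon) \lesssim (1/\epsilon)^{O(d_{VC})}$. Once this bound is substituted into their inequality, both properties hold simultaneously and the proposition follows.
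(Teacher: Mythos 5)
Your proposal is correct and follows essentially the same route as the paper, which likewise justifies \eqref{rel_conv_rate} by direct appeal to the relative deviation bounds of \cite{cortes2019relative} under the moment condition \eqref{finite_moments_text} (identifying $d_{VC}(\mathcal{H},\ell)$ with the pseudo-dimension of the loss class and declining to write out the explicit form of $\mathbb{B}_{n,\epsilon,\tau}$), and then obtains \eqref{rel_conv} from summability via Borel--Cantelli. The extra bookkeeping you flag (matching their covering/growth-function statement to the pseudo-dimension hypothesis) is exactly what the paper also leaves to the cited reference.
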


\begin{remark}
	The constant $\tau$ avoids the discontinuity of the ratio when $\inf\limits_{\psi \in \mathcal{H}} L^{\alpha}(\psi) = 0$. See \cite{cortes2019relative} for more details.
\end{remark}

\begin{remark}
	The results in this paper rely only on \eqref{rel_conv} and not on the rate in \eqref{rel_conv_rate}. Therefore, they rely only on the finiteness of $d_{VC}(\mathcal{H},\ell)$ rather than the specific form of $\mathbb{B}_{n,\epsilon,\tau}$, so other complexity measures that guarantee a result analogous to \eqref{rel_conv} could be considered. 
\end{remark}

\begin{remark}
	The main goal of this paper is to establish the consistency of generalized resubstitution errors in a general setting, rather than obtain sharp bounds for the rate of such convergence. Therefore, we do not explicit the bound $\mathbb{B}_{n,\epsilon,\tau}$ or consider better bounds, based for example on the $n$-th shatter coefficient.
\end{remark}

The next lemma shows that, under the assumptions of this paper, the consistency of a generalized resubstitution error $\hat{\varepsilon}_{n}^{\mathscr{B}}$ is equivalent to the convergence to zero of its relative deviation from $\varepsilon_{n}$.

\begin{lemma}
	\label{corollary_equiv_cons}
	Fix a loss function $\ell$. The generalized resubstitution error $\hat{\varepsilon}_{n}^{\mathscr{B}}$ is consistent if, and only if, for any $\tau > 0$ fixed,
	\begin{linenomath}
		\begin{align}
			\label{cond_equi}
			\frac{\left| \hat{\varepsilon}_{n}^{\mathscr{B}} - \varepsilon_{n} \right|}{\sqrt[\alpha]{L^{\alpha}(\psi_{n}) + \tau}} \to 0
		\end{align}
	\end{linenomath}
	with probability one as $n \to \infty$. 
\end{lemma}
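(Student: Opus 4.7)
The plan is to reduce the biconditional to a direct squeeze argument on the denominator, exploiting the uniform moment bound \eqref{finite_moments_text}. Concretely, under this assumption the quantity $L^{\alpha}(\psi_{n})$ is bounded in a deterministic, non-random way by the constant $M \coloneqq \sup_{\psi \in \mathcal{H}} L^{\alpha}(\psi) < \infty$, and of course $L^{\alpha}(\psi_{n}) \geq 0$. Adding the strictly positive constant $\tau$ and taking an $\alpha$-th root therefore yields pathwise, for every outcome and every $n$,
\[
\tau^{1/\alpha} \ \leq \ \sqrt[\alpha]{L^{\alpha}(\psi_{n}) + \tau} \ \leq \ (M + \tau)^{1/\alpha}.
\]
Both bounds are finite, positive, and independent of the sample, which is the crux of the argument.

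From here the two implications are immediate. For the forward direction, assume $\hat{\varepsilon}_{n}^{\mathscr{B}}$ is consistent, so $|\hat{\varepsilon}_{n}^{\mathscr{B}} - \varepsilon_{n}| \to 0$ with probability one. Dividing by the deterministic lower bound gives
\[
\frac{|\hat{\varepsilon}_{n}^{\mathscr{B}} - \varepsilon_{n}|}{\sqrt[\alpha]{L^{\alpha}(\psi_{n}) + \tau}} \ \leq \ \tau^{-1/\alpha}\, |\hat{\varepsilon}_{n}^{\mathscr{B}} - \varepsilon_{n}| \ \longrightarrow \ 0
\]
almost surely, establishing \eqref{cond_equi}. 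For the reverse direction, assume \eqref{cond_equi} holds. Multiplying by the deterministic upper bound gives
\[
|\hat{\varepsilon}_{n}^{\mathscr{B}} - \varepsilon_{n}| \ \leq \ (M + \tau)^{1/\alpha}\,\frac{|\hat{\varepsilon}_{n}^{\mathscr{B}} - \varepsilon_{n}|}{\sqrt[\alpha]{L^{\alpha}(\psi_{n}) + \tau}} \ \longrightarrow \ 0
\]
almost surely, which is exactly the consistency condition \eqref{cond_consistency} from Definition \ref{def_consistency}.

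There is essentially no obstacle to overcome; the content of the lemma lies in the boundedness of the denominator away from both $0$ and $\infty$, and \eqref{finite_moments_text} is doing all of the work. The purpose of the lemma is structural rather than technical: later arguments will invoke Proposition \ref{prop_bound_convergence}, whose natural output is a \emph{relative} uniform deviation rather than an absolute one, so this equivalence lets us deduce consistency of $\hat{\varepsilon}_{n}^{\mathscr{B}}$ directly from a.s.\ control of the relative deviation, with $\tau$ available as a free positive parameter.
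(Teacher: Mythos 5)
Your proposal is correct and is essentially identical to the paper's proof: the paper sandwiches the relative deviation between $|\hat{\varepsilon}_{n}^{\mathscr{B}} - \varepsilon_{n}|/\sqrt[\alpha]{\sup_{\psi}L^{\alpha}(\psi)+\tau}$ and $|\hat{\varepsilon}_{n}^{\mathscr{B}} - \varepsilon_{n}|/\sqrt[\alpha]{\tau}$, which is exactly your deterministic bound on the denominator via \eqref{finite_moments_text}. Your write-up merely makes explicit the two implications that the paper leaves implicit in that inequality chain.
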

\begin{proof}
	The result follows from condition \eqref{finite_moments_text} and inequality
	\begin{linenomath}
		\begin{align*}
			\frac{\left| \hat{\varepsilon}_{n}^{\mathscr{B}} - \varepsilon_{n} \right|}{\sqrt[\alpha]{\sup\limits_{\psi \in \mathcal{H}} L^{\alpha}(\psi) + \tau}} \leq \frac{\left| \hat{\varepsilon}_{n}^{\mathscr{B}} - \varepsilon_{n} \right|}{\sqrt[\alpha]{L^{(\alpha)}(\psi_{n}) + \tau}} \leq \frac{\left| \hat{\varepsilon}_{n}^{\mathscr{B}} - \varepsilon_{n} \right|}{\sqrt[\alpha]{\tau}}.
		\end{align*}
	\end{linenomath}
\end{proof}

\section{Proof of results}
\label{AppProofs}

\begin{proof}[Proof of Proposition \ref{proposition_unbia}]
	The result follows from \eqref{finite_moments_text}, \eqref{finite_moments_Q} and the Dominated Convergence Theorem \cite[Theorem~A7]{braga2020fundamentals} since
	\begin{linenomath}
		\begin{align*}
			\left| \hat{\varepsilon}_{n}^{\mathscr{B}} - \varepsilon_{n} \right| \leq 2 \max\left\{\varepsilon_{n},\hat{\varepsilon}_{n}^{\mathscr{B}}\right\} \leq 2 \max\left\{\sup\limits_{\psi \in \mathcal{H}} L(\psi),L_{n}^{\mathscr{B}}(\psi_{n})\right\} < \infty
		\end{align*}
	\end{linenomath}
	with probability one for all $n \geq 1$.
\end{proof}

\begin{proof}[Proof of Theorem \ref{theoremER}]
	The result follows from Proposition \ref{prop_bound_convergence}, Lemma \ref{corollary_equiv_cons} and Borel-Cantelli Lemma by noting that, for any $\epsilon > 0$ and $\tau > 0$ fixed,
	\begin{linenomath}
		\begin{align*}
			\mathbb{P}\left(\frac{\left|\hat{\varepsilon}_{n}^{r} - \varepsilon_{n}\right|}{\sqrt[\alpha]{L^{\alpha}(\psi_{n}) + \tau}} > \epsilon\right) &= \mathbb{P}\left(\frac{|L_{n}(\psi_{n}) - L(\psi_{n})|}{\sqrt[\alpha]{L^{\alpha}(\psi_{n}) + \tau}} > \epsilon\right) \\
			&\leq \mathbb{P}\left(\left\{\exists \psi \in \mathcal{H}: \frac{|L_{n}(\psi) - L(\psi)|}{\sqrt[\alpha]{L^{\alpha}(\psi) + \tau}} > \epsilon\right\}\right)\\
			&= \mathbb{P}\left(\sup\limits_{\psi \in \mathcal{H}} \frac{\left|L(\psi) - L_{n}(\psi)\right|}{\sqrt[\alpha]{L^{\alpha}(\psi) + \tau}} > \epsilon\right).
		\end{align*}
	\end{linenomath}
\end{proof}

\begin{proof}[Proof of Proposition \ref{prop_suf1}]
	Observe that
	\begin{linenomath}
		\begin{align}
			\label{triangle_proof}
			\left| \hat{\varepsilon}_{n}^{\mathscr{B}} - \varepsilon_{n} \right| \leq \left| \hat{\varepsilon}_{n}^{\mathscr{B}} - \hat{\varepsilon}_{n}^{r} \right| + \left| \hat{\varepsilon}_{n}^{r} - \varepsilon_{n} \right|.
		\end{align}
	\end{linenomath}
	The result follows since the first term on the right-hand side of \eqref{triangle_proof} converges to zero with probability one by hypothesis and the second one converges to zero with probability one by Theorem \ref{theoremER}.
\end{proof}

\begin{proof}[Proof of Corollary \ref{cor_suf2}]
	We show that $|\hat{\varepsilon}_{n}^{\mathscr{B}} - \hat{\varepsilon}_{n}^{r}| \to 0$ with probability one, so the result follows from Proposition \ref{prop_suf1}. Denote by $\hat{\nu}_{n}$ and $\nu_{n}$ the empirical measures
	\begin{linenomath}
		\begin{align*}
			\hat{\nu}_{n} = \frac{1}{n} \sum_{i=1}^{n} \beta_{Z_{i},\psi_{n},S_{n}} & & \nu_{n} = \sum_{i=1}^{n} \delta_{Z_{i}}.
		\end{align*}
	\end{linenomath}
	Since $\ell$ is bounded by a constant $C_{\ell}$, we have that
	\begin{linenomath}
		\begin{equation*}
			\hat{\varepsilon}_{n}^{r} = \nu_{n}[\ell(Z,\psi_{n})] = \lim\limits_{m \to \infty} \sum_{k=1}^{m-1} \frac{C_{\ell}}{m} \nu_{n}\left(\ell(Z,\psi_{n}) > \frac{kC_{\ell}}{m}\right),
		\end{equation*}
	\end{linenomath} 
	and that
	\begin{linenomath}
		\begin{equation*}
			\hat{\varepsilon}_{n}^{\mathscr{B}} = \hat{\nu}_{n}[\ell(Z,\psi_{n})] = \lim\limits_{m \to \infty} \sum_{k=1}^{m-1} \frac{C_{\ell}}{m} \hat{\nu}_{n}\left(\ell(Z,\psi_{n}) > \frac{kC_{\ell}}{m}\right),
		\end{equation*}
	\end{linenomath}     
	in which $\nu_{n}[\cdot]$ and $\hat{\nu}_{n}[\cdot]$ mean expectation under $\nu_{n}$ and $\hat{\nu}_{n}$, respectively. From the representation of $\hat{\varepsilon}_{n}^{r}$ and $\hat{\varepsilon}_{n}^{\mathscr{B}}$ described above, we have that
	\begin{linenomath}
		\begin{align*}
			\left|\hat{\varepsilon}_{n}^{\mathscr{B}} - \hat{\varepsilon}_{n}^{r}\right| &= \left|\lim\limits_{m \to \infty} \sum_{k=1}^{m-1} \frac{C_{\ell}}{m} \left(\hat{\nu}_{n}\left(\ell(Z,\psi_{n}) > \frac{kC_{\ell}}{m}\right) - \nu_{n}\left(\ell(Z,\psi_{n}) > \frac{kC_{\ell}}{m}\right)\right) \right|\\
			&\leq \left|\lim\limits_{m \to \infty} \sum_{k=1}^{m-1} \frac{C_{\ell}}{m} \sup\limits_{\substack{\psi \in \mathcal{H}\\0 \leq b \leq C_{\ell}}} \left(\hat{\nu}_{n}\left(\ell(Z,\psi) > b\right) - \nu_{n}\left(\ell(Z,\psi) > b\right)\right) \right|\\
			&\leq C_{\ell} \sup\limits_{\substack{\psi \in \mathcal{H}\\0 \leq b \leq C_{\ell}}} \left|\hat{\nu}_{n}\left(\ell(Z,\psi) > b\right) - \nu_{n}\left(\ell(Z,\psi) > b\right)\right|\\
			&= C_{\ell} \sup\limits_{A \in \mathcal{A}_{\mathcal{H},\ell}^{\star}} \left|\hat{\nu}_{n}\left(A\right) - \nu_{n}\left(A\right)\right|
		\end{align*}
	\end{linenomath}
	for all samples $S_{n} \in \mathcal{Z}^{n}$ and the result follows. The second assertion is a consequence of the inequality 
	\begin{linenomath}
		\begin{equation*}
			\sup\limits_{A \in \mathcal{A}_{\mathcal{H},\ell}^{\star}} \left|\hat{\nu}_{n}\left(A\right) - \nu_{n}\left(A\right)\right| \leq \sup\limits_{\substack{A \in \mathcal{A}^{\star}_{\mathcal{H},\ell}\\ z \in \mathcal{Z}, \psi \in \mathcal{H}}} \left|\beta_{z,\psi,S_{n}}(A) - \delta_{z}(A)\right|
		\end{equation*}
	\end{linenomath}
	which holds for all samples $S_{n} \in \mathcal{Z}^{n}$.
\end{proof}

\begin{proof}[Proof of Corollary \ref{cor_suf3}]
	We show that $|\hat{\varepsilon}_{n}^{\mathscr{B}} - \hat{\varepsilon}_{n}^{r}| \to 0$ with probability one, so the result follows from Proposition \ref{prop_suf1}. The result follows since
	\begin{linenomath}
		\begin{align*}
			|\hat{\varepsilon}_{n}^{\mathscr{B}} - \hat{\varepsilon}_{n}^{r}| &\leq \frac{1}{n} \sum_{i=1}^{n} \left|\ell_{\mathscr{B}}(Z_{i},\psi_{n}) - \ell(Z_{i},\psi_{n})\right|\\
			&\leq \sup\limits_{z \in \mathcal{Z},\psi \in \mathcal{H}} \left|\ell_{\mathscr{B}}(z,\psi) - \ell(z,\psi)\right|
		\end{align*}
	\end{linenomath}
	for all sample $S_{n} \in \mathcal{Z}^{n}$.
\end{proof}

\begin{proof}[Proof of Proposition \ref{prop_condExchange}]
	Observe that
	\begin{linenomath}
		\begin{align*}
			\left|\hat{\varepsilon}_{n}^{\mathscr{B}_{n}} - \varepsilon_{n}\right| \leq \left|\hat{\varepsilon}_{n}^{\mathscr{B}_{n}} -  \varepsilon_{n}^{\mathscr{B}_{n}}\right| + \left|\varepsilon_{n}^{\mathscr{B}_{n}} - \varepsilon_{n}\right|.
		\end{align*}
	\end{linenomath}
	The result follows since the first term of the sum above converges to zero with probability one due to Theorem \ref{theoremER} and the second converges to zero with probability one by hypothesis.
\end{proof}

\begin{proof}[Proof of Proposition \ref{prop_symmetric}]
	Observe that, for all $\psi \in \mathcal{H}$,
	\begin{linenomath}
		\begin{align*}
			L^{\mathscr{B}_{n}}(\psi) &= \int_{\mathcal{Z}} \ell_{\mathscr{B}_{n}}(z,\psi) \ d\nu(z)\\
			&= \int_{\mathcal{Z}} \int_{\mathcal{Z}} \ell(z^{\prime},\psi) \ d\beta_{z,\psi,n}(z^{\prime}) \ d\nu(z)\\
			&= \int_{\mathcal{Z}} \int_{\mathcal{Z}} \ell(z^{\prime},\psi) \rho_{z,\psi,n}(z^{\prime})  \ d\nu(z^{\prime}) \ d\nu(z)\\
			&= \int_{\mathcal{Z}} \ell(z^{\prime},\psi) \int_{\mathcal{Z}}  \rho_{z,\psi,n}(z^{\prime})  \ d\nu(z)   \ d\nu(z^{\prime})\\
			&= \int_{\mathcal{Z}} \ell(z^{\prime},\psi) \ d\nu(z^{\prime}) = L(\psi)
		\end{align*}
	\end{linenomath}
	hence the condition of Proposition \ref{prop_condExchange} holds trivially.
\end{proof}

\begin{proof}[Proof of Corollary \ref{corr_symmetric}]
	The result follows since \eqref{cond-prop-symm} implies that 
	\begin{linenomath}
		\begin{align*}
			L^{\mathscr{B}_{n}}(\psi) &= \int_{\mathcal{Z}} \int_{\mathcal{Z}} \ell(z^{\prime},\psi) \rho_{z,\psi,n}(z^{\prime})  \ d\nu(z^{\prime}) \ d\nu(z)\\
			&= \int_{\mathcal{Z}} \int_{\mathcal{Z}} \ell(z^{\prime},\psi) \rho_{z^{\prime},\psi,n}(z)  \ d\nu(z^{\prime}) \ d\nu(z)\\
			&= \int_{\mathcal{Z}} \ell(z^{\prime},\psi) \int_{\mathcal{Z}}  \rho_{z^{\prime},\psi,n}(z)  \ d\nu(z)   \ d\nu(z^{\prime}) = L(\psi).
		\end{align*}
	\end{linenomath}
\end{proof}

\begin{proof}[Proof of Proposition \ref{prop_gaus_br}]
	The result follows from Corollary \ref{corr_symmetric} since (a) follows from the assumption that $X$ is absolutely continuous wrt Lebesgue measure and (b) follows since $p_{x,\psi,n}(x^{\prime}) = p_{x^{\prime},\psi,n}(x)$ for all $x,x^{\prime} \in \mathbb{R}^{d}$ and $\psi \in \mathcal{H}$.
\end{proof}

\begin{proof}[Proof of Proposition \ref{Prop_2der}]
	For $z \in \mathcal{Z}, \psi \in \mathcal{H}$ and $S_{n} \in \mathcal{Z}^{n}$ it holds
	\begin{linenomath}
		\begin{align}
			\label{taylor} \nonumber
			&\ell_{\mathscr{B}}(z,\psi) = \int_{\mathcal{Z}} \ell(z^{\prime},\psi) \ d\beta_{z,\psi,S_{n}}(z^{\prime})\\ \nonumber
			&= \ell(z,\psi) + \mathbb{E}\left[\left(Z_{z,\psi,S_{n}} - z\right)\right] \cdot \nabla \ell(z,\psi) \\
			&+ \frac{1}{2} \sum_{i,j = 1}^{d + m} \int_{\mathcal{Z}} (z^{\prime}_{i} - z_{i})(z^{\prime}_{j} - z_{j}) \frac{\partial^{2}\ell}{\partial z_{i} \partial z_{j}}(\bar{z},\psi) \ d\beta_{z,\psi,S_{n}}(z^{\prime}) \nonumber \\
			&= \ell(z,\psi) + \frac{1}{2} \sum_{i,j=1}^{d+m} \int_{\mathcal{Z}} (z^{\prime}_{i} - z_{i})(z^{\prime}_{j} - z_{j}) \frac{\partial^{2}\ell}{\partial z_{i} \partial z_{j}}(\bar{z},\psi) \ d\beta_{z,\psi,S_{n}}(z^{\prime}) 
		\end{align}
	\end{linenomath}
	in which $\bar{z}$ depends on $z^{\prime}$. Observe that the first equality follows from the Taylor expansion of $\ell(\cdot,\psi)$ around $z$ and the second follows since $\mathbb{E}(Z_{z,\psi,S_{n}}) = z$. Now,
	\begin{linenomath}
		\begin{align*}
			&|\ell_{\mathscr{B}}(z,\psi) - \ell(z,\psi)| \\
			&\leq \frac{1}{2} \sum_{i,j=1}^{d+m} \left[\sup\limits_{z \in \mathcal{Z}} \left|\frac{\partial^{2}\ell}{\partial z_{i} \partial z_{j}}(z,\psi)\right|\right] \int_{\mathcal{Z}} |(z^{\prime}_{i} - z_{i})||(z^{\prime}_{j} - z_{j})| \ d\beta_{z,\psi,S_{n}}(z^{\prime})  \\
			&\leq \frac{C_{2}}{2} \sum_{i,j=1}^{d+m} \sqrt{Var\left([Z_{z,\psi,S_{n}}]_{i}\right)Var\left([Z_{z,\psi,S_{n}}]_{j}\right)}
		\end{align*}
	\end{linenomath}
	in which the second inequality follows from \eqref{bounded_dev2} and Cauchy-Schwarz inequality. The result follows by Corollary \ref{cor_suf3} since the variances converge to zero with probability one as $n \to \infty$.
\end{proof}

\begin{proof}[Proof of Proposition \ref{prop_gaus_brR}]
	We will show that the conditions of Proposition \ref{Prop_2der} are in force. Let $\beta_{z,\psi,S_{n}}$ be a degenerate Gaussian law with mean $z$ and covariance matrix $\tilde{K}_{z,\psi,S_{n}}$ satisfying $[\tilde{K}_{z,\psi,S_{n}}]_{i,j} = [K_{z,\psi,S_{n}}]_{i,j}$ if $i,j < d+1$ and $[\tilde{K}_{z,\psi,S_{n}}]_{i,j} = 0$ if $i = d + 1$ or $j = d + 1$. Under these assumptions $\hat{\varepsilon}_{n}^{\mathscr{B}} = \hat{\varepsilon}_{n}^{gbr}$. 
	
	Recall that $Z_{z,\psi,S_{n}}$ is a random variable with distribution $\beta_{z,\psi,S_{n}}$ for $z \in \mathcal{Z}$ and $\psi \in \mathcal{H}$. Condition (b) of Proposition \ref{Prop_2der} is a direct consequence of \eqref{cond_var_zero}. It remains to show that condition (a) of Proposition \ref{Prop_2der} is in force.
	
	Since $\mathcal{Y}$ is compact, and the derivatives and second derivatives of $\psi$ are uniformly bounded in $\mathcal{X}$ and $\mathcal{H}$, it follows that
	\begin{linenomath}
		\begin{align*}
			\frac{\partial^{2}\ell}{\partial x_{i} \partial x_{j}}((x,y),\psi) &= 2(\psi(x) - y) \frac{\partial^{2}\psi}{\partial x_{i} \partial x_{j}}(x) + 2 \frac{\partial\psi}{\partial x_{i}}(x)\frac{\partial\psi}{\partial x_{j}}(x),
		\end{align*}
	\end{linenomath}
	is uniformly bounded. Moreover,
	\begin{linenomath}
		\begin{align*}
			\frac{\partial^{2}\ell}{\partial x_{i} \partial y}((x,y),\psi) &= - 2 \frac{\partial\psi}{\partial x_{i}}(x) & & \text{ and } & & \frac{\partial^{2}\ell}{\partial^{2} y}((x,y),\psi) = 2
		\end{align*}
	\end{linenomath}
	are also uniformly bounded. Therefore, condition (a) of Proposition \ref{Prop_2der} also holds and the result follows.
\end{proof} 

\begin{proof}[Proof of Lemma \ref{lemma_mean_distance}]
	We will compute the mean of $\delta(\hat{X})$ and an analogous deduction, but considering only the points in the sample with $Y_{i} = y$, holds to compute the mean of $\delta_{y}(\hat{X})$ conditioned on $\hat{Y} = y$. Let $I_{n}$ be a random variable and $N_{i}, i = 1,\dots,n$, be random vectors taking values in $\{1,\dots,n\}$ and $\mathcal{X} \times \mathcal{Y}$, respectively, all defined on the probability space $(\Omega,\mathcal{S},\mathbb{P})$ and independent, such that $\mathbb{P}(I_{n} = i) = 1/n$ for $i = 1,\dots,n$ and $N_{i}$ has the probability law $\mu_{X_{i},S_{n}} \delta_{Y_{i}}$. Define on $(\Omega,\mathcal{S},\mathbb{P})$ the random vector $(\hat{X},\hat{Y})$ by
	\begin{linenomath}
		\begin{equation*}
			(\hat{X},\hat{Y})(\omega) = N_{I_{n}(\omega)}(\omega)
		\end{equation*}
	\end{linenomath}
	for $\omega \in \Omega$. It follows that $(\hat{X},\hat{Y})$ has probability law $\nu_{n}$ and we will compute the mean $\mathbb{E}[\delta(\hat{X})]$. 
	
	Denote by $p_{i,S_{n}}$ the density of a Gaussian distribution with mean $X_{i}$ and covariance matrix $\sigma_{S_{n}}^{2}\Sigma$. It follows that
	\begin{linenomath}
		\begin{align*}
			\mathbb{E}\left[\delta(\hat{X})\right] &= \frac{1}{n} \sum_{i=1}^{n} \mathbb{E}[\delta(\hat{X}) | I_{n} = i]\\
			&= \frac{1}{n} \sum_{i=1}^{n}\int_{\sX} \delta(x) p_{i,S_{n}}(x) \ dx\\
			&= \frac{1}{n} \sum_{i,j=1}^{n} \int_{\sX} \delta(x,X_{j}) p_{i,S_{n}}(x) \mathds{1}_{\delta(x) = \delta(x,X_{j})} \ dx.
		\end{align*}
	\end{linenomath}
\end{proof}

\begin{proof}[Proof of Proposition \ref{lemma_chi_approx}]
	Define
	\begin{linenomath}
		\begin{equation*}
			M = \frac{1}{2} \min_{j \neq i} \delta(X_{i},X_{j})
		\end{equation*}
	\end{linenomath}
	and observe that
	\begin{linenomath}
		\begin{align*}
			\sigma_{S_{n}} \int_{\mathbb{R}^{d}} \frac{\delta(x,X_{i})}{\sigma_{S_{n}}} \ p_{i,S_{n}}(x) \ \mathds{1}_{\delta(x,X_{i}) < M} \ dx &\leq \mathbb{E}[\delta(\hat{X}) | I_{n} = i] \\
			&\leq \sigma_{S_{n}} \int_{\mathbb{R}^{d}} \frac{\delta(x,X_{i})}{\sigma_{S_{n}}} \ p_{i,S_{n}}(x) \ dx
		\end{align*}
	\end{linenomath}
	If $\hat{X}$ has a Gaussian distribution with mean $X_{i}$ and kernel $\sigma_{S_{n}} \Sigma$, then $\sigma_{S_{n}}^{-1} \delta(\hat{X},X_{i})$ has a chi distribution with $d$ degrees of freedom. Denoting by $\chi$ a random variable with this distribution, it follows that
	\begin{linenomath}
		\begin{equation*}
			\sigma_{S_{n}} \mathbb{E}[\chi\mathds{1}_{\chi < M/\sigma_{S_{n}}}] \leq \mathbb{E}[\delta(\hat{X}) | I_{n} = i] \leq \sigma_{S_{n}} \mathbb{E}[\chi]
		\end{equation*}
	\end{linenomath}
	The result follows by noting that
	\begin{linenomath}
		\begin{align*}
			\lim\limits_{\sigma_{S_{n}} \to 0} \frac{\mathbb{E}[\delta(\hat{X})]}{\sigma_{S_{n}}} = \frac{1}{n} \sum_{i=1}^{n} \lim\limits\limits_{\sigma_{S_{n}} \to 0} \frac{\mathbb{E}[\delta(\hat{X}) | I_{n} = i]}{\sigma_{S_{n}}}.
		\end{align*}
	\end{linenomath}
\end{proof}

\section{$L^{\infty}$-convergence and VC dimension}
\label{AppLinf}

In this section, we estimate the VC dimension of a class from the VC dimension of a class that is \textit{close} to it, in a sense we will make clear below. To easy notation, let $\Theta$ be a general set and consider the following sets of non-negative functions with domain $\mathcal{Z}$ indexed by the elements in $\Theta$:
\begin{linenomath}
	\begin{align*}
		\mathcal{F} = \{f_{\theta}: \theta \in \Theta\} & & \mathcal{G} = \{g_{\theta}: \theta \in \Theta\} & & \mathcal{G}_{n} = \{g_{\theta}^{(n)}: \theta \in \Theta\}
	\end{align*}
\end{linenomath}
for $n \geq 1$. The functions in these sets should be understood as loss functions of a hypothesis space indexed by $\Theta$, for example, $f_{\theta}(z) = \ell(z,\psi_{\theta})$. We define the VC dimension of these sets as, for example, $$d_{VC}(\mathcal{F}) = d_{VC}(\{\{z \in \mathcal{Z}: f_{\theta}(z) > \beta\}: \theta \in \Theta,\beta \in (0,C_{\mathcal{F}})\}).$$

We define the distance between two sets $\mathcal{F}$ and $\mathcal{G}$ as
\begin{linenomath}
	\begin{equation}
		\label{dist_sets}
		d(\mathcal{F},\mathcal{G}) = \sup\limits_{\theta \in \Theta} \lVert f_{\theta} - g_{\theta}\rVert_{\infty}.
	\end{equation}    
\end{linenomath}
For each integer $1 \leq d \leq d_{VC}(\mathcal{F})$ let
\begin{linenomath}
	\begin{equation*}
		S(\mathcal{F},d) = \left\{\boldsymbol{z} = \{z_{1},\dots,z_{d}\} \subset \mathcal{Z}: \mathcal{F} \text{ shatters } \{z_{1},\dots,z_{d}\}\right\}
	\end{equation*}
\end{linenomath}
be the collection of $d$ points that $\mathcal{F}$ can shatter. We denote simply $S(\mathcal{F}) \coloneqq S(\mathcal{F},d_{VC}(\mathcal{F}))$. For each set $\boldsymbol{z}$ in $S(\mathcal{F},d)$ and $b \in \{-1,+1\}^{s}, s \geq d,$ define
\begin{linenomath}
	\begin{equation*}
		\mathcal{F}(\boldsymbol{z},b) = \left\{\theta \in \Theta: \sup_{\beta} \min_{i = 1,\dots,d} b_{i}(f_{\theta}(z_{i}) - \beta) > 0\right\}
	\end{equation*}
\end{linenomath}
as the functions in $\mathcal{F}$ that realize dichotomy $b$ with the points $\boldsymbol{z}$.

For $d \geq 1$, we define the $d$-margin of a set $\mathcal{F}$ as
\begin{linenomath}
	\begin{align*}
		\delta_{\mathcal{F}}(d) = \begin{cases}
			\sup\limits_{\boldsymbol{z} \in S(\mathcal{F},d)} \min\limits_{b \in \{-1,1\}^{d}} \sup\limits_{\theta \in \mathcal{F}(\boldsymbol{z},b)} \sup\limits_{\beta} \min\limits_{i} b_{i}\left(f_{\theta}(z_{i}) - \beta\right) & \text{ if } d \leq d_{VC}(\mathcal{F})\\
			\sup\limits_{\boldsymbol{z} \in S(\mathcal{F})} \sup\limits_{z_{d_{VC}(\mathcal{F}) + 1},\dots,z_{d}} \min\limits_{b \in \{-1,1\}^{d}} \sup\limits_{\theta \in \mathcal{F}(\boldsymbol{z},b)} \sup\limits_{\beta} \min\limits_{i} b_{i}\left(f_{\theta}(z_{i}) - \beta\right), & \text{ if } d > d_{VC}(\mathcal{F}).
		\end{cases}
	\end{align*}
\end{linenomath}
Observe that $\delta_{\mathcal{F}}(d)$ is a non-increasing function of $d$. We denote $\delta_{\mathcal{F}} \coloneqq \delta_{\mathcal{F}}(d_{VC}(\mathcal{F}))$. The VC dimension of $\mathcal{F}$ can be defined based on $\delta_{\mathcal{F}}(d)$.

\begin{lemma}
	\label{lemma_margin}
	\begin{align*}
		d_{VC}(\mathcal{F}) = d \iff \delta_{\mathcal{F}}(d) > 0 \text{ and } \delta_{\mathcal{F}}(d+1) \leq 0
	\end{align*}
\end{lemma}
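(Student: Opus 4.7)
The plan is to reduce the biconditional to a single auxiliary equivalence,
\begin{equation*}
	\delta_{\mathcal{F}}(d) > 0 \iff d \leq d_{VC}(\mathcal{F}),
\end{equation*}
from which the lemma follows immediately: $d_{VC}(\mathcal{F}) = d$ is the same as saying $d \leq d_{VC}(\mathcal{F})$ together with $d+1 > d_{VC}(\mathcal{F})$, which by the equivalence and its contrapositive is exactly $\delta_{\mathcal{F}}(d) > 0$ and $\delta_{\mathcal{F}}(d+1) \leq 0$. So the bulk of the work is establishing the two directions of the auxiliary equivalence.

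For the direction $d \leq d_{VC}(\mathcal{F}) \Rightarrow \delta_{\mathcal{F}}(d) > 0$, I would first use that any subset of a shattered set is itself shattered, so $S(\mathcal{F},d)$ is nonempty. Pick any $\boldsymbol{z}=\{z_1,\ldots,z_d\}\in S(\mathcal{F},d)$, and for each dichotomy $b\in\{-1,+1\}^d$ use shattering (in the non-strict sense afforded by the sets $\{z: f_\theta(z)>\beta\}$) to produce some $(\theta,\beta)$ realizing $b$; then slightly perturb $\beta$ to an intermediate value between $\max_{i:b_i=-1}f_\theta(z_i)$ and $\min_{i:b_i=+1}f_\theta(z_i)$ to obtain the strict inequality $\min_i b_i(f_\theta(z_i)-\beta)>0$. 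This makes $\mathcal{F}(\boldsymbol{z},b)$ nonempty and, moreover, forces $\sup_{\theta\in\mathcal{F}(\boldsymbol{z},b)}\sup_\beta \min_i b_i(f_\theta(z_i)-\beta) > 0$. Since $\{-1,+1\}^d$ is finite, the minimum over $b$ of finitely many strictly positive quantities is strictly positive, giving $\delta_{\mathcal{F}}(d)>0$.

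For the direction $d > d_{VC}(\mathcal{F}) \Rightarrow \delta_{\mathcal{F}}(d) \leq 0$, I would work with the second branch of the definition. Fix any $\boldsymbol{z}_0\in S(\mathcal{F})$ of size $d_{VC}(\mathcal{F})$ and any extension $z_{d_{VC}(\mathcal{F})+1},\ldots,z_d$ to obtain a set $\boldsymbol{z}$ of size $d>d_{VC}(\mathcal{F})$. By definition of VC dimension, $\boldsymbol{z}$ cannot be shattered, so there exists at least one dichotomy $b^*\in\{-1,+1\}^d$ with $\mathcal{F}(\boldsymbol{z},b^*)=\emptyset$; thus $\sup_{\theta\in\mathcal{F}(\boldsymbol{z},b^*)}\sup_\beta\min_i b^*_i(f_\theta(z_i)-\beta) \leq 0$ under the usual convention $\sup\emptyset=-\infty$, which makes $\min_{b}\leq 0$ for this $\boldsymbol{z}$. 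Because this holds for every choice of shattered $\boldsymbol{z}_0$ and every extension, taking suprema preserves the bound and yields $\delta_{\mathcal{F}}(d)\leq 0$.

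The main obstacles are the two conventional subtleties: reconciling the strict inequalities appearing in the definition of $\mathcal{F}(\boldsymbol{z},b)$ with the non-strict inequalities implicit in the definition of shattering via the sets $A_{\ell,\psi,b}$ (handled by the threshold-perturbation argument above), and ensuring that the supremum over the empty set is treated as non-positive so that an unrealizable dichotomy actually contributes $\leq 0$ to the inner minimum. Once these are settled, the proof is essentially a packaging of the definitions of $\delta_{\mathcal{F}}$ and $d_{VC}(\mathcal{F})$.
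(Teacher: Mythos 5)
Your proposal is correct and follows essentially the same route as the paper: both reduce the lemma to the auxiliary equivalence that $\delta_{\mathcal{F}}(d) > 0$ holds if and only if $\mathcal{F}$ shatters some $d$ points, and then argue each direction from the definitions of $\delta_{\mathcal{F}}$ and $\mathcal{F}(\boldsymbol{z},b)$. If anything, your treatment is slightly more careful than the paper's, since you explicitly perturb the threshold $\beta$ to turn the non-strict inequality on the $b_i=-1$ side of a realized dichotomy into the strict one required by $\mathcal{F}(\boldsymbol{z},b)$, a point the paper's proof glosses over.
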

\begin{proof}
	We show that $\delta_{\mathcal{F}}(d) > 0$ if, and only if, $\mathcal{F}$ shatters some $d$ points in $\mathcal{Z}$ so the result follows. On the one hand, if $\mathcal{F}$ shatters $\{z_{1},\dots,z_{d}\}$, then for all $b \in \{-1,1\}^{d}$ there exists $\theta_{b} \in \Theta$ and $\beta_{b} \in (0,C_{\mathcal{F}})$ such that
	\begin{linenomath}
		\begin{align*}
			b_{i}\left(f_{\theta_{b}}(z_{i}) - \beta_{b}\right) > 0 \text{ for all } i = 1,\dots,d,
		\end{align*}
	\end{linenomath}
	and hence $\delta_{\mathcal{F}}(d) > 0$. On the other hand, if $\delta_{\mathcal{F}}(d) =\delta > 0$ then there exists a sequence $\{z_{1},\dots,z_{d}\}$ such that, for all $b \in \{-1,1\}^{d}$ there exists $\theta_{b} \in \Theta$ and $\beta_{b} \in (0,C_{\mathcal{F}})$ such that
	\begin{linenomath}
		\begin{align*}
			b_{i}\left(f_{\theta_{b}}(z_{i}) - \beta_{b}\right) > \delta/2 > 0 \text{ for all } i = 1,\dots,d,
		\end{align*}
	\end{linenomath}
	and hence $\mathcal{F}$ shatters some $d$ points in $\mathcal{Z}$.
\end{proof}

The $d$-margins are also associated with the fat-shattering dimension \citep{bartlett1994fat} of a set of real-valued functions which can be defined based on them.

\begin{definition}
	\label{def_fat}
	The $\gamma$ fat-shattering dimension of a set $\mathcal{F}$ is the greatest integer $d$ such that $\delta_{\mathcal{F}}(d) \geq \gamma$.
\end{definition}

It follows from Lemma \ref{lemma_margin} that if $\mathcal{G}$ is close enough to $\mathcal{F}$, then its VC dimension cannot differ too much from that of $\mathcal{F}$.

\begin{lemma}
	\label{lemma_closeVC}
	For $d \geq 1$,
	\begin{linenomath}
		\begin{align*}
			d(\mathcal{F},\mathcal{G}) < |\delta_{\mathcal{F}}(d_{VC}(\mathcal{F}) + d)| \implies d_{VC}(\mathcal{G}) < d_{VC}(\mathcal{F}) + d.
		\end{align*}
	\end{linenomath}
	and
	\begin{linenomath}
		\begin{align*}
			d(\mathcal{F},\mathcal{G}) < \delta_{\mathcal{F}} \implies d_{VC}(\mathcal{F}) \leq d_{VC}(\mathcal{G}). 
		\end{align*}
	\end{linenomath}
	In particular,
	\begin{linenomath}
		\begin{equation*}
			d(\mathcal{F},\mathcal{G}) < \min\{\delta_{\mathcal{F}},\delta_{\mathcal{G}}\} \implies d_{VC}(\mathcal{F}) = d_{VC}(\mathcal{G}). 
		\end{equation*}
	\end{linenomath}
\end{lemma}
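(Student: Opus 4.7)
The backbone of the proof is a single $L^{\infty}$ perturbation bound. Writing $\rho := d(\mathcal{F},\mathcal{G})$, for every $\theta \in \Theta$, $z \in \mathcal{Z}$, sign $b \in \{-1,+1\}$ and threshold $\beta \in \mathbb{R}$ one has $|b(f_\theta(z)-\beta) - b(g_\theta(z)-\beta)| = |f_\theta(z)-g_\theta(z)| \leq \rho$. Hence if a list $(\theta_b,\beta_b)$ realizes a dichotomy $b$ on a tuple $\boldsymbol{z}$ in $\mathcal{F}$ with margin strictly exceeding $\rho$, then the same list realizes $b$ in $\mathcal{G}$ with strictly positive margin, and symmetrically; dually, strict shattering by one of the two classes yields margin at least $-\rho$ in the other. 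My plan is to use this single fact, together with Lemma~\ref{lemma_margin}, to produce all three implications.

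The second implication is the cleanest instance. Because $\delta_{\mathcal{F}} = \delta_{\mathcal{F}}(d_{VC}(\mathcal{F}))$ is a supremum and $\rho < \delta_{\mathcal{F}}$, I extract $\boldsymbol{z} \in S(\mathcal{F})$ such that every dichotomy $b \in \{-1,+1\}^{d_{VC}(\mathcal{F})}$ admits $\theta_b \in \mathcal{F}(\boldsymbol{z},b)$ and $\beta_b$ with $\min_i b_i(f_{\theta_b}(z_i)-\beta_b) > \rho$; the perturbation bound then upgrades this to $\min_i b_i(g_{\theta_b}(z_i)-\beta_b) > 0$, so $\mathcal{G}$ shatters $\boldsymbol{z}$ and $d_{VC}(\mathcal{G}) \geq d_{VC}(\mathcal{F})$. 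The third implication will follow immediately by combining this with its symmetric twin (exchange the roles of $\mathcal{F}$ and $\mathcal{G}$, using the symmetry of $d$). For the first implication I argue by contradiction: assume $d_{VC}(\mathcal{G}) \geq D := d_{VC}(\mathcal{F})+d$, extract a $D$-tuple $\boldsymbol{z}'$ shattered by $\mathcal{G}$ together with witnesses $(\theta_b,\beta_b)$ with strictly positive $g$-margin, and apply the perturbation bound to get $\min_i b_i(f_{\theta_b}(z'_i)-\beta_b) > -\rho$ for every $b$. Matching this against the definition of the $D$-margin, I aim for $\delta_{\mathcal{F}}(D) \geq -\rho$, equivalently $|\delta_{\mathcal{F}}(D)| \leq \rho$, which contradicts the hypothesis.

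The main obstacle is the matching step in the first implication. The definition of $\delta_{\mathcal{F}}(D)$ in the regime $D > d_{VC}(\mathcal{F})$ takes its outer supremum over tuples $\boldsymbol{z}$ of the form ``element of $S(\mathcal{F})$ plus $d$ extra free points'' and restricts the inner supremum over $\theta$ to $\mathcal{F}(\boldsymbol{z},b)$. Neither of these constraints need be satisfied by the tuple $\boldsymbol{z}'$ and the parameters $\theta_b$ coming from the $\mathcal{G}$-shattering hypothesis. The technical core of the proof will consist in showing that the unrestricted sup over the $d$ extra free points, together with the unconstrained sup over $\beta$ inside the definition, suffices to accommodate the configuration produced by $\mathcal{G}$ and to transfer the margin lower bound $-\rho$ into a genuine lower bound on $\delta_{\mathcal{F}}(D)$ itself.
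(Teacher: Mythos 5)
Your perturbation bound and your treatment of the second and third implications are sound: extracting a tuple $\boldsymbol{z}\in S(\mathcal{F})$ whose every dichotomy is realized with $f$-margin strictly exceeding $\rho$ and transferring the shattering to $\mathcal{G}$ is exactly the right move, and the third implication then follows by symmetry. (The paper reaches the same conclusions by first asserting a two-sided sandwich $\delta_{\mathcal{F}}(d)-\delta<\delta_{\mathcal{G}}(d)<\delta_{\mathcal{F}}(d)+\delta$ for all $d$ and then invoking Lemma \ref{lemma_margin}; your direct shattering transfer is, if anything, more explicit for the lower-bound half.)

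The first implication, however, is not proved. Your contradiction argument needs to convert ``$\mathcal{G}$ shatters a $D$-tuple $\boldsymbol{z}'$, hence every dichotomy is realized with $f$-margin exceeding $-\rho$'' into the bound $\delta_{\mathcal{F}}(D)\geq -\rho$, and you correctly diagnose why this is not immediate: for $D>d_{VC}(\mathcal{F})$ the outer supremum in $\delta_{\mathcal{F}}(D)$ ranges only over tuples whose first $d_{VC}(\mathcal{F})$ points lie in $S(\mathcal{F})$, and the inner supremum only over $\theta\in\mathcal{F}(\boldsymbol{z},b)$. But you then defer the resolution (``the technical core of the proof will consist in showing\dots''), and the obstacle is real rather than cosmetic: the first $d_{VC}(\mathcal{F})$ coordinates of $\boldsymbol{z}'$ need not be shattered by $\mathcal{F}$, since the $g$-margins witnessing the $\mathcal{G}$-shattering may be arbitrarily small compared to $\rho$ and so the shattering does not transfer back; and the witnesses $\theta_b$ need not belong to $\mathcal{F}(\boldsymbol{z},b)$, since membership requires strictly positive $f$-margin on those coordinates while you only control the margin from below by $-\rho$. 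What you are missing is precisely the upper half of the sandwich, $\delta_{\mathcal{G}}(D)<\delta_{\mathcal{F}}(D)+\rho$, which the paper obtains and then combines with Lemma \ref{lemma_margin} via the implication $\delta_{\mathcal{G}}(D)<0\Rightarrow\mathcal{G}$ shatters no $D$ points. Until that step is supplied, the first implication remains open.
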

\begin{proof}
	We assume that $\delta_{\mathcal{F}}(d_{VC}(\mathcal{F}) + d) < 0$ otherwise the first assertion holds trivially since then $\mathcal{G} = \mathcal{F}$. Fix a $\delta > 0$. If $d(\mathcal{F},\mathcal{G}) < \delta$ then, for any sequence $\{z_{1},\dots,z_{d}\}$ and $b \in \{-1,1\}^{d}$, it holds
	\begin{linenomath}
		\begin{align*}
			\left|\sup\limits_{\theta,\beta} \min_{i} b_{i}\left(f_{\theta}(z_{i}) - \beta\right) - \sup\limits_{\theta,\beta} \min_{i} b_{i}\left(g_{\theta}(z_{i}) - \beta\right) \right| < \delta
		\end{align*}
	\end{linenomath}
	and therefore
	\begin{linenomath}
		\begin{align*}
			\delta_{\mathcal{F}}(d) - \delta < \delta_{\mathcal{G}}(d) < \delta_{\mathcal{F}}(d) + \delta
		\end{align*}
	\end{linenomath}
	for all $d \geq 1$. The result follows from the inequality above and Lemma \ref{lemma_margin} since, if $d(\mathcal{F},\mathcal{G}) < |\delta_{\mathcal{F}}(d_{VC}(\mathcal{F}) + d)|$, then 
	\begin{linenomath}
		\begin{equation}
			\delta_{\mathcal{G}}(d_{VC}(\mathcal{F}) + d) < \delta_{\mathcal{F}}(d_{VC}(\mathcal{F}) + d) + |\delta_{\mathcal{F}}(d_{VC}(\mathcal{F}) + d)| = 0
		\end{equation}
	\end{linenomath}
	and if $d(\mathcal{F},\mathcal{G}) < \delta_{\mathcal{F}}$, then $0 < \delta_{\mathcal{G}}(d_{VC}(\mathcal{F}))$.
\end{proof}

It follows from Lemma \ref{lemma_closeVC} that the VC dimension is a continuous function with respect to the distance \eqref{dist_sets} when $\delta_{\mathcal{F}}(d_{VC}(\mathcal{F}) + 1) < 0$.

\begin{lemma}
	\label{lemma_ControlVC}
	If $\delta_{\mathcal{F}}(d_{VC}(\mathcal{F}) + d) < 0$ for a $d \geq 1$ then
	\begin{linenomath}
		\begin{equation*}
			\lim\limits_{n \to \infty} d(\mathcal{G}_{n},\mathcal{F}) = 0 \implies d_{VC}(\mathcal{F}) \leq \lim\limits_{n \to \infty} d_{VC}(\mathcal{G}_{n}) < d_{VC}(\mathcal{F}) + d.
		\end{equation*}
	\end{linenomath}
	In particular, if $\delta_{\mathcal{F}}(d_{VC}(\mathcal{F}) + 1) < 0$ then
	\begin{linenomath}
		\begin{equation*}
			\lim\limits_{n \to \infty} d(\mathcal{G}_{n},\mathcal{F}) = 0 \implies \lim\limits_{n \to \infty} d_{VC}(\mathcal{G}_{n}) = d_{VC}(\mathcal{F}).
		\end{equation*}
	\end{linenomath}
\end{lemma}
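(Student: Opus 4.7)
The plan is to deduce Lemma \ref{lemma_ControlVC} directly from Lemma \ref{lemma_closeVC} by choosing a small enough threshold so that both one-sided bounds on the VC dimension apply simultaneously for all large $n$. The only real content is verifying that this threshold is strictly positive, which uses the definition of $d_{VC}(\mathcal{F})$ and the hypothesis on $\delta_{\mathcal{F}}(d_{VC}(\mathcal{F}) + d)$.

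First, I would set $\eta \coloneqq \min\{\delta_{\mathcal{F}},\, |\delta_{\mathcal{F}}(d_{VC}(\mathcal{F}) + d)|\}$ and argue $\eta > 0$. The second quantity is positive by the standing hypothesis $\delta_{\mathcal{F}}(d_{VC}(\mathcal{F}) + d) < 0$. For the first, I invoke Lemma \ref{lemma_margin}: since $d_{VC}(\mathcal{F})$ is the VC dimension, we have $\delta_{\mathcal{F}}(d_{VC}(\mathcal{F})) > 0$, i.e.\ $\delta_{\mathcal{F}} > 0$. Thus $\eta > 0$.

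Next, since $d(\mathcal{G}_n,\mathcal{F}) \to 0$, there exists $N$ such that $d(\mathcal{G}_n,\mathcal{F}) < \eta$ for every $n \geq N$. For such $n$, the first implication of Lemma \ref{lemma_closeVC} applied with the pair $(\mathcal{F},\mathcal{G}_n)$ gives $d_{VC}(\mathcal{G}_n) < d_{VC}(\mathcal{F}) + d$, and the second implication gives $d_{VC}(\mathcal{F}) \leq d_{VC}(\mathcal{G}_n)$. Consequently, for every $n \geq N$,
\begin{equation*}
d_{VC}(\mathcal{F}) \;\leq\; d_{VC}(\mathcal{G}_n) \;<\; d_{VC}(\mathcal{F}) + d.
\end{equation*}
Since the sequence $d_{VC}(\mathcal{G}_n)$ is integer-valued and eventually confined to the finite set $\{d_{VC}(\mathcal{F}),\ldots,d_{VC}(\mathcal{F})+d-1\}$, taking limits (in the sense of $\liminf$ and $\limsup$, both of which lie in this range) yields the two-sided bound stated in the lemma.

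Finally, for the special case $d = 1$, the range $\{d_{VC}(\mathcal{F}),\ldots,d_{VC}(\mathcal{F}) + d - 1\}$ collapses to the single value $d_{VC}(\mathcal{F})$, so $d_{VC}(\mathcal{G}_n) = d_{VC}(\mathcal{F})$ for all $n \geq N$ and the limit is well-defined and equal to $d_{VC}(\mathcal{F})$. There is no real obstacle here beyond bookkeeping: the substantive work has already been carried out in Lemma \ref{lemma_closeVC}, whose proof controls $|\delta_{\mathcal{G}}(k) - \delta_{\mathcal{F}}(k)|$ by $d(\mathcal{F},\mathcal{G})$; the present lemma is essentially a compactness/continuity corollary of that estimate together with the integer-valuedness of the VC dimension.
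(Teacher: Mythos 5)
Your proof is correct and follows essentially the same route as the paper: both arguments apply the two implications of Lemma \ref{lemma_closeVC} once $n$ is large enough that $d(\mathcal{G}_n,\mathcal{F})$ falls below both $\delta_{\mathcal{F}}$ and $|\delta_{\mathcal{F}}(d_{VC}(\mathcal{F})+d)|$, the positivity of the former coming from Lemma \ref{lemma_margin}. Your explicit remark that the integer-valued sequence $d_{VC}(\mathcal{G}_n)$ is eventually confined to a finite range (so that $\liminf$ and $\limsup$ both obey the stated bounds, and the limit is literally attained when $d=1$) is a small bookkeeping refinement over the paper's terser write-up, but not a different argument.
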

\begin{proof}
	That $\lim d_{VC}(\mathcal{G}_{n}) \geq d_{VC}(\mathcal{F})$ follows from Lemma \ref{lemma_closeVC}, and the fact that there exists a $n_{0}$ such that $d(\mathcal{G}_{n},\mathcal{F}) < \delta_{\mathcal{F}}$ for all $n > n_{0}$ when this distance converges to zero. That $\lim d_{VC}(\mathcal{G}_{n}) < d_{VC}(\mathcal{F}) + d$ for $d \geq 1$ whenever $\delta_{\mathcal{F}}(d_{VC}(\mathcal{F}) + d) < 0$ also follows from Lemma \ref{lemma_closeVC} and the fact that there exists a $n_{0}$ such that $d(\mathcal{G}_{n},\mathcal{F}) < |\delta_{\mathcal{F}}(d_{VC}(\mathcal{F}) + d)|$ for all $n > n_{0}$ when this distance converges to zero. 
\end{proof}

Denote by $\delta_{\mathcal{H},\ell}(d)$ the $d$-margin of hypothesis space $\mathcal{H}$ under loss function $\ell$. We define its marginal VC dimension.

\begin{definition}
	\label{def_marginalVC}
	The marginal VC dimension of a hypothesis space $\mathcal{H}$ under loss function $\ell$ is the least integer $d$ such that $\delta_{\mathcal{H},\ell}(d) < 0$ and is denoted by $m_{VC}(\mathcal{H},\ell)$. If $\delta_{\mathcal{H},\ell}(d) \geq 0$ for all $d \geq 1$ then $m_{VC}(\mathcal{H},\ell) = \infty$.
\end{definition}

It follows from Lemma \ref{lemma_margin} that $d_{VC}(\mathcal{H},\ell) < m_{VC}(\mathcal{H},\ell)$. Let $\{\ell_{n}\}$ be a sequence of loss functions that converges in $L^{\infty}$ to $\ell$. The main result of this section states that if $m_{VC}(\mathcal{H},\ell) < \infty$ then there exists a $n_{0}$ such that $d_{VC}(\mathcal{H},\ell_{n}) < \infty$ for all $n > n_{0}$.

\begin{proposition}
	\label{prop_ConvLF}
	Fix a hypothesis space $\mathcal{H}$ and let $\{\ell_{n}\}$ and $\ell$ be loss functions such that
	\begin{linenomath}
		\begin{equation*}
			\lim\limits_{n \to \infty} \sup_{z \in \mathcal{Z},\psi \in \mathcal{H}} |\ell_{n}(z,\psi) - \ell(z,\psi)| = 0.
		\end{equation*}
	\end{linenomath}
	If $m_{VC}(\mathcal{H},\ell) < \infty$, then 
	\begin{linenomath}
		\begin{equation*}
			\limsup\limits_{n_{0} \to \infty} d_{VC}(\mathcal{H},\ell_{n}) < \infty.
		\end{equation*}
	\end{linenomath}
\end{proposition}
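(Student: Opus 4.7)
The strategy is to deduce the proposition as a direct corollary of Lemma \ref{lemma_closeVC}, applied to the indexed families of loss functions $\mathcal{F} = \{\ell(\cdot,\psi) : \psi \in \mathcal{H}\}$ and $\mathcal{G}_n = \{\ell_n(\cdot,\psi) : \psi \in \mathcal{H}\}$, with $\Theta = \mathcal{H}$ serving as the common index set. Under this identification the set-distance from \eqref{dist_sets} becomes the uniform distance $\sup_{\psi\in\mathcal{H}}\sup_{z\in\mathcal{Z}}|\ell(z,\psi)-\ell_n(z,\psi)|$, which vanishes as $n\to\infty$ by hypothesis of the proposition.

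The first step is to extract from the finiteness of $m_{VC}(\mathcal{H},\ell)$ a specific strictly negative value of the margin sequence. Setting $m := m_{VC}(\mathcal{H},\ell)$, Definition \ref{def_marginalVC} gives $\delta_{\mathcal{H},\ell}(m) < 0$, and Lemma \ref{lemma_margin} forces $d_{VC}(\mathcal{H},\ell) < m$. In particular $k := m - d_{VC}(\mathcal{H},\ell) \geq 1$ and $\delta_{\mathcal{H},\ell}(d_{VC}(\mathcal{H},\ell)+k) = \delta_{\mathcal{H},\ell}(m) < 0$, which is precisely the hypothesis of Lemma \ref{lemma_closeVC} with $d = k$. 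That lemma then yields
\[
d(\mathcal{G}_n,\mathcal{F}) < |\delta_{\mathcal{H},\ell}(m)| \;\Longrightarrow\; d_{VC}(\mathcal{H},\ell_n) < d_{VC}(\mathcal{H},\ell) + k = m.
\]

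The second step is to combine this implication with the assumed uniform convergence. Since $|\delta_{\mathcal{H},\ell}(m)|$ is a fixed positive constant and $d(\mathcal{G}_n,\mathcal{F}) \to 0$, there exists $n_0$ such that $d(\mathcal{G}_n,\mathcal{F}) < |\delta_{\mathcal{H},\ell}(m)|$ for every $n \geq n_0$; consequently $d_{VC}(\mathcal{H},\ell_n) \leq m-1$ for every such $n$, and hence $\limsup_{n\to\infty} d_{VC}(\mathcal{H},\ell_n) \leq m-1 < \infty$, which is the desired conclusion.

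I do not anticipate a substantive obstacle: the result is essentially a repackaging of the continuity-of-VC-dimension statement already captured in Lemmas \ref{lemma_closeVC} and \ref{lemma_ControlVC}. The only delicate point is to use the strict inequality $\delta_{\mathcal{H},\ell}(m) < 0$ afforded by the definition of $m_{VC}$, rather than the non-strict $\delta_{\mathcal{H},\ell}(d_{VC}+1) \leq 0$ used in the definition of $d_{VC}$; this strict negativity is what supplies the positive buffer $|\delta_{\mathcal{H},\ell}(m)|$ against which the vanishing uniform error $d(\mathcal{G}_n,\mathcal{F})$ can be compared to produce a uniform bound on $d_{VC}(\mathcal{H},\ell_n)$ for all large $n$.
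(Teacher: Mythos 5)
Your proof is correct and follows essentially the same route as the paper: the paper's one-line proof cites Lemma \ref{lemma_ControlVC}, whose own proof is precisely your argument via Lemma \ref{lemma_closeVC} (comparing the vanishing uniform distance against the fixed positive buffer $|\delta_{\mathcal{H},\ell}(m_{VC}(\mathcal{H},\ell))|$). You have merely unfolded that intermediate lemma, spelling out the details the paper leaves implicit.
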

\begin{proof}
	The result follows from inequality
	\begin{linenomath}
		\begin{equation*}
			\limsup\limits_{n \to \infty} d_{VC}(\mathcal{H},\ell_{n}) < m_{VC}(\mathcal{H},\ell)
		\end{equation*}
	\end{linenomath}
	which is a direct consequence of Lemma \ref{lemma_ControlVC}.
\end{proof}

The next lemma is a non-asymptotic version of Proposition \ref{prop_ConvLF}.

\begin{lemma}
	\label{lemma_nonAss}
	Fix a hypothesis space $\mathcal{H}$ and let $\ell_{1}$ and $\ell_{2}$ be loss functions such that
	\begin{linenomath}
		\begin{equation*}
			\sup_{z \in \mathcal{Z},\psi \in \mathcal{H}} |\ell_{1}(z,\psi) - \ell_{2}(z,\psi)| = \gamma
		\end{equation*}
	\end{linenomath}
	for a $\gamma > 0$. If there exists a $d \geq 1$ such that $\gamma < |\delta_{\mathcal{H},\ell_{1}}(d_{VC}(\mathcal{H},\ell_{1}) + d)|$, then 
	\begin{linenomath}
		\begin{equation*}
			d_{VC}(\mathcal{H},\ell_{2}) < d_{VC}(\mathcal{H},\ell_{1}) + \min \{d \geq 1: \gamma < |\delta_{\mathcal{H},\ell_{1}}(d_{VC}(\mathcal{H},\ell_{1}) + d)|\}.
		\end{equation*}
	\end{linenomath}    
	In particular, if $d_{VC}(\mathcal{H},\ell_{1}) < \infty$ then $d_{VC}(\mathcal{H},\ell_{2}) < \infty$.
\end{lemma}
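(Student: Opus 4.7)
The plan is to recognize Lemma \ref{lemma_nonAss} as a direct non-asymptotic counterpart to the implication already established in Lemma \ref{lemma_closeVC}, and to apply that lemma to the function classes induced by $\ell_{1}$ and $\ell_{2}$ on $\mathcal{H}$. The only real work is to set up the identification carefully and then pick the right value of $d$ to feed into the hypothesis of Lemma \ref{lemma_closeVC}.

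First I would specialize the framework of Appendix \ref{AppLinf} by taking the index set to be $\Theta = \mathcal{H}$ and setting
\[
\mathcal{F} = \{\ell_{1}(\cdot,\psi) : \psi \in \mathcal{H}\}, \qquad \mathcal{G} = \{\ell_{2}(\cdot,\psi) : \psi \in \mathcal{H}\}.
\]
Under this identification, the definitions in Appendix \ref{AppVCdim} and Appendix \ref{AppLinf} give $d_{VC}(\mathcal{F}) = d_{VC}(\mathcal{H},\ell_{1})$, $d_{VC}(\mathcal{G}) = d_{VC}(\mathcal{H},\ell_{2})$, and $\delta_{\mathcal{F}}(d) = \delta_{\mathcal{H},\ell_{1}}(d)$ for every $d \geq 1$. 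Moreover, directly from \eqref{dist_sets},
\[
d(\mathcal{F},\mathcal{G}) \;=\; \sup_{\psi \in \mathcal{H}} \lVert \ell_{1}(\cdot,\psi) - \ell_{2}(\cdot,\psi)\rVert_{\infty} \;=\; \sup_{z \in \mathcal{Z},\psi \in \mathcal{H}} |\ell_{1}(z,\psi) - \ell_{2}(z,\psi)| \;=\; \gamma.
\]

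Next, let $d^{\star} = \min\{d \geq 1 : \gamma < |\delta_{\mathcal{H},\ell_{1}}(d_{VC}(\mathcal{H},\ell_{1}) + d)|\}$, which exists by hypothesis. With this choice, the inequality $d(\mathcal{F},\mathcal{G}) < |\delta_{\mathcal{F}}(d_{VC}(\mathcal{F}) + d^{\star})|$ holds, so the first implication of Lemma \ref{lemma_closeVC} yields $d_{VC}(\mathcal{G}) < d_{VC}(\mathcal{F}) + d^{\star}$, i.e.\ exactly the quantitative bound claimed by Lemma \ref{lemma_nonAss}. The ``in particular'' clause is then immediate: when $d_{VC}(\mathcal{H},\ell_{1}) < \infty$ and a valid $d$ exists, the bound gives $d_{VC}(\mathcal{H},\ell_{2}) < d_{VC}(\mathcal{H},\ell_{1}) + d^{\star} < \infty$.

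Since all of the technical content sits in Lemma \ref{lemma_closeVC}, I would not expect any serious obstacle; the only point worth checking carefully is that the identifications of $d_{VC}$, $\delta_{\mathcal{F}}$, and $d(\mathcal{F},\mathcal{G})$ between the general setting of Appendix \ref{AppLinf} and the loss-function setting of Appendix \ref{AppVCdim} match under the chosen indexing by $\Theta=\mathcal{H}$. This is routine once the definitions are unpacked, so the entire proof should consist of this reduction plus a single invocation of Lemma \ref{lemma_closeVC}.
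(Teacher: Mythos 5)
Your proposal is correct and follows exactly the same route as the paper, whose proof simply states that the result is a direct consequence of Lemma \ref{lemma_closeVC}; you merely spell out the identification of $\mathcal{F}$, $\mathcal{G}$, and $d(\mathcal{F},\mathcal{G})$ with the loss-function setting and the choice of the minimal $d^{\star}$, which is the intended (and correct) reduction.
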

\begin{proof}
	The result is a direct consequence of Lemma \ref{lemma_closeVC}.
\end{proof}

If $\ell$ is a binary loss function, then $m_{VC}(\mathcal{H},\ell) = d_{VC}(\mathcal{H},\ell) + 2$.

\begin{proposition}
	\label{prop_marginClass}
	If $\ell$ is a binary loss function and $\mathcal{H}$ is a hypothesis space with $d_{VC}(\mathcal{H},\ell) < \infty$, then $\delta_{\mathcal{H},\ell}(d_{VC}(\mathcal{H},\ell) + 2) = -1/2$. In particular, $m_{VC}(\mathcal{H},\ell) = d_{VC}(\mathcal{H},\ell) + 2$. 
\end{proposition}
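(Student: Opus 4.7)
Let $d = d_{VC}(\mathcal{H}, \ell)$. Since $\ell$ is binary, $f_\theta(z) := \ell(z, \theta) \in \{0,1\}$ and the evaluated quantities $b_i(f_\theta(z_i) - \beta)$ take only the four values $\{-\beta, 1-\beta, \beta, \beta-1\}$. The plan is to establish $\delta_{\mathcal{H},\ell}(d+2) \geq -1/2$ and $\delta_{\mathcal{H},\ell}(d+2) \leq -1/2$ separately, using this four-valued structure together with the finiteness of the VC dimension.

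For the lower bound, I would fix an admissible $\boldsymbol{z}$, a dichotomy $b \in \{-1,+1\}^{d+2}$, and any $\theta \in \mathcal{F}(\boldsymbol{z}, b)$, and evaluate at the natural threshold $\beta = 1/2$. Then $b_i(f_\theta(z_i) - 1/2) \in \{+1/2, -1/2\}$, so $\min_i b_i(f_\theta(z_i) - 1/2) \geq -1/2$, whence $\sup_\beta \min_i b_i(f_\theta(z_i) - \beta) \geq -1/2$. Propagating this inequality outward through $\sup_\theta$, $\min_b$ and $\sup_{\boldsymbol{z}}$ in the definition of $\delta_{\mathcal{H},\ell}(d+2)$ preserves it and delivers the lower bound.

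For the upper bound, the goal is to exhibit, for every admissible $\boldsymbol{z}$, a dichotomy $b^{\star} \in \{-1,+1\}^{d+2}$ such that every $\theta \in \mathcal{H}$ commits both a \emph{type 1} disagreement (some $i$ with $b^{\star}_i = +1$ and $f_\theta(z_i) = 0$) and a \emph{type 2} disagreement (some $j$ with $b^{\star}_j = -1$ and $f_\theta(z_j) = 1$). For such a $b^{\star}$, the collection $\{b^{\star}_i(f_\theta(z_i) - \beta)\}_i$ contains both a $-\beta$ and a $\beta - 1$ entry, so its minimum is bounded above by $\min(-\beta, \beta - 1)$, whose supremum over $\beta$ is $-1/2$, attained at $\beta = 1/2$. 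Consequently $\sup_\theta \sup_\beta \min_i b^{\star}_i(f_\theta(z_i) - \beta) \leq -1/2$, so $\min_b (\cdots) \leq -1/2$ for this $\boldsymbol{z}$, and because the bound is uniform in $\boldsymbol{z}$ the outer $\sup_{\boldsymbol{z}}$ also stays at most $-1/2$.

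The combinatorial heart of the proof is the construction of $b^{\star}$, which must exploit the failure of $\mathcal{H}$ to shatter any $(d+1)$-subset of $\boldsymbol{z}$. The idea is to use the outer supremum over the two extra points $z_{d+1}, z_{d+2}$ to place them so that they jointly expose both-sided defects of every $\theta$: assign $b^{\star}$ on the two extras opposite to the values that a hypothesis realizing a given pattern on the shattered block $\boldsymbol{z}^{\star} = (z_1,\ldots,z_d)$ must necessarily take on them, using a pigeonhole on the $2^d$ restrictions of $\mathcal{H}$ to $\boldsymbol{z}^{\star}$. I expect the main obstacle to be verifying that no $\theta$ can escape by committing disagreements of only one type --- a single-type defect would yield $\sup_\beta \to 0$ rather than $-1/2$ and destroy the upper bound --- and this is precisely the point at which two extras (rather than one) are needed, since with only $d+1$ positions one could always find a $\theta$ matching $b$ everywhere except at a single coordinate, producing a one-type defect.
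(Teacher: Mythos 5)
Your lower bound is correct and is essentially what the paper does implicitly: for a binary loss, evaluating at $\beta=1/2$ makes every term $b_i(f_\theta(z_i)-\beta)$ equal to $\pm 1/2$, so no dichotomy $b$ with $\mathcal{F}(\boldsymbol{z},b)\neq\emptyset$ can drive $\sup_\beta\min_i$ below $-1/2$; the paper encodes this in the computation $\sup_\beta\min\{-\beta,-(1-\beta)\}=-1/2$. Your upper-bound strategy is also the paper's: exhibit a dichotomy $b^{\star}$ with $b^{\star}_{d+1}\neq b^{\star}_{d+2}$ such that every $\theta\in\mathcal{F}(\boldsymbol{z},b^{\star})$ is forced to output $0$ where $b^{\star}$ says $+1$ and $1$ where $b^{\star}$ says $-1$, which caps the inner quantity at $\sup_\beta\min\{-\beta,\beta-1\}=-1/2$.

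The genuine gap is that you never construct $b^{\star}$: the ``pigeonhole on the $2^d$ restrictions'' is an intention, not an argument, and you yourself flag the decisive verification --- that no realizing $\theta$ can escape with a one-sided defect --- as an open obstacle. That verification is the entire content of the proposition. There is also a quantifier error in your sketch: in $\delta_{\mathcal{H},\ell}(d_{VC}+2)$ the two extra points sit under an outer \emph{supremum}, so to prove an upper bound you must produce $b^{\star}$ for \emph{every} placement of $z_{d+1},z_{d+2}$; you do not get to ``use the outer supremum \ldots\ to place them'' advantageously. The paper fixes arbitrary extras and argues by contradiction using the sets $B(d+1,e)$ and $B(d+2,e)$ of dichotomies that $\mathcal{F}(\boldsymbol{z},b)$ cannot extend to $z_{d+1}$, respectively $z_{d+2}$, claiming that if neither $B(d+1,1)\cap B(d+2,-1)$ nor $B(d+1,-1)\cap B(d+2,1)$ is nonempty then $d+1$ points would be shattered. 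Your worry that a hypothesis might disagree at only one of the two extras is exactly the case this intersection argument must exclude, and your sketch supplies no mechanism for excluding it; note, for instance, that for a class consisting of two constant functions every realizing hypothesis disagrees with any such $b^{\star}$ at exactly one extra point, so the both-sided dichotomy you need is not automatic and the step you left open is precisely where all of the difficulty is concentrated.
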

\begin{proof}
	To easy notation, we will show this result for $\mathcal{F}$ assuming that it is a set of binary functions. Denote $d = d_{VC}(\mathcal{F})$, and fix $\boldsymbol{z} \in S(\mathcal{F})$ and $\{z_{d+1},z_{d+2}\} \subset \mathcal{Z}$. Observe that
	\begin{linenomath}
		\begin{equation*}
			\min\limits_{b \in \{-1,1\}^{d+2}} \sup\limits_{\theta \in \mathcal{F}(\boldsymbol{z},b)} \sup\limits_{\beta} \min\limits_{i = 1,\dots,d + 2} b_{i}\left(f_{\theta}(z_{i}) - \beta\right) = \sup\limits_{\beta} \min\{-\beta,-(1-\beta)\} = -1/2
		\end{equation*}
	\end{linenomath}
	as long as there exists a dichotomy $b \in \{-1,1\}^{d+2}$ with $b_{d+1} = 1$ and $b_{d+2} = -1$ such that for all $\theta \in \mathcal{F}(\boldsymbol{z},b)$, $f_{\theta}(z_{d+1}) = 0$ and $f_{\theta}(z_{d+2}) = 1$, or vice versa. Hence, it suffices to show this condition for all $\theta \in \mathcal{F}(\boldsymbol{z},b)$.
	
	Define the following subsets of $\{-1,1\}^{d+2}$ for a $\boldsymbol{z} \in S(\mathcal{F})$ and $e \in \{-1,1\}$:
	\begin{linenomath}
		\begin{align*}
			B(d+1,e) = \left\{b \in \{-1,1\}^{d+2}: b_{d+1} = e \text{ and } \mathcal{F}(\boldsymbol{z},b) \text{ cannot shatter } \boldsymbol{z} \cup \{z_{d+1}\} \text{ as } b\right\}\\
			B(d+2,e) = \left\{b \in \{-1,1\}^{d+2}: b_{d+2} = e \text{ and } \mathcal{F}(\boldsymbol{z},b) \text{ cannot shatter } \boldsymbol{z} \cup \{z_{d+2}\} \text{ as } b\right\}
		\end{align*}
	\end{linenomath}
	as the dichotomies $b$ with $b_{d+1} = e$ and $b_{d+2} = e$ that $\mathcal{F}(\boldsymbol{z},b)$ cannot realize with $\boldsymbol{z} \cup \{z_{d+1}\}$ and $\boldsymbol{z} \cup \{z_{d+2}\}$, respectively. Since $\mathcal{F}$ cannot shatter $d+2$ points, it follows that $B(d+1,1) \cup B(d+1,-1) \neq \emptyset$ and the same is true for $d+2$.
	
	We claim that if $B(d+1,1) \cap B(d+2,-1) \neq \emptyset$, then by taking $b \in B(d+1,1) \cap B(d+2,-1)$ it follows that for all $\theta \in \mathcal{F}(\boldsymbol{z},b)$, $f_{\theta}(z_{d+1}) = 0$ and $f_{\theta}(z_{d+2}) = 1$. Indeed, if $b \in B(d+1,1) \cap B(d+2,-1)$ then 
	\begin{linenomath}
		\begin{align*}
			\sup\limits_{\theta \in \mathcal{F}(\boldsymbol{z},b)} \sup\limits_{\beta} \left(f_{\theta}(z_{d+1}) - \beta\right) \leq 0 & & \text{ and } & & \sup\limits_{\theta \in \mathcal{F}(\boldsymbol{z},b)} \sup\limits_{\beta} -\left(f_{\theta}(z_{d+2}) - \beta\right) \leq 0
		\end{align*}
	\end{linenomath}
	from which follows that for all $\theta \in \mathcal{F}(\boldsymbol{z},b)$, $f_{\theta}(z_{d+1}) = 0$ and $f_{\theta}(z_{d+2}) = 1$. The condition of interest also follows if $B(d+1,-1) \cap B(d+2,1) \neq \emptyset$.
	
	We proceed by contradiction. Assume that
	\begin{linenomath}
		\begin{equation}
			\label{cap_empty}
			(B(d+1,1) \cap B(d+2,-1)) \cup (B(d+1,-1) \cap B(d+2,1)) = \emptyset.
		\end{equation}
	\end{linenomath}
	This implies that $\mathcal{F}$ can realize all dichotomies in $\{-1,1\}^{d+2}$ such that $b_{d+1} \neq d_{d+2}$. In special, it can realize all the dichotomies of the form $(b^\prime,b_{d+2})$ with $b^\prime \in \{-1,1\}^{d+1}$ and $b_{d+2} \neq b_{d+1}$. But this implies that $\mathcal{F}$ shatters $\{z_{1},\dots,z_{d+1}\}$ what is absurd since $d_{VC}(\mathcal{F}) = d$. Hence, \eqref{cap_empty} does not hold.
	
	Fix $\boldsymbol{z} \in S(\mathcal{F})$ and $z_{d+1} \in \mathcal{Z}$ and observe that
	\begin{linenomath}
		\begin{equation*}
			\min\limits_{b \in \{-1,1\}^{d+1}} \sup\limits_{\theta \in \mathcal{F}(\boldsymbol{z},b)} \sup\limits_{\beta} \min\limits_{i = 1,\dots,d+1} b_{i}\left(f_{\theta}(z_{i}) - \beta\right) = \sup\limits_{\beta} -\beta = 0
		\end{equation*}
	\end{linenomath}
	since $\mathcal{F}$ cannot shatter $d + 1$ points. This result implies that $\delta_{\mathcal{F}}(d_{VC}(\mathcal{F}) + 1) = 0$. Therefore, $m_{VC}(\mathcal{H},\ell) = d_{VC}(\mathcal{H},\ell) + 2$ when $\ell$ is a binary loss function.
\end{proof}

We combine Propositions \ref{prop_marginClass} and Lemmas \ref{lemma_ControlVC} and \ref{lemma_nonAss} to establish that if a sequence of loss functions $\{\ell_{n}\}$ converges to a binary loss $\ell$, then the limiting VC dimension does not differ in more than one unit from $d_{VC}(\mathcal{H},\ell)$.

\begin{corollary}
	Fix a hypothesis space $\mathcal{H}$ and a binary loss function with $d_{VC}(\mathcal{H},\ell) < \infty$. Let $\{\ell_{n}\}$ be a sequence of loss functions such that
	\begin{linenomath}
		\begin{equation*}
			\lim\limits_{n \to \infty} \sup_{z \in \mathcal{Z},\psi \in \mathcal{H}} |\ell_{n}(z,\psi) - \ell(z,\psi)| = 0.
		\end{equation*}
	\end{linenomath}
	Then 
	\begin{linenomath}
		\begin{equation*}
			\limsup\limits_{n \to \infty} d_{VC}(\mathcal{H},\ell_{n}) \leq d_{VC}(\mathcal{H},\ell) + 1.
		\end{equation*}
	\end{linenomath}
	In particular, if $\sup_{z \in \mathcal{Z},\psi \in \mathcal{H}} |\ell_{n}(z,\psi) - \ell(z,\psi)| < 1/2$ for some $n$, then 
	\begin{linenomath}
		\begin{equation*}
			d_{VC}(\mathcal{H},\ell_{n}) \leq d_{VC}(\mathcal{H},\ell) + 1.
		\end{equation*}
	\end{linenomath}
\end{corollary}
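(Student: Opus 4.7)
The plan is to assemble the claim from two ingredients that were just established in this appendix: Proposition \ref{prop_marginClass}, which pinpoints the margins of a binary loss, and the stability lemmas for $L^\infty$-perturbations of a loss class. Specifically, I would identify $\mathcal{F}$ with the family $\{\ell(\cdot,\psi):\psi\in\mathcal{H}\}$ and $\mathcal{G}_n$ with $\{\ell_n(\cdot,\psi):\psi\in\mathcal{H}\}$, so that the distance $d(\mathcal{F},\mathcal{G}_n)$ defined in \eqref{dist_sets} coincides with the uniform supremum $\sup_{z,\psi}|\ell_n(z,\psi)-\ell(z,\psi)|$ appearing in the hypothesis, and so that $d_{VC}(\mathcal{F})=d_{VC}(\mathcal{H},\ell)$ and $d_{VC}(\mathcal{G}_n)=d_{VC}(\mathcal{H},\ell_n)$.

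For the asymptotic bound, Proposition \ref{prop_marginClass} gives $\delta_{\mathcal{H},\ell}(d_{VC}(\mathcal{H},\ell)+2)=-1/2<0$, so Lemma \ref{lemma_ControlVC} applies with $d=2$: the assumption $d(\mathcal{G}_n,\mathcal{F})\to0$ yields $\limsup_n d_{VC}(\mathcal{H},\ell_n)<d_{VC}(\mathcal{H},\ell)+2$. Since VC dimensions are non-negative integers, a strict inequality against an integer upper bound sharpens to $\limsup_n d_{VC}(\mathcal{H},\ell_n)\leq d_{VC}(\mathcal{H},\ell)+1$. For the non-asymptotic refinement, when $\gamma:=\sup_{z,\psi}|\ell_n(z,\psi)-\ell(z,\psi)|<1/2$, one has $\gamma<|\delta_{\mathcal{H},\ell}(d_{VC}(\mathcal{H},\ell)+2)|$, and Lemma \ref{lemma_nonAss} with the same choice $d=2$ gives $d_{VC}(\mathcal{H},\ell_n)<d_{VC}(\mathcal{H},\ell)+2$, i.e.\ $d_{VC}(\mathcal{H},\ell_n)\leq d_{VC}(\mathcal{H},\ell)+1$.

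There is no substantive obstacle here, since all of the technical work is absorbed into the margin computation of Proposition \ref{prop_marginClass} (which is where the binary structure of $\ell$ is essential) and into the continuity-of-VC-dimension lemmas. The only small care-points are matching the appendix's abstract notation for $\mathcal{F},\mathcal{G}_n$ with the concrete loss families of the Corollary, and performing the integer-rounding from the strict real inequality $<d_{VC}(\mathcal{H},\ell)+2$ to the non-strict integer inequality $\leq d_{VC}(\mathcal{H},\ell)+1$.
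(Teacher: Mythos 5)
Your proof is correct and follows exactly the route the paper intends: the sentence preceding the corollary states that it is obtained by combining Proposition \ref{prop_marginClass} (which gives $\delta_{\mathcal{H},\ell}(d_{VC}(\mathcal{H},\ell)+2)=-1/2$ for binary losses) with Lemmas \ref{lemma_ControlVC} and \ref{lemma_nonAss}, and the paper leaves the assembly implicit. Your identification of $\mathcal{F},\mathcal{G}_n$ with the loss families, the choice $d=2$, and the integer-rounding step from the strict bound $<d_{VC}(\mathcal{H},\ell)+2$ to $\leq d_{VC}(\mathcal{H},\ell)+1$ are all exactly what is needed.
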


\section{VC dimension under real-valued loss functions}
\label{SecSquareLoss}

\subsection{VC dimension under the quadratic loss function}

If $\ell$ is the quadratic loss function, denoting, for $x \in \mathcal{X}, y \in \mathcal{Y}, 0 < \beta < C_{\ell}$ and $\psi \in \mathcal{H}$,
\begin{linenomath}
	\begin{align*}
		I_{\psi,\beta}(x,y) &= \mathds{1}\left\{(\psi(x) - y)^{2} > \beta\right\} \\
		&= \mathds{1}\left\{\psi(x) - y - \sqrt{\beta} > 0 \text{ or } \psi(x) - y + \sqrt{\beta} < 0\right\},
	\end{align*}
\end{linenomath}
we can associate $\mathcal{H}$ with set
\begin{linenomath}
	\begin{equation*}
		\mathcal{F}_{\mathcal{H},\ell} = \{I_{\psi,\beta}: \psi \in \mathcal{H}, \beta \in (0,C_{\ell})\}
	\end{equation*}
\end{linenomath}
of classifiers in $\mathcal{Z}$ satisfying $d_{VC}(\mathcal{H},\ell) = d_{VC}(\mathcal{F}_{\mathcal{H},\ell})$. Furthermore, we can bound $d_{VC}(\mathcal{H},\ell)$ by a function of the VC dimension of the sets
\begin{linenomath}
	\begin{align*}
		\mathcal{F}^{+}_{\mathcal{H},\ell} &= \left\{I_{\psi,\beta}^{+}(x,y) = \mathds{1}\{\psi(x) - y - \sqrt{\beta} > 0\}: \psi \in \mathcal{H}, \beta \in (0,C_{\ell})\right\}\\
		\mathcal{F}^{-}_{\mathcal{H},\ell} &= \left\{I_{\psi,\beta}^{(-)}(x,y) = \mathds{1}\{\psi(x) - y + \sqrt{\beta} < 0\}: \psi \in \mathcal{H}, \beta \in (0,C_{\ell})\right\}
	\end{align*}
\end{linenomath}
of binary functions from $\mathcal{X} \times \mathcal{Y}$ to $\{0,1\}$. 

\begin{lemma}
	\label{lemma_VCsqloss}
	Fix a hypothesis space $\mathcal{H}$ and let $\ell$ be the quadratic loss function. Then,
	\begin{linenomath}
		\begin{equation*}
			d_{VC}(\mathcal{H},\ell) \leq 4(1 + \log 2) \max\{d_{VC}(\mathcal{F}^{+}_{\mathcal{H},\ell}),d_{VC}(\mathcal{F}^{-}_{\mathcal{H},\ell})\}.
		\end{equation*}
	\end{linenomath}
	In special, if $d_{VC}(\mathcal{F}^{+}_{\mathcal{H},\ell}) < \infty$ and $d_{VC}(\mathcal{F}^{-}_{\mathcal{H},\ell}) < \infty$, then $d_{VC}(\mathcal{H},\ell) < \infty$.
\end{lemma}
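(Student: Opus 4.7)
The starting point of the argument is the pointwise identity
\[
I_{\psi,\beta}(x,y) \;=\; I_{\psi,\beta}^{+}(x,y)\,\vee\,I_{\psi,\beta}^{-}(x,y),
\]
which holds because the events $\{\psi(x)-y>\sqrt{\beta}\}$ and $\{\psi(x)-y<-\sqrt{\beta}\}$ cannot occur simultaneously. The critical feature is that the pair $(\psi,\beta)$ driving both disjuncts is the \emph{same}, so $\mathcal{F}_{\mathcal{H},\ell}$ embeds into the Minkowski-union class generated by $\mathcal{F}^{+}_{\mathcal{H},\ell}$ and $\mathcal{F}^{-}_{\mathcal{H},\ell}$, and in particular into every bound one can prove for such a union class.

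I first convert this pointwise identity into a shatter-coefficient inequality. For any $z_{1},\dots,z_{n}\in\mathcal{Z}$ the binary vector cut by $I_{\psi,\beta}$ on $\{z_{1},\dots,z_{n}\}$ is recovered coordinatewise from the pair of vectors cut by $I_{\psi,\beta}^{+}$ and $I_{\psi,\beta}^{-}$. Hence distinct dichotomies realised by $\mathcal{F}_{\mathcal{H},\ell}$ must come from distinct pairs of dichotomies realised by $(\mathcal{F}^{+}_{\mathcal{H},\ell},\mathcal{F}^{-}_{\mathcal{H},\ell})$, giving the product bound
\[
\mathcal{S}(\mathcal{F}_{\mathcal{H},\ell},n)\;\leq\;\mathcal{S}(\mathcal{F}^{+}_{\mathcal{H},\ell},n)\cdot\mathcal{S}(\mathcal{F}^{-}_{\mathcal{H},\ell},n).
\]

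Setting $D=\max\{d_{VC}(\mathcal{F}^{+}_{\mathcal{H},\ell}),d_{VC}(\mathcal{F}^{-}_{\mathcal{H},\ell})\}$ and assuming $D<\infty$ (otherwise there is nothing to prove), I invoke Sauer--Shelah on each factor to obtain, for $n\geq D$,
\[
\mathcal{S}(\mathcal{F}_{\mathcal{H},\ell},n)\;\leq\;\left(\frac{en}{D}\right)^{2D}.
\]
If $\mathcal{F}_{\mathcal{H},\ell}$ shatters $n$ points the left-hand side equals $2^{n}$, and taking logarithms and dividing by $D$ yields the scalar inequality $k\ln 2\leq 2(1+\ln k)$ for the ratio $k=n/D$.

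The main obstacle is the last, essentially algebraic, step: turning the transcendental inequality $k\ln 2\leq 2(1+\ln k)$ into the explicit linear bound $n\leq 4(1+\log 2)\,D$. I would handle it by a tangent bound of the form $\ln k\leq \varepsilon k-\log(\varepsilon e)$ with $\varepsilon$ calibrated to produce the stated constant, or, more sharply, by keeping the uncollapsed product $(en/d^{+})^{d^{+}}(en/d^{-})^{d^{-}}$ so that the exponent is $d^{+}+d^{-}$ rather than $2D$. Either route converts the shatter-coefficient inequality into the desired ceiling on $d_{VC}(\mathcal{H},\ell)$. The ``in particular'' finiteness assertion is then immediate, since the right-hand side of the main bound is a finite multiple of $D$ whenever both $d_{VC}(\mathcal{F}^{\pm}_{\mathcal{H},\ell})$ are finite.
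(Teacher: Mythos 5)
Your overall architecture matches the paper's up to the point where the combinatorics enters: both arguments start from the pointwise identity $I_{\psi,\beta}=\max\{I^{+}_{\psi,\beta},I^{-}_{\psi,\beta}\}$ and the observation that $\mathcal{F}_{\mathcal{H},\ell}$ embeds into the class of all such pairwise combinations. The paper then finishes in one line by citing Lemma 2 of \cite{sontag1998vc} for Boolean combinations of two classes, together with monotonicity of the VC dimension under inclusion; you instead re-derive the combination bound from scratch via the product bound $\mathcal{S}(\mathcal{F}_{\mathcal{H},\ell},n)\leq\mathcal{S}(\mathcal{F}^{+}_{\mathcal{H},\ell},n)\,\mathcal{S}(\mathcal{F}^{-}_{\mathcal{H},\ell},n)$ followed by Sauer--Shelah. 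That structural part of your argument is correct, and it does establish the ``in particular'' clause (finiteness of $d_{VC}(\mathcal{H},\ell)$), which is the only part of the lemma the paper actually uses downstream.

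The gap is in the step you yourself flag as the main obstacle, and it is not merely an omitted computation: the computation cannot produce the stated constant. From $2^{n}\leq(en/D)^{2D}$ you correctly obtain $k\ln 2\leq 2(1+\ln k)$ with $k=n/D$, but this inequality is satisfied for all $k$ up to its positive root, which is $\approx 9.32$; it does not force $k\leq 4(1+\log 2)\approx 6.77$. Indeed, at $k=4(1+\ln 2)$ one has $k\ln 2\approx 4.69$ against $2(1+\ln k)\approx 5.83$, so the constraint is comfortably satisfied there and shattering $n\approx 6.77D$ points is fully consistent with your product bound. No calibrated tangent line $\ln k\leq\varepsilon k-\ln(\varepsilon e)$ can repair this, because the implication you want is simply false on the range $6.77<k<9.32$; and keeping the uncollapsed product $(en/d^{+})^{d^{+}}(en/d^{-})^{d^{-}}$ does not help in the worst case $d^{+}=d^{-}=D$, where it collapses to the same expression. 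Your route therefore proves $d_{VC}(\mathcal{H},\ell)\leq cD$ only for $c\approx 9.32$ (or the clean closed form $c=4\log_2(2e)\approx 9.77$), which is the lemma qualitatively but is strictly weaker than the claimed $4(1+\log 2)D$. To obtain the paper's exact constant you need to import it from Sontag's Lemma 2 as the paper does (or supply an argument sharper than the naive growth-function product), so as written the proposal proves a weaker quantitative statement than the one asserted.
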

\begin{proof}[Proof of Lemma \ref{lemma_VCsqloss}]
	Let $g: \{0,1\}^{2} \to \{0,1\}$ be the Boolean function given by $g(b_{1},b_{2}) = \max\{b_{1},b_{2}\}$ and observe that
	\begin{linenomath}
		\begin{equation*}
			I_{\psi,\beta}(x,y) = g(I_{\psi,\beta}^{+}(x,y),I_{\psi,\beta}^{-}(x,y))
		\end{equation*}
	\end{linenomath}
	for $x \in \mathcal{X}, y \in \mathcal{Y}, 0 < \beta < C$ and $\psi \in \mathcal{H}$. Lemma 2 in \cite{sontag1998vc} states that, given a Boolean function $g$, and two sets $\mathcal{H}_{1}$ and $\mathcal{H}_{2}$ of binary functions, if $\mathcal{F} = \{g(h_{1},h_{2}): h_{1} \in \mathcal{H}_{1}, h_{2} \in \mathcal{H}_{2}\}$ then
	\begin{linenomath}
		\begin{equation}
			\label{ine_sontag}
			d_{VC}(\mathcal{F}) \leq 4(1 + \log 2)\max\{d_{VC}(\mathcal{H}_{1}),d_{VC}(\mathcal{H}_{2})\}.
		\end{equation}
	\end{linenomath}
	Since
	\begin{linenomath}
		\begin{equation*}
			\mathcal{F}_{\mathcal{H},\ell} \subseteq \{g(h_{1},h_{2}): h_{1} \in \mathcal{F}^{+}_{\mathcal{H},\ell}, h_{2} \in \mathcal{F}^{-}_{\mathcal{H},\ell}\},
		\end{equation*}
	\end{linenomath}
	it follows from \eqref{ine_sontag} that
	\begin{linenomath}
		\begin{align*}
			d_{VC}(\mathcal{H},\ell) &= d_{VC}(\mathcal{F}_{\mathcal{H},\ell})\\
			&\leq d_{VC}(\{g(h_{1},h_{2}): h_{1} \in \mathcal{F}^{+}_{\mathcal{H},\ell}, h_{2} \in \mathcal{F}^{-}_{\mathcal{H},\ell}\})\\
			&\leq 4(1 + \log 2) \max\{d_{VC}(\mathcal{F}^{+}_{\mathcal{H},\ell}),d_{VC}(\mathcal{F}^{-}_{\mathcal{H},\ell})\}
		\end{align*}
	\end{linenomath}
	in which the first inequality follows from the fact that the VC dimension is non-decreasing on inclusion.
\end{proof}

The bound of Lemma \ref{lemma_VCsqloss} is clearly not tight, but can be a tool to show that $d_{VC}(\mathcal{H},\ell)$ is finite. For example, in the case of linear regression, in which
\begin{linenomath}
	\begin{equation*}
		\mathcal{H} = \left\{\psi(x) = a_{0} + \sum_{i=1}^{d} a_{i}x_{i}: a_{i} \in \mathbb{R}, i = 0,\dots,d\right\},
	\end{equation*}
\end{linenomath}
we have that, after some simple algebraic computations,
\begin{linenomath}
	\begin{align*}
		\mathcal{F}^{+}_{\mathcal{H},\ell} = \mathcal{F}^{-}_{\mathcal{H},\ell} = \left\{\mathds{1}\left\{a_{0} + \sum_{i=1}^{d} a_{i}x_{i} + a_{d+1} - y > 0\right\}: a_{i} \in \mathbb{R}, i = 0,\dots,d+1\right\},
	\end{align*}
\end{linenomath}
so $\mathcal{F}^{+}_{\mathcal{H},\ell}$ and $\mathcal{F}^{-}_{\mathcal{H},\ell}$ is the set of linear classifiers on $d+1$ variables. It is well-known that $d_{VC}(\mathcal{F}^{+}_{\mathcal{H},\ell}) = d_{VC}(\mathcal{F}^{+}_{\mathcal{H},\ell}) = d+2$ and hence
\begin{linenomath}
	\begin{align*}
		d_{VC}(\mathcal{H},\ell) \leq 4(1 + \log 2)(d+2),
	\end{align*}
\end{linenomath}
which is linear on the number of variables $d$.

\subsection{VC dimension under $\ell_{\mathscr{B}_{n}}$}
\label{AppVClQ}

There is no general relation between $d_{VC}(\mathcal{H},\ell)$ and $d_{VC}(\mathcal{H},\ell_{\mathscr{B}})$ which holds for any loss function $\ell$ and smoothing probabilities $\mathcal{B}$. Actually, it is possible to construct pathological cases in which $d_{VC}(\mathcal{H},\ell) < \infty$, but $d_{VC}(\mathcal{H},\ell_{\mathscr{B}}) = \infty$. Nevertheless, it is possible to associate these VC dimensions in some specific cases.

We start by considering linear regression under the quadratic loss function.

\begin{lemma}
	\label{lemma_VClinear}
	Fix a class $\mathcal{B}_{n}$ of probability distributions for $n \geq 1$ with mean $z$ and kernel $K_{n,\psi}$ that does not depend on $z$, and let $\mathcal{H}$ be a hypothesis space of linear functions in $d \geq 1$ variables and $\ell$ be the quadratic loss function. Then,
	\begin{linenomath}
		\begin{equation*}
			d_{VC}(\mathcal{H},\ell_{\mathscr{B}_{n}}) = d_{VC}(\mathcal{H},\ell).
		\end{equation*}    
	\end{linenomath}
\end{lemma}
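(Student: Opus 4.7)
The plan is to leverage the special algebraic interaction of linear hypotheses with the quadratic loss: when $\psi(x) = a_{0} + a^{\top} x$ and the bolstering covariance $K_{n,\psi}$ is independent of $z$, the bolstered loss $\ell_{\mathscr{B}_n}(z,\psi)$ will differ from $\ell(z,\psi)$ only by a constant in $z$. Writing $Z' = (X',Y')$ for a random vector distributed as $\beta_{z,\psi,n}$, with mean $z = (x,y)$ and covariance $K_{n,\psi}$, linearity of $\psi$ gives
\[
\psi(X') - Y' \,=\, (\psi(x) - y) \,+\, \bigl[a^{\top}(X'-x) - (Y'-y)\bigr].
\]
The bracketed term has mean zero and a variance $C(\psi,n)$ depending only on $a$ and the entries of $K_{n,\psi}$. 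Taking expectations of the square yields the identity $\ell_{\mathscr{B}_n}(z,\psi) = \ell(z,\psi) + C(\psi,n)$, which is the key ingredient.

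Next I would translate this into a statement about the collections $\mathcal{A}^{\star}_{\mathcal{H},\ell_{\mathscr{B}_n}}$ and $\mathcal{A}^{\star}_{\mathcal{H},\ell}$ from \eqref{def_Astar}. Since the shift is $z$-independent,
\[
A_{\ell_{\mathscr{B}_n},\psi,b} \,=\, \{z \in \mathcal{Z} : \ell(z,\psi) > b - C(\psi,n)\}.
\]
Letting $c = b - C(\psi,n)$ and varying $b$ and $\psi$, the attainable thresholds recover exactly the sets $A_{\ell,\psi,c}$ in $\mathcal{A}^{\star}_{\mathcal{H},\ell}$ (for $c>0$) and add the full space $\mathcal{Z}$ (arising whenever $c \leq 0$); reversing the substitution shows the reverse inclusion. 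Hence $\mathcal{A}^{\star}_{\mathcal{H},\ell_{\mathscr{B}_n}} = \mathcal{A}^{\star}_{\mathcal{H},\ell} \cup \{\mathcal{Z}\}$ as collections of sets.

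The final step is to argue that appending the trivial set $\mathcal{Z}$ does not raise the VC dimension. Its only effect on any finite configuration $z_{1},\dots,z_{n}$ is to supply the all-ones dichotomy, and the linear class already realizes this pattern within $\mathcal{A}^{\star}_{\mathcal{H},\ell}$: choosing $\psi(x) \equiv M$ with $M$ sufficiently large makes every $\ell(z_{i},\psi) = (M - y_{i})^{2}$ simultaneously exceed any prescribed $c > 0$, so the corresponding $A_{\ell,\psi,c}$ contains all of $z_{1},\dots,z_{n}$. Therefore any configuration shattered by $\mathcal{A}^{\star}_{\mathcal{H},\ell} \cup \{\mathcal{Z}\}$ is already shattered by $\mathcal{A}^{\star}_{\mathcal{H},\ell}$ alone, forcing $d_{VC}(\mathcal{H},\ell_{\mathscr{B}_n}) = d_{VC}(\mathcal{H},\ell)$.

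The main obstacle is the cancellation in the first step: it is precisely where both the quadratic form of $\ell$ and the $z$-independence of $K_{n,\psi}$ are essential, since any $z$-dependent covariance would leave a residual $z$-dependent term after taking expectations, and a non-quadratic loss would in general not yield a purely additive shift. Once the additive identity is in hand, the remaining arguments are careful bookkeeping with the threshold ranges and a brief richness observation for the linear class.
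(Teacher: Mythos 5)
Your proposal is correct and follows essentially the same route as the paper: establish that $\ell_{\mathscr{B}_n}(z,\psi)=\ell(z,\psi)+C(\psi,n)$ with a $z$-independent shift, conclude that $\mathcal{A}^{\star}_{\mathcal{H},\ell_{\mathscr{B}_n}}$ differs from $\mathcal{A}^{\star}_{\mathcal{H},\ell}$ only by the set $\mathcal{Z}$, and note that the all-ones dichotomy is already realizable. The only cosmetic difference is that you obtain the additive identity by a direct mean-zero/variance computation where the paper uses the constancy of the second-order Taylor remainder of a degree-two polynomial; your explicit construction of the all-ones dichotomy via a constant hypothesis is a small added detail the paper leaves implicit.
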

\begin{proof}
	For each $\psi \in \mathcal{H}$, denote by
	\begin{linenomath}
		\begin{equation*}
			R(\psi) = \frac{1}{2} \sum\limits_{i,j=1}^{d+1} \frac{\partial^{2}\ell}{\partial z_{i} \partial z_{j}}(\bar{z},\psi) K_{n,\psi}
		\end{equation*}
	\end{linenomath}
	the remainder of the degree two Taylor expansion of $\ell_{\mathscr{B}_{n}}(z,\psi)$ on $z$ so it holds
	\begin{linenomath}
		\begin{equation*}
			\ell_{\mathscr{B}_{n}}(z,\psi) = \ell(z,\psi) + R(\psi).
		\end{equation*}
	\end{linenomath}
	Observe that $R(\psi)$ does not depend on $z$ since $\ell(z,\psi)$ is a degree two polynomial on $z$. Since
	\begin{linenomath}
		\begin{align*}
			\mathcal{A}^{\star}_{\mathcal{H},\ell_{\mathscr{B}_{n}}} &= \left\{\left\{z \in \mathcal{Z}: \ell(z,\psi) > \beta - R(\psi)\right\}: \psi \in \mathcal{H},0 < \beta < C_{\ell}\right\}\\
			&= \left\{\{\mathcal{Z}\}\right\} \bigcup \left\{\left\{z \in \mathcal{Z}: \ell(z,\psi) > \beta\right\}: \psi \in \mathcal{H},0 < \beta < C_{\ell}\right\}\\
			&= \left\{\{\mathcal{Z}\}\right\} \bigcup \mathcal{A}^{\star}_{\mathcal{H},\ell}.
		\end{align*}
	\end{linenomath}
	and $\left\{\{\mathcal{Z}\}\right\}$ generates the dichotomy of all ones, which can already be realized by sets in $\mathcal{A}^{\star}_{\mathcal{H},\ell}$ for all $n$, the VC dimension remains the same.
\end{proof}

\begin{remark}
	From the proof of Lemma \ref{lemma_VClinear} follows that $d_{VC}(\mathcal{H},\ell_{\mathscr{B}_{n}}) \leq d_{VC}(\mathcal{H},\ell) + 1$ whenever $\ell_{\mathscr{B}_{n}}(z,\psi) = \ell(z,\psi) + f(\psi)$ in which $f$ is a function that depends solely on $\psi$ and not on $z$.
\end{remark}

From Lemma \ref{lemma_nonAss} follows a general relation between $d_{VC}(\mathcal{H},\ell_{\mathscr{B}_{n}})$ and $d_{VC}(\mathcal{H},\ell)$ based on the $d$-margins of $\mathcal{H}$ under $\ell$.

\begin{corollary}
	\label{cor_dmargilQ}
	Fix a loss function $\ell$ and a collection $\mathscr{B}_{n}$ of probability measures for each $n \geq 1$, and let $\mathcal{H}$ be a hypothesis space with $d_{VC}(\mathcal{H},\ell) < \infty$. If there exists a $d \geq 1$ such that
	\begin{linenomath}
		\begin{equation*}
			\limsup\limits_{n \to \infty} \sup_{z \in \mathcal{Z},\psi \in \mathcal{H}} |\ell_{\mathscr{B}_{n}}(z,\psi) - \ell(z,\psi)| < |\delta_{\mathcal{H},\ell}(d_{VC}(\mathcal{H},\ell) + d)|
		\end{equation*}
	\end{linenomath}
	then 
	\begin{linenomath}
		\begin{equation*}
			\limsup\limits_{n \to \infty} d_{VC}(\mathcal{H},\ell_{\mathscr{B}_{n}}) < d_{VC}(\mathcal{H},\ell) + d < \infty.
		\end{equation*}
	\end{linenomath}    
\end{corollary}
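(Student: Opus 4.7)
The plan is to deduce this corollary almost immediately from Lemma \ref{lemma_nonAss}, which is the non-asymptotic version: that lemma says that if two loss functions $\ell_1$ and $\ell_2$ differ uniformly by at most $\gamma$, and $\gamma$ is strictly smaller than $|\delta_{\mathcal{H},\ell_1}(d_{VC}(\mathcal{H},\ell_1)+d)|$, then $d_{VC}(\mathcal{H},\ell_2) < d_{VC}(\mathcal{H},\ell_1)+d$. Taking $\ell_1 = \ell$ and $\ell_2 = \ell_{\mathscr{B}_n}$, all that is needed is to translate the hypothesis on the $\limsup$ of the sup-distances into a valid-for-large-$n$ uniform bound, and then pass to the $\limsup$ of the VC dimensions.

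Concretely, first I would set
\[
\gamma_n \;=\; \sup_{z \in \mathcal{Z},\,\psi \in \mathcal{H}}\bigl|\ell_{\mathscr{B}_n}(z,\psi) - \ell(z,\psi)\bigr|, \qquad \gamma^\ast \;=\; \limsup_{n \to \infty} \gamma_n,
\]
and abbreviate $\Delta \;=\; |\delta_{\mathcal{H},\ell}(d_{VC}(\mathcal{H},\ell)+d)|$. By the hypothesis, $\gamma^\ast < \Delta$, so I can pick any $\gamma^\prime$ with $\gamma^\ast < \gamma^\prime < \Delta$; by definition of $\limsup$, there exists $n_0$ such that $\gamma_n < \gamma^\prime$ for every $n > n_0$. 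In particular, for every such $n$ there exists a $d^\prime \geq 1$ (namely the given $d$) with $\gamma_n < |\delta_{\mathcal{H},\ell}(d_{VC}(\mathcal{H},\ell)+d)|$, which is exactly the hypothesis of Lemma \ref{lemma_nonAss} applied to $\ell_1 = \ell$ and $\ell_2 = \ell_{\mathscr{B}_n}$.

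Invoking that lemma then gives $d_{VC}(\mathcal{H},\ell_{\mathscr{B}_n}) < d_{VC}(\mathcal{H},\ell) + d$ for every $n > n_0$, and taking $\limsup$ in $n$ yields
\[
\limsup_{n \to \infty} d_{VC}(\mathcal{H},\ell_{\mathscr{B}_n}) \;\leq\; d_{VC}(\mathcal{H},\ell) + d - 1 \;<\; d_{VC}(\mathcal{H},\ell) + d \;<\; \infty,
\]
since the VC dimensions are integer-valued and $d_{VC}(\mathcal{H},\ell)$ is finite by assumption. This gives the claimed chain of strict inequalities.

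There is really no substantive obstacle; the corollary is a direct asymptotic repackaging of Lemma \ref{lemma_nonAss}. The only subtlety worth being careful about is the strictness in the conclusion: the hypothesis controls the $\limsup$ of $\gamma_n$, not each $\gamma_n$ individually, so one must insert the intermediate $\gamma^\prime$ to hand Lemma \ref{lemma_nonAss} a strict bound valid for all large $n$. Once that is in place, integrality of VC dimensions promotes the non-asymptotic strict inequality into a strict $\limsup$ inequality with no further work.
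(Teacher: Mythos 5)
Your proposal is correct and follows exactly the route the paper intends: the corollary is presented as a direct consequence of Lemma \ref{lemma_nonAss}, and your argument simply supplies the routine details (the intermediate bound $\gamma^{\prime}$ extracted from the $\limsup$ hypothesis, and integrality of the VC dimension to preserve strictness in the limit). No gap.
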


Corollary \ref{cor_dmargilQ} combined with Proposition \ref{prop_marginClass} yields a sufficient condition for the finiteness of $\limsup\limits_{n \to \infty} d_{VC}(\mathcal{H},\ell_{\mathscr{B}_{n}})$ in classification problems.

\begin{corollary}
	Fix a binary loss function $\ell$ and a collection $\mathscr{B}_{n}$ of probability measures for each $n \geq 1$, and let $\mathcal{H}$ be a hypothesis space with $d_{VC}(\mathcal{H},\ell) < \infty$. If 
	\begin{linenomath}
		\begin{equation*}
			\limsup\limits_{n \to \infty} \sup_{z \in \mathcal{Z},\psi \in \mathcal{H}} |\ell_{\mathscr{B}_{n}}(z,\psi) - \ell(z,\psi)| < 1/2
		\end{equation*}
	\end{linenomath}
	then 
	\begin{linenomath}
		\begin{equation*}
			\limsup\limits_{n \to \infty} d_{VC}(\mathcal{H},\ell_{\mathscr{B}_{n}}) \leq d_{VC}(\mathcal{H},\ell) + 1 < \infty.
		\end{equation*}
	\end{linenomath}    
\end{corollary}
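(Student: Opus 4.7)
The plan is to derive this result as a direct composition of Proposition \ref{prop_marginClass} and Corollary \ref{cor_dmargilQ}, both of which are already available. The key observation is that the hypothesis $\limsup_{n \to \infty} \sup |\ell_{\mathscr{B}_{n}} - \ell| < 1/2$ has precisely the threshold $1/2$, which is exactly the magnitude of the relevant $d$-margin for binary loss functions.

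First I would invoke Proposition \ref{prop_marginClass}, which gives $\delta_{\mathcal{H},\ell}(d_{VC}(\mathcal{H},\ell) + 2) = -1/2$ whenever $\ell$ is a binary loss function and $d_{VC}(\mathcal{H},\ell) < \infty$. In particular, $|\delta_{\mathcal{H},\ell}(d_{VC}(\mathcal{H},\ell) + 2)| = 1/2$. Second, I would apply Corollary \ref{cor_dmargilQ} with the choice $d = 2$: the hypothesis of that corollary then reads
\[
\limsup_{n \to \infty} \sup_{z \in \mathcal{Z},\psi \in \mathcal{H}} |\ell_{\mathscr{B}_{n}}(z,\psi) - \ell(z,\psi)| < |\delta_{\mathcal{H},\ell}(d_{VC}(\mathcal{H},\ell) + 2)| = 1/2,
\]
which is exactly the assumption of the corollary being proved. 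The conclusion of Corollary \ref{cor_dmargilQ} then gives $\limsup_{n \to \infty} d_{VC}(\mathcal{H},\ell_{\mathscr{B}_{n}}) < d_{VC}(\mathcal{H},\ell) + 2$.

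Finally, because VC dimensions are nonnegative integers, the strict inequality $< d_{VC}(\mathcal{H},\ell) + 2$ sharpens to $\leq d_{VC}(\mathcal{H},\ell) + 1$, which is the desired bound. Finiteness follows from the assumption $d_{VC}(\mathcal{H},\ell) < \infty$.

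Since this is a two-line composition of already-proved results, I do not anticipate any real obstacle; the only subtlety is pedagogical, namely making explicit that the threshold $1/2$ appearing in the hypothesis is not an arbitrary constant but precisely the absolute value of the $(d_{VC}(\mathcal{H},\ell)+2)$-margin produced by Proposition \ref{prop_marginClass} for binary loss. Once that identification is made, the statement is an immediate specialization of Corollary \ref{cor_dmargilQ}.
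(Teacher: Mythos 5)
Your proposal is correct and is precisely the argument the paper intends: the paper states this corollary without a separate proof, introducing it as "Corollary \ref{cor_dmargilQ} combined with Proposition \ref{prop_marginClass}," which is exactly your composition with $d=2$ and the identification $|\delta_{\mathcal{H},\ell}(d_{VC}(\mathcal{H},\ell)+2)| = 1/2$. The final integrality step turning the strict bound $< d_{VC}(\mathcal{H},\ell)+2$ into $\leq d_{VC}(\mathcal{H},\ell)+1$ is also the right (implicit) closing observation.
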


\FloatBarrier
\section{Detailed results of the experiments}
\label{AppExpRes}

\scriptsize

\begin{longtable}{rrrrrllllll}
		\caption{Bias of the estimators in each scenario over the 100 samples for $d = 1$. The least absolute bias in each scenario is in boldface.} \label{tab_res1} \\
		\hline
		d & $\sigma$ & n & $p_{g}$ & $p_{f}$ & Resub & Post & X - MPE & XY - MPE & X - MM & MPE - Post \\ 
		\hline
		1 & 0.25 & 20 & 1 & 1 & -0.011 & 0.012 & -0.006 & -0.0067 & \textbf{-0.0016} & 0.017 \\ 
		1 & 0.25 & 20 & 1 & 2 & -0.021 & \textbf{0.0087} & -0.014 & -0.015 & -0.0095 & 0.016 \\ 
		1 & 0.25 & 20 & 2 & 1 & -0.017 & \textbf{0.0083} & 0.029 & -0.013 & 0.062 & 0.054 \\ 
		1 & 0.25 & 20 & 2 & 2 & -0.023 & \textbf{0.0091} & 0.024 & -0.015 & 0.056 & 0.057 \\ 
		1 & 0.25 & 20 & 3 & 1 & -0.039 & 0.031 & 0.21 & \textbf{-0.027} & 0.38 & 0.28 \\ 
		1 & 0.25 & 20 & 3 & 2 & -0.02 & \textbf{0.013} & 0.29 & 0.038 & 0.4 & 0.32 \\ 
		\hline 1 & 0.25 & 50 & 1 & 1 & -0.0063 & \textbf{0.0019} & -0.0044 & -0.0046 & -0.0032 & 0.0038 \\ 
		1 & 0.25 & 50 & 1 & 2 & -0.007 & \textbf{0.0037} & -0.0049 & -0.0051 & -0.0041 & 0.006 \\ 
		1 & 0.25 & 50 & 2 & 1 & -0.0086 & \textbf{0.00064} & 0.0083 & -0.0068 & 0.016 & 0.018 \\ 
		1 & 0.25 & 50 & 2 & 2 & -0.0076 & \textbf{0.0041} & 0.01 & -0.0048 & 0.021 & 0.022 \\ 
		1 & 0.25 & 50 & 3 & 1 & -0.017 & \textbf{0.0077} & 0.072 & -0.013 & 0.13 & 0.097 \\ 
		1 & 0.25 & 50 & 3 & 2 & \textbf{-0.0059} & 0.0065 & 0.098 & 0.013 & 0.15 & 0.11 \\ 
		\hline 1 & 0.25 & 100 & 1 & 1 & -0.0035 & \textbf{0.00054} & -0.0026 & -0.0027 & -0.0022 & 0.0014 \\ 
		1 & 0.25 & 100 & 1 & 2 & -0.0036 & \textbf{0.0017} & -0.0026 & -0.0027 & -0.0021 & 0.0027 \\ 
		1 & 0.25 & 100 & 2 & 1 & -0.0034 & \textbf{0.0012} & 0.0047 & -0.0025 & 0.0094 & 0.0094 \\ 
		1 & 0.25 & 100 & 2 & 2 & -0.0046 & \textbf{0.00097} & 0.0039 & -0.0033 & 0.0082 & 0.0094 \\ 
		1 & 0.25 & 100 & 3 & 1 & -0.012 & \textbf{0.00026} & 0.031 & -0.0098 & 0.058 & 0.043 \\ 
		1 & 0.25 & 100 & 3 & 2 & -0.003 & \textbf{0.0028} & 0.048 & 0.006 & 0.073 & 0.054 \\ 
		\hline 1 & 0.50 & 20 & 1 & 1 & -0.042 & 0.045 & -0.036 & \textbf{-0.021} & -0.032 & 0.051 \\ 
		1 & 0.50 & 20 & 1 & 2 & -0.077 & \textbf{0.03} & -0.065 & -0.052 & -0.058 & 0.043 \\ 
		1 & 0.50 & 20 & 2 & 1 & -0.046 & 0.05 & \textbf{-0.00053} & -0.025 & 0.034 & 0.095 \\ 
		1 & 0.50 & 20 & 2 & 2 & -0.092 & 0.024 & -0.033 & -0.061 & \textbf{-0.0015} & 0.085 \\ 
		1 & 0.50 & 20 & 3 & 1 & -0.091 & \textbf{0.045} & 0.16 & -0.064 & 0.32 & 0.29 \\ 
		1 & 0.50 & 20 & 3 & 2 & -0.069 & 0.067 & 0.23 & \textbf{0.018} & 0.39 & 0.36 \\ 
		\hline 1 & 0.50 & 50 & 1 & 1 & -0.017 & 0.016 & -0.014 & \textbf{-0.0085} & -0.014 & 0.018 \\ 
		1 & 0.50 & 50 & 1 & 2 & -0.027 & \textbf{0.015} & -0.024 & -0.018 & -0.023 & 0.018 \\ 
		1 & 0.50 & 50 & 2 & 1 & -0.03 & 0.0041 & -0.013 & -0.022 & \textbf{-0.0025} & 0.021 \\ 
		1 & 0.50 & 50 & 2 & 2 & -0.032 & 0.012 & -0.014 & -0.022 & \textbf{-0.0031} & 0.03 \\ 
		1 & 0.50 & 50 & 3 & 1 & -0.036 & \textbf{0.014} & 0.053 & -0.025 & 0.091 & 0.1 \\ 
		1 & 0.50 & 50 & 3 & 2 & -0.04 & \textbf{0.0047} & 0.067 & -0.0097 & 0.13 & 0.11 \\ 
		\hline 1 & 0.50 & 100 & 1 & 1 & -0.012 & \textbf{0.0038} & -0.011 & -0.0078 & -0.01 & 0.0046 \\ 
		1 & 0.50 & 100 & 1 & 2 & -0.017 & \textbf{0.0034} & -0.016 & -0.013 & -0.015 & 0.0045 \\ 
		1 & 0.50 & 100 & 2 & 1 & -0.01 & 0.0068 & \textbf{-0.002} & -0.0063 & 0.004 & 0.015 \\ 
		1 & 0.50 & 100 & 2 & 2 & -0.014 & 0.0081 & -0.0052 & -0.0094 & \textbf{-0.00062} & 0.016 \\ 
		1 & 0.50 & 100 & 3 & 1 & -0.025 & \textbf{-0.00083} & 0.017 & -0.02 & 0.046 & 0.041 \\ 
		1 & 0.50 & 100 & 3 & 2 & -0.017 & 0.0053 & 0.036 & \textbf{-0.0026} & 0.062 & 0.058 \\ 
		\hline
\end{longtable}

\begin{longtable}{rrrrrllllll}
		\caption{Bias of the estimators in each scenario over the 100 samples for $d = 2$. The least absolute bias in each scenario is in boldface.}  \\
		\hline
		d & $\sigma$ & n & $p_{g}$ & $p_{f}$ & Resub & Post & X - MPE & XY - MPE & X - MM & MPE - Post \\ 
		\hline
		2 & 0.25 & 20 & 1 & 1 & -0.021 & \textbf{0.0085} & -0.0087 & -0.016 & 0.022 & 0.021 \\ 
		2 & 0.25 & 20 & 1 & 2 & -0.053 & \textbf{-0.0013} & -0.034 & -0.042 & 0.012 & 0.018 \\ 
		2 & 0.25 & 20 & 2 & 1 & -0.038 & \textbf{0.014} & 0.16 & -0.03 & 0.66 & 0.21 \\ 
		2 & 0.25 & 20 & 2 & 2 & -0.049 & \textbf{0.008} & 0.16 & -0.024 & 0.76 & 0.22 \\ 
		2 & 0.25 & 20 & 3 & 1 & -0.68 & \textbf{0.038} & 1.26 & -0.58 & 6.09 & 1.97 \\ 
		2 & 0.25 & 20 & 3 & 2 & -0.064 & \textbf{0.0027} & 2.36 & 0.46 & 8.79 & 2.42 \\ 
		\hline 2 & 0.25 & 50 & 1 & 1 & -0.0053 & 0.006 & \textbf{-0.00082} & -0.0031 & 0.018 & 0.011 \\ 
		2 & 0.25 & 50 & 1 & 2 & -0.016 & \textbf{0.0033} & -0.01 & -0.013 & 0.0091 & 0.0083 \\ 
		2 & 0.25 & 50 & 2 & 1 & -0.01 & 0.0094 & 0.062 & \textbf{-0.0069} & 0.34 & 0.082 \\ 
		2 & 0.25 & 50 & 2 & 2 & -0.015 & \textbf{0.0054} & 0.062 & -0.0076 & 0.35 & 0.083 \\ 
		2 & 0.25 & 50 & 3 & 1 & -0.27 & \textbf{-0.0092} & 0.4 & -0.23 & 3.11 & 0.66 \\ 
		2 & 0.25 & 50 & 3 & 2 & -0.023 & \textbf{-0.00033} & 0.83 & 0.15 & 4.02 & 0.85 \\ 
		\hline 2 & 0.25 & 100 & 1 & 1 & -0.0038 & \textbf{0.0015} & -0.0017 & -0.0028 & 0.0092 & 0.0037 \\ 
		2 & 0.25 & 100 & 1 & 2 & -0.0082 & \textbf{0.001} & -0.0058 & -0.007 & 0.0053 & 0.0035 \\ 
		2 & 0.25 & 100 & 2 & 1 & -0.0071 & \textbf{0.0021} & 0.027 & -0.0054 & 0.2 & 0.037 \\ 
		2 & 0.25 & 100 & 2 & 2 & -0.0073 & \textbf{0.0024} & 0.03 & -0.0036 & 0.2 & 0.039 \\ 
		2 & 0.25 & 100 & 3 & 1 & -0.13 & \textbf{-0.0027} & 0.2 & -0.11 & 1.81 & 0.33 \\ 
		2 & 0.25 & 100 & 3 & 2 & -0.012 & \textbf{-0.00081} & 0.41 & 0.074 & 2.15 & 0.42 \\ 
		\hline 2 & 0.50 & 20 & 1 & 1 & -0.091 & \textbf{0.019} & -0.076 & -0.071 & -0.033 & 0.034 \\ 
		2 & 0.50 & 20 & 1 & 2 & -0.19 & \textbf{-0.0031} & -0.16 & -0.16 & -0.071 & 0.028 \\ 
		2 & 0.50 & 20 & 2 & 1 & -0.1 & \textbf{0.04} & 0.078 & -0.08 & 0.53 & 0.22 \\ 
		2 & 0.50 & 20 & 2 & 2 & -0.23 & \textbf{-0.011} & 0.019 & -0.17 & 0.68 & 0.24 \\ 
		2 & 0.50 & 20 & 3 & 1 & -0.77 & \textbf{0.0099} & 1.11 & -0.64 & 6.34 & 1.89 \\ 
		2 & 0.50 & 20 & 3 & 2 & -0.23 & \textbf{0.0062} & 2.34 & 0.37 & 8.3 & 2.58 \\ 
		\hline 2 & 0.50 & 50 & 1 & 1 & -0.035 & \textbf{0.006} & -0.03 & -0.027 & -0.013 & 0.011 \\ 
		2 & 0.50 & 50 & 1 & 2 & -0.061 & \textbf{0.015} & -0.053 & -0.049 & -0.027 & 0.022 \\ 
		2 & 0.50 & 50 & 2 & 1 & -0.047 & \textbf{0.0052} & 0.025 & -0.037 & 0.3 & 0.077 \\ 
		2 & 0.50 & 50 & 2 & 2 & -0.071 & \textbf{0.0059} & 0.009 & -0.054 & 0.29 & 0.086 \\ 
		2 & 0.50 & 50 & 3 & 1 & -0.22 & \textbf{0.085} & 0.46 & -0.17 & 2.98 & 0.78 \\ 
		2 & 0.50 & 50 & 3 & 2 & -0.08 & \textbf{-0.0013} & 0.77 & 0.11 & 3.71 & 0.85 \\ 
		\hline 2 & 0.50 & 100 & 1 & 1 & -0.012 & 0.0098 & -0.0094 & -0.0071 & \textbf{0.0014} & 0.012 \\ 
		2 & 0.50 & 100 & 1 & 2 & -0.023 & 0.015 & -0.02 & -0.018 & \textbf{-0.0082} & 0.017 \\ 
		2 & 0.50 & 100 & 2 & 1 & -0.016 & \textbf{0.011} & 0.019 & -0.011 & 0.18 & 0.045 \\ 
		2 & 0.50 & 100 & 2 & 2 & -0.027 & 0.011 & \textbf{0.0096} & -0.02 & 0.18 & 0.048 \\ 
		2 & 0.50 & 100 & 3 & 1 & -0.11 & \textbf{0.042} & 0.23 & -0.08 & 1.8 & 0.37 \\ 
		2 & 0.50 & 100 & 3 & 2 & -0.037 & \textbf{0.0016} & 0.38 & 0.059 & 2.08 & 0.42 \\ 
		\hline
\end{longtable}

\begin{longtable}{rrrrrllllll}
		\caption{Bias of the estimators in each scenario over the 100 samples for $d = 3$. The least absolute bias in each scenario is in boldface.} \\
		\hline
		d & $\sigma$ & n & $p_{g}$ & $p_{f}$ & Resub & Post & X - MPE & XY - MPE & X - MM & MPE - Post \\ 
		\hline
		3 & 0.25 & 20 & 1 & 1 & -0.028 & 0.0096 & \textbf{-0.0057} & -0.023 & 0.077 & 0.032 \\ 
		3 & 0.25 & 20 & 1 & 2 & -0.12 & -0.041 & -0.086 & -0.1 & 0.065 & \textbf{-0.0062} \\ 
		3 & 0.25 & 20 & 2 & 1 & -0.09 & \textbf{0.0069} & 0.43 & -0.078 & 2.57 & 0.52 \\ 
		3 & 0.25 & 20 & 2 & 2 & -0.12 & \textbf{-0.022} & 0.46 & -0.074 & 2.53 & 0.56 \\ 
		3 & 0.25 & 20 & 3 & 1 & -3.31 & \textbf{0.081} & 4.54 & -2.88 & 35.06 & 8 \\ 
		3 & 0.25 & 20 & 3 & 2 & -0.23 & \textbf{-0.065} & 10.04 & 1.87 & 45.51 & 10.19 \\ 
		\hline 3 & 0.25 & 50 & 1 & 1 & -0.012 & \textbf{0.0018} & -0.0032 & -0.0097 & 0.049 & 0.011 \\ 
		3 & 0.25 & 50 & 1 & 2 & -0.032 & \textbf{-0.0029} & -0.022 & -0.028 & 0.04 & 0.0069 \\ 
		3 & 0.25 & 50 & 2 & 1 & -0.029 & \textbf{0.0086} & 0.18 & -0.024 & 1.42 & 0.22 \\ 
		3 & 0.25 & 50 & 2 & 2 & -0.028 & \textbf{0.0029} & 0.2 & -0.011 & 1.47 & 0.23 \\ 
		3 & 0.25 & 50 & 3 & 1 & -1.13 & \textbf{0.16} & 2.01 & -0.93 & 20.49 & 3.32 \\ 
		3 & 0.25 & 50 & 3 & 2 & -0.073 & \textbf{-0.022} & 3.91 & 0.7 & 24.96 & 3.96 \\ 
		\hline 3 & 0.25 & 100 & 1 & 1 & -0.0054 & 0.0014 & \textbf{-0.0012} & -0.0043 & 0.034 & 0.0055 \\ 
		3 & 0.25 & 100 & 1 & 2 & -0.012 & \textbf{0.0026} & -0.0076 & -0.011 & 0.03 & 0.0071 \\ 
		3 & 0.25 & 100 & 2 & 1 & -0.016 & \textbf{0.0016} & 0.09 & -0.013 & 0.96 & 0.11 \\ 
		3 & 0.25 & 100 & 2 & 2 & -0.014 & \textbf{0.00025} & 0.099 & -0.0064 & 0.99 & 0.11 \\ 
		3 & 0.25 & 100 & 3 & 1 & -0.45 & \textbf{0.18} & 1.12 & -0.35 & 13.7 & 1.77 \\ 
		3 & 0.25 & 100 & 3 & 2 & -0.032 & \textbf{-0.007} & 1.94 & 0.33 & 17.13 & 1.96 \\ 
		\hline 3 & 0.50 & 20 & 1 & 1 & -0.11 & 0.039 & -0.085 & -0.087 & \textbf{0.0086} & 0.064 \\ 
		3 & 0.50 & 20 & 1 & 2 & -0.51 & -0.21 & -0.42 & -0.43 & \textbf{-0.059} & -0.13 \\ 
		3 & 0.50 & 20 & 2 & 1 & -0.16 & \textbf{0.067} & 0.39 & -0.14 & 2.43 & 0.62 \\ 
		3 & 0.50 & 20 & 2 & 2 & -0.72 & -0.35 & \textbf{-0.099} & -0.61 & 2.2 & 0.27 \\ 
		3 & 0.50 & 20 & 3 & 1 & -3.62 & \textbf{-0.16} & 4.45 & -3.18 & 37.28 & 7.91 \\ 
		3 & 0.50 & 20 & 3 & 2 & -0.72 & \textbf{-0.3} & 9.57 & 1.41 & 45.95 & 9.97 \\ 
		\hline 3 & 0.50 & 50 & 1 & 1 & -0.037 & \textbf{0.017} & -0.029 & -0.028 & 0.027 & 0.026 \\ 
		3 & 0.50 & 50 & 1 & 2 & -0.13 & -0.02 & -0.12 & -0.12 & -0.023 & \textbf{-0.005} \\ 
		3 & 0.50 & 50 & 2 & 1 & -0.076 & \textbf{0.00054} & 0.13 & -0.064 & 1.33 & 0.21 \\ 
		3 & 0.50 & 50 & 2 & 2 & -0.12 & \textbf{0.0051} & 0.11 & -0.09 & 1.42 & 0.23 \\ 
		3 & 0.50 & 50 & 3 & 1 & -1.31 & \textbf{0.033} & 1.86 & -1.11 & 21.75 & 3.18 \\ 
		3 & 0.50 & 50 & 3 & 2 & -0.15 & \textbf{-0.002} & 3.93 & 0.66 & 25.38 & 4.06 \\ 
		\hline 3 & 0.50 & 100 & 1 & 1 & -0.027 & \textbf{-0.0016} & -0.023 & -0.023 & 0.014 & 0.0029 \\ 
		3 & 0.50 & 100 & 1 & 2 & -0.055 & \textbf{0.0012} & -0.049 & -0.049 & -0.0039 & 0.0073 \\ 
		3 & 0.50 & 100 & 2 & 1 & -0.038 & \textbf{0.00019} & 0.065 & -0.031 & 0.91 & 0.1 \\ 
		3 & 0.50 & 100 & 2 & 2 & -0.055 & \textbf{0.0034} & 0.059 & -0.042 & 0.97 & 0.12 \\ 
		3 & 0.50 & 100 & 3 & 1 & -0.48 & \textbf{0.18} & 1.09 & -0.37 & 13.87 & 1.75 \\ 
		3 & 0.50 & 100 & 3 & 2 & -0.07 & \textbf{0.00087} & 2 & 0.32 & 16.8 & 2.07 \\ 
		\hline
\end{longtable}

\begin{longtable}{rrrrrllllll}
		\caption{RMSE of the estimators in each scenario over the 100 samples for $d = 1$. The least RMSE in each scenario is in boldface.} \\
		\hline
		d & $\sigma$ & n & $p_{g}$ & $p_{f}$ & Resub & Post & X - MPE & XY - MPE & X - MM & MPE - Post \\ 
		\hline
		1 & 0.25 & 20 & 1 & 1 & 0.026 & 0.033 & \textbf{0.024} & 0.026 & 0.026 & 0.035 \\ 
		1 & 0.25 & 20 & 1 & 2 & 0.032 & 0.034 & 0.028 & 0.029 & \textbf{0.028} & 0.036 \\ 
		1 & 0.25 & 20 & 2 & 1 & 0.026 & 0.028 & 0.038 & \textbf{0.025} & 0.085 & 0.062 \\ 
		1 & 0.25 & 20 & 2 & 2 & 0.034 & 0.035 & 0.038 & \textbf{0.031} & 0.084 & 0.068 \\ 
		1 & 0.25 & 20 & 3 & 1 & 0.076 & 0.091 & 0.23 & \textbf{0.073} & 0.5 & 0.3 \\ 
		1 & 0.25 & 20 & 3 & 2 & \textbf{0.029} & 0.035 & 0.3 & 0.053 & 0.5 & 0.33 \\ 
		\hline 1 & 0.25 & 50 & 1 & 1 & 0.013 & 0.012 & 0.012 & 0.012 & \textbf{0.012} & 0.013 \\ 
		1 & 0.25 & 50 & 1 & 2 & 0.013 & 0.014 & 0.012 & 0.013 & \textbf{0.012} & 0.014 \\ 
		1 & 0.25 & 50 & 2 & 1 & 0.016 & \textbf{0.015} & 0.016 & 0.015 & 0.029 & 0.023 \\ 
		1 & 0.25 & 50 & 2 & 2 & 0.014 & 0.015 & 0.016 & \textbf{0.013} & 0.029 & 0.026 \\ 
		1 & 0.25 & 50 & 3 & 1 & 0.042 & 0.044 & 0.084 & \textbf{0.041} & 0.17 & 0.11 \\ 
		1 & 0.25 & 50 & 3 & 2 & \textbf{0.013} & 0.015 & 0.1 & 0.018 & 0.19 & 0.11 \\ 
		\hline 1 & 0.25 & 100 & 1 & 1 & 0.0084 & 0.0082 & \textbf{0.0081} & 0.0082 & 0.0081 & 0.0083 \\ 
		1 & 0.25 & 100 & 1 & 2 & 0.0091 & 0.0093 & 0.0087 & 0.0089 & \textbf{0.0086} & 0.0094 \\ 
		1 & 0.25 & 100 & 2 & 1 & 0.011 & 0.011 & 0.011 & \textbf{0.01} & 0.015 & 0.014 \\ 
		1 & 0.25 & 100 & 2 & 2 & 0.0096 & 0.0093 & 0.0094 & \textbf{0.0092} & 0.013 & 0.013 \\ 
		1 & 0.25 & 100 & 3 & 1 & 0.024 & \textbf{0.022} & 0.038 & 0.023 & 0.072 & 0.049 \\ 
		1 & 0.25 & 100 & 3 & 2 & \textbf{0.0087} & 0.0093 & 0.049 & 0.01 & 0.081 & 0.055 \\ 
		\hline 1 & 0.50 & 20 & 1 & 1 & 0.09 & 0.12 & 0.087 & 0.089 & \textbf{0.085} & 0.12 \\ 
		1 & 0.50 & 20 & 1 & 2 & 0.11 & 0.11 & 0.1 & 0.099 & \textbf{0.096} & 0.11 \\ 
		1 & 0.50 & 20 & 2 & 1 & 0.091 & 0.12 & \textbf{0.082} & 0.089 & 0.11 & 0.14 \\ 
		1 & 0.50 & 20 & 2 & 2 & 0.12 & 0.11 & \textbf{0.087} & 0.1 & 0.099 & 0.14 \\ 
		1 & 0.50 & 20 & 3 & 1 & 0.15 & 0.17 & 0.22 & \textbf{0.14} & 0.46 & 0.35 \\ 
		1 & 0.50 & 20 & 3 & 2 & 0.12 & 0.15 & 0.26 & \textbf{0.12} & 0.54 & 0.4 \\ 
		\hline 1 & 0.50 & 50 & 1 & 1 & 0.058 & 0.064 & 0.057 & 0.058 & \textbf{0.057} & 0.065 \\ 
		1 & 0.50 & 50 & 1 & 2 & 0.064 & 0.067 & 0.062 & \textbf{0.061} & 0.062 & 0.068 \\ 
		1 & 0.50 & 50 & 2 & 1 & 0.061 & 0.061 & \textbf{0.056} & 0.059 & 0.061 & 0.064 \\ 
		1 & 0.50 & 50 & 2 & 2 & 0.063 & 0.065 & 0.057 & 0.06 & \textbf{0.054} & 0.071 \\ 
		1 & 0.50 & 50 & 3 & 1 & 0.074 & 0.075 & 0.085 & \textbf{0.071} & 0.14 & 0.13 \\ 
		1 & 0.50 & 50 & 3 & 2 & 0.061 & 0.054 & 0.086 & \textbf{0.05} & 0.17 & 0.13 \\ 
		\hline 1 & 0.50 & 100 & 1 & 1 & 0.039 & 0.04 & 0.039 & \textbf{0.039} & 0.039 & 0.04 \\ 
		1 & 0.50 & 100 & 1 & 2 & 0.036 & 0.034 & 0.035 & 0.034 & 0.035 & \textbf{0.034} \\ 
		1 & 0.50 & 100 & 2 & 1 & 0.04 & 0.042 & \textbf{0.038} & 0.04 & 0.039 & 0.043 \\ 
		1 & 0.50 & 100 & 2 & 2 & 0.038 & 0.04 & \textbf{0.036} & 0.038 & 0.037 & 0.042 \\ 
		1 & 0.50 & 100 & 3 & 1 & 0.055 & \textbf{0.052} & 0.052 & 0.053 & 0.081 & 0.067 \\ 
		1 & 0.50 & 100 & 3 & 2 & 0.041 & 0.041 & 0.053 & \textbf{0.039} & 0.089 & 0.072 \\ 
		\hline
\end{longtable}

\begin{longtable}{rrrrrllllll}
		\caption{RMSE of the estimators in each scenario over the 100 samples for $d = 2$. The least RMSE in each scenario is in boldface.} \\
		\hline
		d & $\sigma$ & n & $p_{g}$ & $p_{f}$ & Resub & Post & X - MPE & XY - MPE & X - MM & MPE - Post \\ 
		\hline
		2 & 0.25 & 20 & 1 & 1 & 0.027 & 0.027 & \textbf{0.02} & 0.025 & 0.033 & 0.033 \\ 
		2 & 0.25 & 20 & 1 & 2 & 0.064 & 0.047 & 0.049 & 0.055 & \textbf{0.047} & 0.051 \\ 
		2 & 0.25 & 20 & 2 & 1 & 0.057 & 0.06 & 0.18 & \textbf{0.054} & 0.73 & 0.23 \\ 
		2 & 0.25 & 20 & 2 & 2 & 0.06 & 0.045 & 0.18 & \textbf{0.042} & 0.84 & 0.24 \\ 
		2 & 0.25 & 20 & 3 & 1 & 1 & 1.06 & 1.74 & \textbf{0.97} & 6.88 & 2.49 \\ 
		2 & 0.25 & 20 & 3 & 2 & 0.074 & \textbf{0.053} & 2.51 & 0.53 & 9.76 & 2.57 \\ 
		\hline 2 & 0.25 & 50 & 1 & 1 & 0.015 & 0.017 & \textbf{0.014} & 0.015 & 0.024 & 0.019 \\ 
		2 & 0.25 & 50 & 1 & 2 & 0.019 & \textbf{0.015} & 0.015 & 0.017 & 0.018 & 0.017 \\ 
		2 & 0.25 & 50 & 2 & 1 & 0.024 & 0.027 & 0.069 & \textbf{0.023} & 0.36 & 0.089 \\ 
		2 & 0.25 & 50 & 2 & 2 & 0.021 & 0.019 & 0.066 & \textbf{0.017} & 0.38 & 0.086 \\ 
		2 & 0.25 & 50 & 3 & 1 & 0.55 & 0.56 & 0.7 & \textbf{0.54} & 3.37 & 0.94 \\ 
		2 & 0.25 & 50 & 3 & 2 & 0.029 & \textbf{0.022} & 0.85 & 0.17 & 4.27 & 0.87 \\ 
		\hline 2 & 0.25 & 100 & 1 & 1 & 0.0084 & 0.0081 & \textbf{0.0077} & 0.0081 & 0.013 & 0.009 \\ 
		2 & 0.25 & 100 & 1 & 2 & 0.012 & \textbf{0.0099} & 0.01 & 0.011 & 0.011 & 0.011 \\ 
		2 & 0.25 & 100 & 2 & 1 & 0.018 & 0.018 & 0.033 & \textbf{0.017} & 0.2 & 0.042 \\ 
		2 & 0.25 & 100 & 2 & 2 & 0.012 & 0.011 & 0.032 & \textbf{0.01} & 0.21 & 0.041 \\ 
		2 & 0.25 & 100 & 3 & 1 & 0.37 & 0.38 & 0.43 & \textbf{0.37} & 1.95 & 0.53 \\ 
		2 & 0.25 & 100 & 3 & 2 & 0.016 & \textbf{0.012} & 0.41 & 0.077 & 2.25 & 0.42 \\ 
		\hline 2 & 0.50 & 20 & 1 & 1 & 0.12 & 0.11 & 0.11 & 0.11 & \textbf{0.089} & 0.11 \\ 
		2 & 0.50 & 20 & 1 & 2 & 0.22 & 0.16 & 0.19 & 0.19 & \textbf{0.14} & 0.16 \\ 
		2 & 0.50 & 20 & 2 & 1 & 0.13 & 0.12 & 0.13 & \textbf{0.12} & 0.59 & 0.26 \\ 
		2 & 0.50 & 20 & 2 & 2 & 0.41 & 0.36 & \textbf{0.34} & 0.38 & 0.85 & 0.43 \\ 
		2 & 0.50 & 20 & 3 & 1 & 1.08 & 1.04 & 1.62 & \textbf{1.03} & 7.18 & 2.4 \\ 
		2 & 0.50 & 20 & 3 & 2 & 0.35 & \textbf{0.31} & 2.55 & 0.56 & 9.05 & 2.79 \\ 
		\hline 2 & 0.50 & 50 & 1 & 1 & 0.06 & 0.058 & 0.058 & 0.057 & \textbf{0.054} & 0.059 \\ 
		2 & 0.50 & 50 & 1 & 2 & 0.081 & 0.07 & 0.076 & 0.074 & \textbf{0.059} & 0.072 \\ 
		2 & 0.50 & 50 & 2 & 1 & 0.069 & \textbf{0.059} & 0.06 & 0.064 & 0.33 & 0.099 \\ 
		2 & 0.50 & 50 & 2 & 2 & 0.089 & 0.071 & \textbf{0.06} & 0.078 & 0.32 & 0.11 \\ 
		2 & 0.50 & 50 & 3 & 1 & 0.53 & 0.57 & 0.75 & \textbf{0.52} & 3.24 & 1.03 \\ 
		2 & 0.50 & 50 & 3 & 2 & 0.096 & \textbf{0.068} & 0.79 & 0.14 & 3.9 & 0.87 \\ 
		\hline 2 & 0.50 & 100 & 1 & 1 & 0.036 & 0.038 & 0.035 & 0.035 & \textbf{0.033} & 0.038 \\ 
		2 & 0.50 & 100 & 1 & 2 & 0.046 & 0.049 & 0.045 & 0.045 & \textbf{0.042} & 0.049 \\ 
		2 & 0.50 & 100 & 2 & 1 & 0.045 & 0.047 & 0.046 & \textbf{0.044} & 0.2 & 0.064 \\ 
		2 & 0.50 & 100 & 2 & 2 & 0.047 & 0.045 & \textbf{0.04} & 0.043 & 0.19 & 0.065 \\ 
		2 & 0.50 & 100 & 3 & 1 & 0.39 & 0.41 & 0.47 & \textbf{0.39} & 1.93 & 0.58 \\ 
		2 & 0.50 & 100 & 3 & 2 & 0.05 & \textbf{0.039} & 0.39 & 0.073 & 2.2 & 0.43 \\ 
		\hline
\end{longtable}

\begin{longtable}{rrrrrllllll}
		\caption{RMSE of the estimators in each scenario over the 100 samples for $d = 3$. The least RMSE in each scenario is in boldface.} \label{tab_res2} \\
		\hline
		d & $\sigma$ & n & $p_{g}$ & $p_{f}$ & Resub & Post & X - MPE & XY - MPE & X - MM & MPE - Post \\ 
		\hline
		3 & 0.25 & 20 & 1 & 1 & 0.035 & 0.034 & \textbf{0.023} & 0.032 & 0.088 & 0.046 \\ 
		3 & 0.25 & 20 & 1 & 2 & 0.14 & 0.094 & 0.11 & 0.13 & 0.11 & \textbf{0.081} \\ 
		3 & 0.25 & 20 & 2 & 1 & 0.12 & 0.12 & 0.48 & \textbf{0.11} & 2.7 & 0.58 \\ 
		3 & 0.25 & 20 & 2 & 2 & 0.14 & \textbf{0.097} & 0.5 & 0.11 & 2.65 & 0.6 \\ 
		3 & 0.25 & 20 & 3 & 1 & 4.66 & 5.34 & 7.24 & \textbf{4.57} & 37.52 & 11.15 \\ 
		3 & 0.25 & 20 & 3 & 2 & 0.29 & \textbf{0.24} & 10.94 & 2.29 & 48.25 & 11.11 \\ 
		\hline 3 & 0.25 & 50 & 1 & 1 & 0.017 & 0.015 & \textbf{0.013} & 0.016 & 0.053 & 0.019 \\ 
		3 & 0.25 & 50 & 1 & 2 & 0.035 & \textbf{0.021} & 0.026 & 0.032 & 0.047 & 0.022 \\ 
		3 & 0.25 & 50 & 2 & 1 & 0.054 & 0.056 & 0.2 & \textbf{0.053} & 1.46 & 0.23 \\ 
		3 & 0.25 & 50 & 2 & 2 & 0.031 & 0.021 & 0.2 & \textbf{0.019} & 1.51 & 0.23 \\ 
		3 & 0.25 & 50 & 3 & 1 & 2.32 & 2.47 & 3.28 & \textbf{2.29} & 21.29 & 4.52 \\ 
		3 & 0.25 & 50 & 3 & 2 & 0.085 & \textbf{0.058} & 4.04 & 0.76 & 25.71 & 4.09 \\ 
		\hline 3 & 0.25 & 100 & 1 & 1 & 0.011 & 0.01 & \textbf{0.009} & 0.01 & 0.036 & 0.011 \\ 
		3 & 0.25 & 100 & 1 & 2 & 0.015 & \textbf{0.012} & 0.012 & 0.014 & 0.033 & 0.014 \\ 
		3 & 0.25 & 100 & 2 & 1 & 0.038 & 0.037 & 0.1 & \textbf{0.037} & 0.98 & 0.12 \\ 
		3 & 0.25 & 100 & 2 & 2 & 0.017 & 0.011 & 0.1 & \textbf{0.011} & 1.01 & 0.12 \\ 
		3 & 0.25 & 100 & 3 & 1 & \textbf{1.82} & 1.93 & 2.26 & 1.82 & 14.15 & 2.77 \\ 
		3 & 0.25 & 100 & 3 & 2 & 0.041 & \textbf{0.032} & 1.97 & 0.35 & 17.55 & 2 \\ 
		\hline 3 & 0.50 & 20 & 1 & 1 & 0.14 & 0.13 & 0.12 & 0.13 & \textbf{0.11} & 0.14 \\ 
		3 & 0.50 & 20 & 1 & 2 & 0.63 & 0.45 & 0.55 & 0.55 & \textbf{0.3} & 0.4 \\ 
		3 & 0.50 & 20 & 2 & 1 & 0.22 & 0.25 & 0.47 & \textbf{0.21} & 2.55 & 0.7 \\ 
		3 & 0.50 & 20 & 2 & 2 & 1.13 & 0.95 & \textbf{0.89} & 1.04 & 2.59 & 0.95 \\ 
		3 & 0.50 & 20 & 3 & 1 & 4.89 & 5.27 & 7.24 & \textbf{4.75} & 40.81 & 11.14 \\ 
		3 & 0.50 & 20 & 3 & 2 & 0.93 & \textbf{0.71} & 10.52 & 2.01 & 48.53 & 10.87 \\ 
		\hline 3 & 0.50 & 50 & 1 & 1 & 0.059 & 0.058 & \textbf{0.054} & 0.055 & 0.058 & 0.061 \\ 
		3 & 0.50 & 50 & 1 & 2 & 0.15 & 0.084 & 0.13 & 0.13 & \textbf{0.066} & 0.08 \\ 
		3 & 0.50 & 50 & 2 & 1 & 0.1 & \textbf{0.082} & 0.16 & 0.094 & 1.37 & 0.23 \\ 
		3 & 0.50 & 50 & 2 & 2 & 0.13 & \textbf{0.074} & 0.13 & 0.11 & 1.46 & 0.25 \\ 
		3 & 0.50 & 50 & 3 & 1 & 2.65 & 2.79 & 3.4 & \textbf{2.62} & 22.7 & 4.6 \\ 
		3 & 0.50 & 50 & 3 & 2 & 0.17 & \textbf{0.11} & 4.07 & 0.74 & 26.17 & 4.2 \\ 
		\hline 3 & 0.50 & 100 & 1 & 1 & 0.041 & 0.035 & 0.039 & 0.039 & 0.036 & \textbf{0.035} \\ 
		3 & 0.50 & 100 & 1 & 2 & 0.068 & 0.048 & 0.063 & 0.063 & \textbf{0.039} & 0.048 \\ 
		3 & 0.50 & 100 & 2 & 1 & 0.063 & \textbf{0.056} & 0.085 & 0.06 & 0.92 & 0.12 \\ 
		3 & 0.50 & 100 & 2 & 2 & 0.063 & \textbf{0.039} & 0.067 & 0.053 & 0.98 & 0.12 \\ 
		3 & 0.50 & 100 & 3 & 1 & 1.61 & 1.7 & 2.01 & \textbf{1.6} & 14.36 & 2.54 \\ 
		3 & 0.50 & 100 & 3 & 2 & 0.084 & \textbf{0.056} & 2.04 & 0.34 & 17.11 & 2.1 \\ 
		\hline
\end{longtable}

\normalsize

\end{document}